\documentclass[preprint,3p]{elsarticle}

\usepackage{tabularx}
\usepackage{colortbl}
\usepackage{amsthm}
\usepackage{amsmath,amssymb,enumerate,subcaption,epsfig,psfrag,hhline,srcltx}

\usepackage{color}
\usepackage{xcolor}
\usepackage{hyperref}
\hypersetup{breaklinks=true,colorlinks=true,citecolor=blue}
\usepackage{graphicx}          
\usepackage{multirow}
\usepackage{array}
\newcolumntype{P}[1]{>{\centering\arraybackslash}p{#1}}
\usepackage{enumitem}
\usepackage{makecell}

\usepackage[noabbrev,capitalise,nameinlink]{cleveref}

\biboptions{sort&compress}

\newdefinition{example}{Example}

\newdefinition{remark}{Remark}

\newtheorem{lemma}{Lemma}
\newtheorem{definition}{Definition}
\newtheorem{proposition}{Proposition}

\usepackage{algorithm}
\usepackage{algpseudocode}

\allowdisplaybreaks

\begin{document}
\begin{frontmatter}

\title{Safe-Support Q-Learning: Learning without Unsafe Exploration}

\author{Yeeun Lim\footnotemark[1]$^a$}\ead{yeeun141054@gmail.com}
\author{Narim Jeong\footnotemark[1]$^b$}\ead{nrjeong@kaist.ac.kr}
\author{Donghwan Lee$^b$}\ead{donghwan@kaist.ac.kr}

\address[a]{HD Korea Shipbuilding \& Offshore Engineering}
\address[b]{Department of Electrical Engineering, KAIST, Daejeon, South Korea}

\footnotetext[1]{Equal contribution}

\begin{abstract}
Ensuring safety during reinforcement learning (RL) training is critical in real-world applications where unsafe exploration can lead to devastating outcomes. While most safe RL methods mitigate risk through constraints or penalization, they still allow exploration of unsafe states during training. In this work, we adopt a stricter safety requirement that eliminates unsafe state visitation during training. To achieve this goal, we propose a Q-learning-based safe RL framework that leverages a behavior policy supported on a safe set. Under the assumption that the induced trajectories remain within the safe set, this policy enables sufficient exploration within the safe region without requiring near-optimality. We adopt a two-stage framework in which the Q-function and policy are trained separately. Specifically, we introduce a KL-regularized Bellman target that constrains the Q-function to remain close to the behavior policy. We then derive the policy induced from the trained Q-values and propose a parametric policy extraction method to approximate the optimal policy. Our approach provides a unified framework that can be adapted to different action spaces and types of behavior policies. Experimental results demonstrate that the proposed method achieves stable learning and well-calibrated value estimates and yields safer behavior with comparable or better performance than existing baselines.
\end{abstract}

\begin{keyword}
Safe RL, Q-learning, Behavior policy, Offline RL, KL regularization
\end{keyword}

\end{frontmatter}

\section{Introduction}
Reinforcement learning (RL)~\cite{sutton1998reinforcement} has become a powerful framework for autonomous decision-making. By optimizing cumulative rewards through the interaction with the environment, RL agents learn appropriate behaviors with minimal human intervention. However, as RL systems are increasingly deployed in high-risk domains such as autonomous driving and robotics, ensuring safety during the learning and implementation process has become a critical challenge.

The core difficulty of this issue arises from the fundamental nature of RL: optimal policies are often discovered through exploration. This process inevitably requires visiting a wide range of state-action pairs, including those that can lead to undesirable or unsafe outcomes. Traditional safe RL~\cite{gu2022review} attempts to mitigate this issue by incorporating constraints or risk-sensitive objectives into the learning process. For example, constrained policy optimization and actor-critic methods aim to maximize the expected return while satisfying safety constraints~\cite{yang2021wcsac,kim2024trust}, whereas conditional value-at-risk focuses on reducing the risk of catastrophic events in constrained settings~\cite{ying2022towards,kim2022trc,kim2022efficient}.

Despite these advances, most existing approaches still rely on interactions with unsafe states. For instance, safety violations are often controlled through penalties, expectation-based constraints, or bounded violation probabilities~\cite{xu2022constraints,liu2022constrained,peng2022model}, but they are not completely eliminated. Even methods such as safety critics~\cite{bharadhwaj2020conservative,yang2023safety} require exposure to unsafe transitions to estimate the risk of state-action pairs. Consequently, safety has been incorporated as a regularized objective rather than being strictly guaranteed in the majority of prior works.

In parallel, an alternative perspective on safety has been extensively studied in control theory. Proportional–integral–derivative (PID) control~\cite{johnson2005pid}, Lyapunov-based stabilization~\cite{sastry1999lyapunov}, and model predictive control (MPC)~\cite{holkar2010overview} are examples of classical approaches for designing controllers whose trajectories stay inside the certified safe or stable regions. Rather than penalizing unsafe outcomes, these methods ensure that the system never deviates from a predetermined set. While such approaches do not directly optimize long-term reward in the RL sense, they provide principled mechanisms for guaranteeing safety at the trajectory level. This motivates the consideration of whether RL can be constrained to operate within the support of safe states.

Based on this consideration, this study adopts a stricter safety requirement: the learning process must not produce trajectories that visit unsafe states. Unlike conventional constrained RL~\cite{altman2021constrained, chow2018lyapunov, srinivasan2020learning, yang2021wcsac}, we require that unsafe states be entirely excluded from the trajectory distribution. However, restricting exploration to a safe subset of states may limit the achievable performance. This raises a fundamental question: how can an agent sufficiently explore and improve its policy under such strict safety constraints?

To address this challenge, we propose a Q-learning-based safe RL algorithm with three key components. First, we employ a behavior policy supported on a safe set. This policy does not need to be optimal but is intentionally stochastic to enable sufficient exploration within the safe region. It can be constructed either by using well-performing controllers or by training from safe datasets. Second, we modify the Q-learning objective by introducing a KL divergence regularization term that aligns the Q-function with the behavior policy. This regularization constrains the Q-function to remain within the support of safe behaviors while enabling improvement in expected return. Third, we provide a principled method for deriving a policy from the trained Q-function with theoretical justification. Since exact policy extraction is often computationally intractable, we further propose a surrogate policy that approximates the optimal policy induced by the Q-function.

The main contributions of this work are summarized as follows:
\begin{enumerate}
    \item We introduce a behavior policy that enables RL without unsafe exploration in both online and offline settings, requiring only stochastic (not near-optimal) policies under the assumption that the induced trajectories remain within the safe set.

    \item We adopt a two-stage safe-support Q-learning framework in which the Q-function and policy are trained separately. We first propose a KL-regularized Bellman target that constrains the Q-function to remain close to the behavior policy. We then derive the optimal policy and show that the safe Bellman operator admits a unique fixed point. In addition, we formulate a KL-regularized objective that balances reward and safety and propose a parametric policy extraction method for continuous actions.
    
    \item The proposed method provides a unified framework that can be adapted to different action spaces and types of behavior policies.

    \item Experiments show that the proposed method achieves stable learning and well-calibrated value estimates and yields safer behavior with comparable or better performance than baselines.
\end{enumerate}

\section{Related Work}
\textbf{Safe RL.}
Prior research in safe RL has primarily focused on incorporating safety constraints into the optimization objective. One example of this research is the constrained Markov decision process~\cite{altman2021constrained,wachi2020safe}, where the agent seeks to maximize cumulative rewards while satisfying explicit safety constraints. Risk-sensitive objectives such as conditional value-at-risk have also been adopted to control tail risks~\cite{ying2022towards,kim2022trc,kim2022efficient}. Chance-constrained formulations explicitly bound violation probabilities~\cite{chow2018risk} but still allow unsafe states with limited probability. Entropy regularization is often incorporated to improve exploration and stability in constrained variants such as WCSAC~\cite{yang2021wcsac}. Another line of work employs reward shaping~\cite{laud2004theory}, penalizing unsafe states or actions without explicit constraints. In value-based settings, safe Q-learning methods leverage safety through cost critics or safety critics that estimate the likelihood of unsafe transitions and discourage risky behaviors during learning~\cite{xu2022constraints,srinivasan2020learning}. Despite these advances, existing safe RL methods primarily aim to reduce or bound the risk associated with unsafe states, typically in expectation or probabilistic terms. As a result, unsafe states may still be visited during training, even if their frequency or impact is controlled. In contrast, our work imposes a stricter requirement: under the safe set assumption, unsafe states are excluded from the learning process. This enforces safety at the level of trajectory support rather than through risk penalization.

\textbf{Control-based safety.}
Control theory has provided principled tools for guaranteeing stability and safety. Classical controllers such as PID control~\cite{johnson2005pid} achieve robust stabilization via feedback. Lyapunov theory~\cite{sastry1999lyapunov} provides a systematic framework to certify stability by ensuring monotonic decrease of a Lyapunov function. MPC~\cite{holkar2010overview} further enables constraint-aware control by optimizing control actions over a finite horizon. These control-based approaches have also been combined with RL to improve safety. Lyapunov-based RL methods leverage Lyapunov functions as safety certificates or constraints during policy optimization~\cite{chow2018lyapunov,chow2019lyapunov}. MPC can also be used as a safety layer or a backup controller that filters or overrides unsafe actions~\cite{li2020robust,zanon2020safe}. Hybrid schemes that integrate conventional controllers with RL have also been explored to exploit the reliability of classical feedback control while benefiting from data-driven improvement~\cite{yoo2020hybrid}.

A key distinction between control-based safety and RL lies in their objectives. Control methods typically prioritize stability and constraint satisfaction, often without explicitly optimizing long-term reward. In contrast, RL focuses on maximizing cumulative reward through exploration, which may induce unsafe behaviors. Our work draws inspiration from the control-theoretic view of safety as an invariance property. Rather than penalizing or bounding violations, we enforce safety at the level of trajectory support by restricting the learning process to trajectories that remain within a safe set. To this end, we employ a behavior policy supported on the safe set, which can be constructed using control-theoretic principles or safe datasets. Under the safe set assumption, it generates trajectories within the safe region and enables reward-driven policy improvement.

\textbf{Behavior Regularization and Offline RL.}
Off-policy and offline RL methods~\cite{levine2020offline,fujimoto2019off,munos2016safe} often assume limited divergence between the behavior and target policies to ensure stable value estimation and mitigate extrapolation error. This proximity is commonly encouraged through KL divergence regularization~\cite{schulman2017proximal}, entropy-based objectives~\cite{haarnoja2018soft}, or conservative value estimation. In offline RL, where policies are optimized from fixed datasets without further interaction, such regularization is essential to address distributional shift between the behavior policy and the learned policy.

To reduce extrapolation error and overestimation on out-of-distribution (OOD) actions, several behavior-regularized and conservative approaches have been proposed. For example, conservative Q-learning (CQL)~\cite{kumar2020conservative} penalizes Q-values for actions outside the dataset distribution to prevent overestimation. Batch-Constrained Q-learning (BCQ)~\cite{fujimoto2019off} constrains action selection to those sampled from a learned behavior model, effectively limiting deviation from the dataset support. BEAR~\cite{kumar2019stabilizing} enforces closeness between the learned policy and the behavior policy using a divergence constraint to control extrapolation error. TD3+BC~\cite{fujimoto2021minimalist} introduces a behavior cloning regularization term to stabilize policy updates, while Implicit Q-Learning (IQL)~\cite{kostrikov2021offline} reduces distributional shift by implicitly restricting updates to high-advantage in-dataset actions. These methods primarily aim to improve statistical reliability under limited data coverage. However, such constraints do not imply safety, as standard offline RL neither assumes that unsafe states are excluded from the dataset nor guarantees that the learned policy remains within a safe region. In contrast, our framework assigns a different role to the behavior policy: it is assumed that the induced trajectory remains within the safe set. Accordingly, regularization is not introduced to stabilize training or reduce estimation bias but to ensure that policy updates remain within the safe set. As a result, the behavior policy acts as a structural safety constraint, restricting the learning process to trajectories within the safe set rather than serving as a stabilizing prior for value estimation.

\section{Preliminaries}
\subsection{Markov Decision Process}
We consider an infinite-horizon discounted Markov decision process (MDP). An MDP is defined by a tuple $({\cal S}, {\cal A}, P, r, \gamma)$, where ${\cal S}$ and ${\cal A}$ denote the state space and action space, respectively, $P(\cdot|s,a)$ is the transition dynamics, $r(s,a,s') \in \mathbb{R}$ is the reward function, and $\gamma \in [0,1)$ is the discount factor. We take into account both discrete and continuous spaces. In the discrete setting, ${\cal S}$ and ${\cal A}$ are finite sets, i.e., ${\cal S} = \{1,2,\ldots,|{\cal S}|\}$ and ${\cal A} = \{1,2,\ldots,|{\cal A}|\}$. In the continuous setting, ${\cal S} \subset \mathbb{R}^{d'}$ and ${\cal A} \subset \mathbb{R}^{d}$ are measurable spaces.

At each time step $k \in \{0,1,2,\ldots\}$, the agent observes a state $s_k \in {\cal S}$, selects an action $a_k \in {\cal A}$, transitions to the next state $s_{k+1} \sim P(\cdot|s_k,a_k)$, and receives a reward $r_{k+1} := r(s_k,a_k,s_{k+1})$. For simplicity, we assume that the reward function is deterministic, although all results extend to stochastic rewards. Then, a policy $\pi$ maps a state to a distribution over actions. In the discrete case, $\pi(\cdot|s) \in \Delta_{|{\cal A}|}$, where $\Delta_{|{\cal A}|}$ is the set of all probability distributions over the action space ${\cal A}$. In the continuous case, $\pi(\cdot|s)$ denotes a probability density over ${\cal A}$. In this setting, the objective of the MDP is to find an optimal policy $\pi^*$ that maximizes the expected cumulative discounted reward:
\[ \pi^* := \arg\max_{\pi \in \Theta} 
\mathbb{E}\left[ \left. \sum_{k=0}^{\infty} \gamma^k r_{k+1} \right| \pi \right], \]
where $\Theta$ denotes the set of all admissible policies, and the expectation is taken over trajectories $(s_0,a_0,s_1,a_1,\ldots)$ induced by the policy $\pi$ and the transition dynamics. Moreover, the Q-function associated with a policy $\pi$ is defined as
\[ Q^\pi(s,a) := \mathbb{E}\left[ \left. \sum_{k=0}^{\infty} \gamma^k r_{k+1} \right| s_0 = s, a_0 = a, \pi \right], \quad (s,a) \in {\cal S} \times {\cal A}. \]
Then, the optimal Q-function can be expressed as $Q^*(s,a) := Q^{\pi^*}(s,a)$.

Throughout the paper, we assume that the MDP satisfies standard regularity conditions (e.g., ergodicity or sufficient exploration) so that the relevant expectations and value functions are well-defined.

\subsection{Review of Q-Learning}
Before introducing our method, we briefly review the standard tabular Q-learning~\cite{watkins1992q}. 
Q-learning is a fundamental RL algorithm that iteratively updates the Q-function using the following update rule:
\[ Q(s_k,a_k) \leftarrow Q(s_k,a_k) + \alpha \left(y_k - Q(s_k,a_k)\right), \]
where $\alpha > 0$ is the learning rate. The target value $y_k$ is defined as
\[ y_k = r_{k+1} + {\bf 1}(s_{k+1}) \gamma \max_{a' \in {\cal A}} Q(s_{k+1},a'), \]
where ${\bf 1}(s_{k+1})$ is an indicator function that equals 0 if $s_{k+1}$ is a terminal state and 1 otherwise. Under the appropriate conditions, this iterative update converges to the optimal Q-function $Q^*$, from which an optimal policy can be obtained via the greedy rule $\pi^*(s) = \arg\max_{a \in {\cal A}} Q^*(s,a)$.

Despite its simplicity, standard Q-learning has several limitations. The usage of the max operator can lead to overestimation of Q-values, especially in large or continuous action spaces. Moreover, the greedy policy induced by Q-learning lacks inherent stochasticity, which may hinder effective exploration. To address these issues, soft Q-learning~\cite{haarnoja2017reinforcement,jeong2025unified,lee2024unified} introduces entropy regularization into the objective:
\[ J^\pi := \mathbb{E}\left[ \left. \sum_{k=0}^{\infty} \gamma^k \left( r_{k+1} + \lambda H(\pi(\cdot | s_k)) \right) \right| \pi \right], \]
where $H(\pi(\cdot|s))$ denotes the entropy of the policy and $\lambda > 0$ is a temperature parameter. This formulation encourages stochastic policies and improves the exploration-exploitation tradeoff. In the tabular setting, soft Q-learning modifies the target value as
\begin{equation} \label{eq:SQL_target}
    y_k^{\rm soft} = r_{k+1} + {\bf 1}(s_{k+1}) \gamma \max_{\pi \in \Delta_{|{\cal A}|}} \left\{ \sum_{a \in {\cal A}} \pi(a|s_{k+1}) Q_{\rm soft}(s_{k+1},a) + \lambda H(\pi(\cdot|s_{k+1})) \right\},
\end{equation}
which admits the following closed-form expression:
\begin{equation} \label{eq:SQL_target_closed_form}
    y_k^{\rm soft} = r_{k+1} + {\bf 1}(s_{k+1}) \gamma \lambda \ln \left( \sum_{a \in {\cal A}} \exp \left( \frac{Q_{\rm soft}(s_{k+1},a)}{\lambda} \right) \right).
\end{equation}
Compared to standard Q-learning, the max operator is replaced by a log-sum-exp function, which can be viewed as a smooth approximation of the maximum. Then, the corresponding optimal policy takes the softmax form:
\[ \pi_{\rm soft}^*(a|s) = \frac{\exp(Q_{\rm soft}^*(s,a)/\lambda)}{\sum_{a' \in {\cal A}} \exp(Q_{\rm soft}^*(s,a')/\lambda)}. \]
Note that the entropy regularization in soft Q-learning can be interpreted as a KL divergence to a uniform distribution, which encourages full support over the action space. In contrast, our method replaces this uniform prior with a behavior policy supported on the safe set so that both value estimation and policy learning are restricted to the safe set. Further details are provided in the next section.

\section{Proposed Approach}
In this paper, we aim to develop a Q-learning variant that learns near-optimal solution without visiting unsafe states. Let $S_{\text{safe}} \subset S$ denote the safe set that satisfy predefined safety constraints. Then, we propose a unified safe Q-learning framework in which all components of learning are restricted to the safe set $S_{\text{safe}}$, as follows:
\begin{itemize}
    \item We first construct a behavior policy supported on $S_{\text{safe}}$.

    \item We develop a safe Bellman target that incorporates this behavior policy, enabling Q-function learning without leaving it.

    \item We then construct the corresponding policy induced by the trained Q-function, which ensures that the resulting policy remains consistent with the behavior policy through regularization.
\end{itemize}
Note that while the overall principle is shared across all settings, the concrete implementation depends on the action space and the type of the behavior policy.

\subsection{Behavior policy}
We propose methods for constructing a stochastic behavior policy $\pi_b$, which is assumed to be supported on the safe set $S_{\text{safe}}$. This assumption is primarily conceptual, as ensuring that $\pi_b$ remains strictly within $S_{\text{safe}}$ may be trivial in some environments but challenging in others. We consider two practical instantiations of such behavior policies: one based on hand-crafted control strategies and the other based on pre-collected datasets. In the remainder of this paper, we refer to these approaches as \textit{HC safe} (hand-crafted) and \textit{DS safe} (dataset-based), respectively.

Unlike demonstration-based methods~\cite{thananjeyan2020safety, turchetta2020safe}, the behavior policy in our framework does not need to be near-optimal; instead, it must be stochastic to enable sufficient exploration within $S_{\text{safe}}$. Moreover, while conventional RL methods explore risky regions during training, our approach restricts learning to trajectories that remain within $S_{\text{safe}}$. Note that, in practice, our formulation allow a small probability of visiting unsafe states while still aiming to minimize them.
    
\subsubsection{Hand-Crafted behavior policy supported on the safe set (HC safe)}
We construct a behavior policy by leveraging well-performing rules or controllers and converting them into stochastic policies. In simple environments with small state and action spaces, unsafe actions can be manually specified for each state. However, in more complex environments, designing the rules manually becomes impractical. In such cases, we instead employ a controller and train a neural network to imitate its behavior. This approach allows us to obtain a stochastic behavior policy and enables adequate exploration. Under the safe set assumption, the resulting trajectories remain within $S_{\text{safe}}$. As a result, the policy can be used for online RL without leaving $S_{\text{safe}}$.

In continuous action spaces, we further introduce two neural network parameterizations of the behavior policy to balance simplicity and expressiveness. The first is \textit{mean-noise behavior policy}~\cite{Lillicrap2015ContinuousCW}, which outputs a deterministic action and injects Gaussian noise during execution:
\begin{equation} \label{mean-noise behavior policy}
    a = \text{clip} \left(\mu_b(s) + w_M, a_{\min}, a_{\max} \right),
\end{equation}
where $\mu_b:{\cal S} \to {\cal A}$ is a deterministic mapping implemented by a neural network and $w_M \sim {\cal N}(0,\sigma_M^2I)$ is a Gaussian noise with $\sigma _M>0$. Here, $\text{clip}(\cdot)$ projects its input onto $[a_{\min}, a_{\max}]$ to enforce valid actions. The second is \textit{distributional behavior policy}~\cite{haarnoja2018soft}, which directly models a stochastic action distribution conditioned on the state:
\begin{equation} \label{distributional behavior policy}
    z \sim {\cal N}(\mu_b(s),\sigma_b^2(s)I), \quad a = \text{tanh(z)},
\end{equation}
where $\mu_b(s)$ and $\sigma_b(s)$ are state-dependent outputs of the neural network. $\tanh(z)$ maps $z$ to $(-1,1)$ to ensure that the resulting action lies within the valid range.

\subsubsection{Dataset-based behavior policy supported on the safe set (DS safe)}
Since HC safe behavior policies are difficult to design in large or continuous action spaces, we introduce an additional approach that leverages an existing dataset collected from the environment. Let the safe dataset $D_\text{safe}$ contain only transitions that do not visit unsafe states. Then, we construct a behavior policy $\pi_b$ by maximizing the log-likelihood of actions in this dataset as stated in~\cref{alg1} of the Appendix. This objective can be efficiently optimized using stochastic gradient ascent. Here, we adopt both the mean-noise behavior policy~\eqref{mean-noise behavior policy} and distributional behavior policy~\eqref{distributional behavior policy} as neural network parameterizations in continuous action spaces.

It should be noted that the resulting policy does not strictly guarantee safety during rollouts, as log-likelihood maximization may assign non-zero probability to states outside $D_\text{safe}$. Therefore, we use it only as a support constraint during the policy update. The policy is trained exclusively on samples from $D_\text{safe}$, which results in an offline RL setting. Further details are provided in the next subsection.

\subsection{Discrete Action Safe-Support Q-Learning}
In this paper, we adopt a two-stage safe-support Q-learning framework in which the Q-function and policy are trained separately. Unlike methods such as SAC~\cite{haarnoja2018soft}, which optimize the Q-function and policy jointly, we fully train the Q-function before proceeding to policy learning. After that, the Q-network parameters are fixed, and only the policy parameters are updated.

We first present a safe Bellman target that regularizes Q-learning toward a behavior policy. Then, we theoretically characterize the resulting solution by showing that the trained Q-function induces a well-defined optimal policy. In particular, we show that the resulting policy can be expressed as a behavior policy-weighted softmax over the trained Q-values in discrete action spaces.

\subsubsection{Target Value Construction}
For Q-learning, limiting exploration to safe states poses a significant challenge. Since the agent never observes unsafe transitions, there are no explicit signals in the Q-function that differentiate between safe and risky behaviors. As a result, unseen or OOD state–action pairs can be incorrectly treated as safe, leading to overestimation of their values. To address this issue, we regularize the Bellman target using the behavior policy supported on a safe set $S_{\text{safe}}$. We introduce a KL divergence term that encourages the Q-function to align with the behavior policy. This prevents the Q-function from assigning high values to actions that are unlikely under the safe data distribution. Notably, this formulation can be interpreted as replacing the entropy term in soft Q-learning~\eqref{eq:SQL_target} with a KL regularization toward the behavior policy.

In this paper, we define the following safe target value:
\begin{equation} \label{eqn:idea_KL}
    y_k^{\rm safe} = r_{k + 1} + {\bf 1}(s_{k + 1})\gamma \max _{\pi \in \Delta_{|{\cal A}|}}\left\{ \sum_{a \in {\cal A}} \pi(a|s_{k + 1})Q(s_{k + 1},a) - \lambda D_{\rm KL} \left( \pi( \cdot|s_{k + 1}) \| \tilde{\pi}_b( \cdot|s_{k + 1}) \right) \right\},
\end{equation}
where $\Delta_{|{\cal A}|}$ denotes the set of all probability distributions over the action space ${\cal A}$, $D_{\rm KL}(\cdot \| \cdot)$ is the KL divergence, $\pi$ is a distribution over actions, and $\lambda>0$ is a weight on the KL divergence term. The smoothed behavior policy is defined as $\tilde{\pi}_b(a|s) = (1-\eta) \pi_b(a|s) + \eta/|{\cal A}|$ with $\eta \in (0,1)$, where $\pi_b$ is a behavior policy supported on $S_{\text{safe}}$. Here, the smoothed policy $\tilde{\pi}_b$ is introduced to ensure strictly positive probabilities, which makes the KL divergence well-defined. Note that~\eqref{eqn:idea_KL} mitigates the influence of OOD actions by penalizing deviations from the behavior policy.

\textbf{Remark.} In practice, this smoothing does not affect safety, as data are collected using the behavior policy $\pi_b$ that remains within $S_{\text{safe}}$. The smoothed policy $\tilde{\pi}_b$ is used in the target computation of the algorithm and does not directly interact with the environment.

As in the entropy-regularized formulation in~\eqref{eq:SQL_target} and~\eqref{eq:SQL_target_closed_form},~\eqref{eqn:idea_KL} can be rewritten in closed form by solving the inner maximization over policies, as shown in the following proposition.
\begin{proposition} \label{prop1}
The target value in~\eqref{eqn:idea_KL} can be equivalently expressed as
\begin{equation} \label{eqn:proposed_y}
    y_k^{\rm safe} = r_{k+1} + {\bf 1}(s_{k+1})\gamma \lambda \ln \left( \sum_{a \in {\cal A}} \tilde{\pi}_b(a|s_{k+1}) \exp\left( \frac{Q(s_{k+1},a)}{\lambda} \right) \right).
\end{equation}
\end{proposition}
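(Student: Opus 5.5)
The plan is to solve the inner maximization in \eqref{eqn:idea_KL} at a fixed next state $s:=s_{k+1}$; the reward and discount terms outside the max are untouched. Write $q_a := Q(s,a)$ and $p_a := \tilde{\pi}_b(a|s)$, and note $p_a \geq \eta/|{\cal A}| > 0$ for every $a$. The inner objective is then
\[ F(\pi) = \sum_{a} \pi_a q_a - \lambda \sum_a \pi_a \ln\frac{\pi_a}{p_a}, \]
to be maximized over the simplex $\Delta_{|{\cal A}|}$. I will show that $\max_\pi F(\pi) = \lambda \ln \sum_a p_a e^{q_a/\lambda}$, which gives \eqref{eqn:proposed_y}.

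The cleanest route is the Gibbs variational identity, which avoids Lagrange multipliers. Define the tilted distribution
\[ \pi^\star_a := \frac{p_a e^{q_a/\lambda}}{Z}, \qquad Z := \sum_{a'} p_{a'} e^{q_{a'}/\lambda}. \]
It is a valid element of $\Delta_{|{\cal A}|}$ with full support, since every $p_a>0$. For any $\pi$ in the simplex, the identity $q_a = \lambda \ln(\pi^\star_a Z / p_a)$ lets me substitute into $F$, which gives
\[ F(\pi) = \lambda \sum_a \pi_a \ln\frac{\pi^\star_a Z}{p_a} - \lambda \sum_a \pi_a \ln\frac{\pi_a}{p_a} = \lambda \ln Z - \lambda D_{\rm KL}(\pi \| \pi^\star). \]
Gibbs' inequality gives $D_{\rm KL}(\pi\|\pi^\star) \geq 0$, with equality iff $\pi=\pi^\star$. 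Therefore $\max_\pi F(\pi) = \lambda \ln Z$, and the maximum is attained uniquely at $\pi^\star$.

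The only delicate points are boundary and convention issues.
\begin{itemize}
\item Entries with $\pi_a = 0$ use the convention $0\ln 0 = 0$. This is consistent on both sides of the identity because $\pi^\star_a>0$.
\item The smoothing $\eta>0$ guarantees that $p_a>0$, so the KL divergence to $\tilde{\pi}_b$ is finite for every $\pi$. This is the one place where the smoothed behavior policy matters, and I expect it to be the only real obstacle.
\end{itemize}

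For terminal next states the indicator vanishes and both expressions reduce to $r_{k+1}$. As a by-product, the maximizer has the form $\pi^\star(a|s)\propto \tilde{\pi}_b(a|s)\exp(Q(s,a)/\lambda)$, which is the behavior-weighted softmax the paper anticipates.
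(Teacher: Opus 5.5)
Your proof is correct, and it takes a different route from the paper. The paper first proves a separate lemma that computes the convex conjugate of $x\mapsto\lambda D_{\rm KL}(x\|q)$ on the simplex. It writes the Lagrangian with multipliers for the normalization and nonnegativity constraints, solves the stationarity condition for $x_i$, and uses complementary slackness to set the inequality multipliers to zero. It then fixes the normalization multiplier and substitutes the resulting distribution back into $x^Ty-\lambda D_{\rm KL}(x\|q)$, simplifying to $\lambda\ln\sum_i q_i e^{y_i/\lambda}$. Proposition~\ref{prop1} is then a relabeling of variables. You instead posit the tilted distribution $\pi^\star$, prove the exact identity $F(\pi)=\lambda\ln Z-\lambda D_{\rm KL}(\pi\|\pi^\star)$, and finish with Gibbs' inequality. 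The paper's route is constructive, since it derives the form of the maximizer rather than guessing it. It also fits the paper's conjugate-function framing, which is reused for the argmax in Proposition~\ref{prop2}. Your route gives the optimal value, the maximizer and its uniqueness in one step, with no back-substitution algebra. It also avoids two points the KKT argument leaves implicit. First, the KKT point is a global maximum only because $F$ is concave. Second, the stationarity equation contains $\ln x_i$ and so only makes sense at interior points; positivity of the optimizer really comes from the infinite slope of $-x\ln x$ at $0$, not from the exponential formula itself. As a bonus, your identity measures the suboptimality of any other $\pi$ exactly, as $\lambda D_{\rm KL}(\pi\|\pi^\star)$.
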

The detailed proof is provided in the Appendix~\ref{prop1_proof}. Using the proposed target value, we obtain the following tabular safe-support Q-learning update presented in~\cref{alg3} of the Appendix. For the offline setting, the only modification is that transitions are sampled from the safe dataset instead of interacting with the environment.

Note that the proposed method is closely related to soft Q-learning. If $\tilde{\pi}_b$ is chosen as the uniform distribution over ${\cal A}$, then the KL divergence term reduces to an entropy term as ${D_\text{KL}}(\pi(\cdot|s_{k+1})\|\tilde{\pi}_b(\cdot|s_{k + 1})) = -H(\pi(\cdot |s_{k+1})) + \ln (|{\cal A}|)$, which corresponds to the entropy term plus a constant. Substituting this relation into~\eqref{eqn:idea_KL}, the objective reduces to the entropy-regularized form used in soft Q-learning~\eqref{eq:SQL_target} with an additional constant term. This additional constant only shifts the resulting optimal solution toward all one vector direction, the actual learned policy is unchanged. Therefore, soft Q-learning can be interpreted as a special case of the proposed method when $\tilde{\pi}_b$ is uniform.

The proposed safe Q-learning can be extended to the deep Q-network (DQN)~\cite{mnih2013playing} framework with minimal modifications. In standard DQN, two Q-networks are utilized: the online Q-network $Q_\theta$ parameterized by $\theta$, and the target Q-network $Q_{\theta'}$ parameterized by $\theta'$. The online Q-network is trained by minimizing the following mean-squared Bellman error:
\begin{equation} \label{eq:loss-deep-safe-Q}
    L(\theta;B) := \frac{1}{2|B|} \sum_{(s,a,r,s') \in B} \left( y^{\rm DQN} - Q_\theta(s,a) \right)^2,
\end{equation}
where $B$ is a mini-batch uniformly sampled from the replay buffer $D$, and $y^{\rm DQN}$ is the target value given by $y^{\rm DQN} = r + {\bf 1}(s') \gamma \max_{a \in {\cal A}} Q_{\theta'}(s',a)$. The parameters $\theta$ are updated via gradient descent, $\theta \leftarrow \theta - \alpha \nabla_\theta L(\theta;B)$, while the target parameters $\theta'$ are held fixed and updated periodically using the online parameters. To incorporate safety into this framework, we replace the standard Bellman target $y^{\rm DQN}$ with the proposed safe target $y^{\rm safe, DQN}$:
\begin{equation} \label{eq:target-deep-safe-Q}
    y^{\rm safe, DQN} = r + {\bf 1}(s')\gamma \lambda \ln \left( \sum_{a \in {\cal A}} \tilde{\pi}_b(a|s') \exp \left( \frac{Q_{\theta'}(s',a)}{\lambda} \right) \right).
\end{equation}
This modification biases the target toward actions that are likely under the behavior policy supported on $S_{\text{safe}}$. The overall procedure is summarized in~\cref{alg2} of the Appendix, which is presented for the online setting. For the offline setting, we replace environment interaction with mini-batch sampling from a safe dataset. We first train the DS behavior policy as in~\cref{alg1} of the Appendix, and then update the Q-function following~\cref{alg4} of the Appendix.

\subsubsection{Policy Extraction from Trained Q-functions}
In this subsection, we characterize the policy induced by the trained Q-function. Similar to soft Q-learning, the modified Bellman operator admits a well-defined fixed point, from which the corresponding optimal policy can be derived.
\begin{proposition} \label{prop2}
Consider the Bellman equation $Q = T_\lambda Q$, where the operator $T_\lambda: {\mathbb R}^{|{\cal S}||{\cal A}|} \to {\mathbb R}^{|{\cal S}||{\cal A}|}$ is defined as
\[ (T_\lambda Q)(s,a) := {\mathbb E} \left[ \left. r_{k+1} + \gamma \lambda \ln \left( \sum_{a' \in {\cal A}} {\tilde{\pi}_b(a'|s_{k+1}) \exp \left( \frac{Q(s_{k+1},a')}{\lambda} \right)} \right) \right| s_k=s,a_k=a \right]. \]
Then, $T_\lambda$ admits a unique solution $Q_{\lambda}^*$. Moreover, the corresponding optimal policy is given by 
\[ \pi_\lambda^*(a|s) = \frac{\tilde{\pi}_b(a|s) \exp \left( Q_\lambda^*(s,a) / \lambda \right)}{\sum_{a' \in {\cal A}} \tilde{\pi}_b(a'|s) \exp \left(Q_\lambda^*(s,a') / \lambda \right)}. \]
\end{proposition}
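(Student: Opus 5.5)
We prove the first claim by showing that $T_\lambda$ is a $\gamma$-contraction in the sup-norm on ${\mathbb R}^{|{\cal S}||{\cal A}|}$ and then invoking the Banach fixed-point theorem. The key object is the weighted log-sum-exp map $f_s(q) := \lambda \ln \left( \sum_{a'} \tilde{\pi}_b(a'|s) \exp(q_{a'}/\lambda) \right)$ for $q \in {\mathbb R}^{|{\cal A}|}$. Because $\tilde{\pi}_b(\cdot|s)$ is a strictly positive probability vector, $f_s$ is well defined and has two properties.

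\emph{Monotonicity.} If $q \le q'$ componentwise, then $f_s(q) \le f_s(q')$.

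\emph{Translation.} For any constant $c$, $f_s(q + c\mathbf{1}) = f_s(q) + c$, since the factor $e^{c/\lambda}$ pulls out of the sum.

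Given $Q, Q'$, set $c = \|Q - Q'\|_\infty$. Then $Q(s',\cdot) \le Q'(s',\cdot) + c\mathbf{1}$, so monotonicity and translation give $f_{s'}(Q(s',\cdot)) \le f_{s'}(Q'(s',\cdot)) + c$. The same argument with $Q$ and $Q'$ swapped gives the reverse inequality, so $|f_{s'}(Q(s',\cdot)) - f_{s'}(Q'(s',\cdot))| \le c$ for every $s'$. The reward terms cancel in $T_\lambda Q - T_\lambda Q'$, and the conditional expectation over $s_{k+1}$ preserves the bound. Hence $\|T_\lambda Q - T_\lambda Q'\|_\infty \le \gamma \|Q - Q'\|_\infty$. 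Since $\gamma < 1$ and ${\mathbb R}^{|{\cal S}||{\cal A}|}$ is complete, Banach's theorem yields a unique fixed point $Q_\lambda^*$. Alternatively, one can use the mean value theorem: the gradient of $f_s$ is a probability vector, so $f_s$ is $1$-Lipschitz in $\|\cdot\|_\infty$.

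For the policy characterization, we reuse the variational identity behind \cref{prop1}. For each $s$,
\[ \max_{\pi \in \Delta_{|{\cal A}|}} \left\{ \sum_a \pi(a) Q_\lambda^*(s,a) - \lambda D_{\rm KL}(\pi \| \tilde{\pi}_b(\cdot|s)) \right\} = f_s(Q_\lambda^*(s,\cdot)). \]
To identify the maximizer, write $\pi^\dagger(a) \propto \tilde{\pi}_b(a|s)\exp(Q_\lambda^*(s,a)/\lambda)$, which is the claimed $\pi_\lambda^*(\cdot|s)$. Then for any $\pi \in \Delta_{|{\cal A}|}$,
\[ \sum_a \pi(a) Q_\lambda^*(s,a) - \lambda D_{\rm KL}(\pi \| \tilde{\pi}_b) = f_s(Q_\lambda^*(s,\cdot)) - \lambda D_{\rm KL}(\pi \| \pi^\dagger). \]
This is verified by expanding $\ln \pi^\dagger$. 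Because $D_{\rm KL} \ge 0$ with equality iff $\pi = \pi^\dagger$, the maximizer is unique and equals $\pi_\lambda^*$.

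It remains to interpret $\pi_\lambda^*$ as optimal for the KL-regularized objective. The fixed-point equation $Q_\lambda^* = T_\lambda Q_\lambda^*$ is exactly the regularized Bellman optimality equation, with the inner maximum attained by $\pi_\lambda^*$. So $Q_\lambda^*$ equals the regularized Q-function of $\pi_\lambda^*$. Moreover, for any other policy $\pi$, the regularized policy-evaluation operator $T^\pi$ satisfies $T^\pi Q_\lambda^* \le T_\lambda Q_\lambda^* = Q_\lambda^*$. Iterating this inequality, using monotonicity and contraction of $T^\pi$, gives $Q^\pi_\lambda \le Q_\lambda^*$. This establishes optimality.

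The main obstacle is this last step: the statement does not define the regularized value of a general policy precisely. I would first state that definition explicitly, as expected discounted reward minus $\gamma\lambda$ times the KL penalty at subsequent states. The standard policy-improvement comparison argument then applies.
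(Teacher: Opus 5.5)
Your proof is correct. Its skeleton matches the paper's: contraction plus Banach, then the maximizer of the inner regularized problem at the fixed point. The individual steps, however, are argued differently.

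For the contraction, the paper works through the variational form of Proposition~1. It writes $T_\lambda Q_1 - T_\lambda Q_2$ as a difference of two maxima over $\pi$ and bounds this by the maximum of the difference, so the KL terms cancel. It then reduces $\max_\pi \sum_a \pi(a)\,(Q_1-Q_2)(s',a)$ to $\max_a (Q_1-Q_2)(s',a)$. You instead work directly with the closed-form weighted log-sum-exp, as the operator is stated, via monotonicity and translation-equivariance.

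The paper's route never uses the specific form of the aggregator, so it would apply verbatim with any regularizer in place of the KL term. Yours needs nothing from Proposition~1. It also treats both signs of the difference and the per-$s'$ structure of the expectation explicitly. The paper's chain handles these only loosely: absolute values are dropped, and the maximum is placed outside the expectation over $s'$.

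For the policy, the paper defers to the KKT computation of Lemma~1. That computation produces a stationary point and leaves global optimality to an unstated concavity argument. Your identity $\sum_a \pi(a) Q_\lambda^*(s,a) - \lambda D_{\rm KL}(\pi \| \tilde{\pi}_b(\cdot|s)) = f_s(Q_\lambda^*(s,\cdot)) - \lambda D_{\rm KL}(\pi \| \pi^\dagger)$ gives the optimal value, the global maximizer, and its uniqueness in one step.

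Finally, the paper simply takes ``the corresponding optimal policy'' to mean this inner maximizer and stops. Your comparison argument ($T^\pi Q_\lambda^* \le Q_\lambda^*$, then iterate) actually proves that $\pi_\lambda^*$ is optimal for the KL-regularized problem, which the paper only asserts informally through $J_\lambda^\pi$.

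To match $J_\lambda^\pi$ exactly as the paper writes it (KL penalty already at $s_0$, factor $1-\gamma$), add one line. Define $V_\lambda^\pi(s) := \sum_a \pi(a|s) Q_\lambda^\pi(s,a) - \lambda D_{\rm KL}(\pi(\cdot|s)\|\tilde{\pi}_b(\cdot|s))$, so that $J_\lambda^\pi = (1-\gamma)\,\mathbb{E}[V_\lambda^\pi(s_0)]$. Then $Q_\lambda^\pi \le Q_\lambda^*$ together with your identity gives $V_\lambda^\pi(s) \le f_s(Q_\lambda^*(s,\cdot)) = V_\lambda^{\pi_\lambda^*}(s)$ for every $s$.
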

The proof is given in the Appendix~\ref{prop2_proof}. This proof follows from the contraction property of $T_\lambda$, extending the standard arguments used in entropy-regularized RL~\cite{dai2018sbeed,fox2016taming}. Under the standard assumptions, convergence of the proposed Q-learning algorithm can also be established using techniques similar to those in~\cite{jeong2025unified,lee2024unified}; however, we omit the proof in this paper.

This optimal policy $\pi_{\lambda}^*$ admits an interpretation analogous to soft Q-learning. Specifically, it can be viewed as the solution to the following KL-regularized control problem:
\[ \pi_\lambda^* := \arg\max_\pi J_\lambda^\pi, \quad J_\lambda^\pi : = (1-\gamma) {\mathbb E} \left[ \left. \sum_{k = 0}^\infty \gamma^k \left( r_{k+1} - \lambda D_{\rm KL} \left( \pi(\cdot|s_k) \| \tilde{\pi}_b( \cdot|s_k) \right) \right) \right| \pi \right]. \]
This objective can be equivalently written as
\[ J_\lambda^\pi = \sum_{s \in {\cal S}} \sum_{a \in {\cal A}} \rho^\pi(s) \pi(a|s) R(s,a) - \lambda \sum_{s \in {\cal S}} \rho^\pi (s) D_{\rm KL}(\pi(\cdot|s) \| \tilde{\pi}_b(\cdot|s)), \]
where $\rho^\pi(s) := (1-\gamma) \sum_{k=0}^\infty \gamma^k {\mathbb P}(s_k=s | \pi)$ denotes the discounted state visitation distribution, ${\mathbb P}(s_k=s | \pi)$ denotes the probability of visiting state $s$ at $k$ under policy $\pi$, and $R(s,a) := {\mathbb E} \left[ r_{k+1} | s_k=s, a_k=a \right]$ denotes the expected immediate reward. This formulation shows that the learned policy balances reward maximization and deviation from the behavior policy. Moreover, this formulation provides a global objective interpretation of the proposed Bellman update. The KL-regularized objective implicitly enforces safety while enabling controlled exploration through regularization toward the behavior policy.

\subsection{Continuous Action Safe-Support Q-Learning}
In continuous action spaces, we also adopt a two-stage framework in which the Q-function and policy are trained separately. We construct the safe Bellman target using a Monte Carlo approximation of the intractable expectation and train the Q-function accordingly. Then, we characterize the policy induced by the trained Q-function. In contrast to the discrete case, the optimal policy cannot be computed in closed form; therefore, we introduce a surrogate policy and learn it by minimizing a tractable objective.

\subsubsection{Target Value Construction}
We extend the smoothed behavior policy $\tilde{\pi}_b$ from the discrete case by adding small noise to $\pi_b$ to define a continuous distribution over actions. Then, the safe target value in~\eqref{eq:target-deep-safe-Q} becomes 
\begin{align}
    y^{\rm safe} =& r + {\bf 1}(s') \gamma \lambda \ln \left( {\mathbb E}_{a \sim {\tilde{\pi}_b(\cdot|s')}} \left[ \exp \left( \frac{Q_{\theta'}(s',a)}{\lambda } \right) \right] \right) \label{eq:integral} \\
    =& r + {\bf 1}(s') \gamma \lambda \ln \left( \int_{a \in {\cal A}} \tilde{\pi}_b(a|s') \exp \left( \frac{Q_{\theta'}(s',a)}{\lambda} \right) da \right), \nonumber
\end{align}
where the smoothed behavior policy is defined as $\tilde{\pi}_b \sim {\cal N}(\pi_b, \sigma^2 I)$, and $\pi_b$ denotes a deterministic behavior policy over a continuous action space supported on a safe set $S_{\text{safe}}$. However, this generally does not admit a closed-form solution (i.e., no simple analytic form exists) and is therefore intractable to compute exactly. We thus approximate the expectation in~\eqref{eq:integral} using Monte Carlo sampling, following~\cite{haarnoja2017reinforcement}.
\begin{equation} \label{cont target}
    y^{\rm safe} \approx r + {\bf 1}(s') \gamma \lambda \ln \left( \frac{1}{N} \sum_{i=1}^N \exp \left( \frac{Q_{\theta'}(s',a_i)}{\lambda} \right) \right),
\end{equation}
where $\{a_i\}_{i=1}^N$ are i.i.d. samples drawn from the smoothed behavior policy, and $N$ denotes the number of sampled actions.

\subsubsection{Policy Extraction from Trained Q-functions}
According to~\cref{prop2}, the optimal policy induced by the trained Q-function is given by 
\[ \pi_\lambda^*(a|s) \cong \frac{\tilde{\pi}_b(a|s) \exp \left( \frac{Q_\theta(s,a)}{\lambda} \right)}{\int_{u \in {\cal A}} \tilde{\pi}_b(u|s) \exp \left( \frac{Q_\theta(s,u)}{\lambda} \right) du}. \]
However, the integral in the denominator is generally intractable in continuous action spaces, making direct evaluation of $\pi_\lambda^*$ impractical. To address this issue, we introduce a surrogate policy $\pi_\phi$ and obtain it by minimizing the KL divergence to the optimal policy:
\begin{equation} \label{cont policy obj1}
\begin{aligned}
    L(\phi) = {\mathbb E}_{s \sim {\cal N}(0,\sigma_1^2 I)} \left[ D_{\rm KL} \left( \pi_\phi(\cdot|s) \left\| \frac{\tilde{\pi}_b(\cdot|s) \exp \left( \frac{Q_\theta(s,\cdot)}{\lambda} \right)}{\int_{u \in {\cal A}} \tilde{\pi}_b(u|s) \exp \left( \frac{Q_\theta(s,u)}{\lambda} \right) du} \right. \right) \right],
\end{aligned}
\end{equation}
where ${\cal N}(0,\sigma_1^2 I)$ denotes a Gaussian distribution with $\sigma_1^2 > 0$. Note that since the objective depends only on the state distribution, we adopt a Gaussian distribution as a simple reference distribution instead of the replay buffer.

To simplify the objective~\eqref{cont policy obj1}, we parameterize $\pi_\phi$ as a Gaussian policy:
\[ \pi_\phi(a|s) = {\cal N}(a;\mu_\phi(s),\sigma_2^2 I), \]
where $\mu_\phi(s)$ and $\sigma_2 > 0$ denote the mean and standard deviation, respectively. Moreover, we consider two cases depending on the form of the behavior policy: mean-noise behavior policy~\eqref{mean-noise behavior policy} and distributional behavior policy~\eqref{distributional behavior policy}. Then, the approximate objective for the former case can be written as
\begin{equation} \label{cont policy obj_case1_fin}
    \tilde L_M(\phi) := \sum_{i=1}^p \sum_{j=1}^q \left[ \frac{1}{2\sigma_M^2} \left\| \mu_\phi(s_i) + w_j - \mu_b(s_i) \right\|_2^2 - \frac{Q_\theta(s_i,\mu_\phi(s_i)+w_j)}{\lambda} \right],
\end{equation}
where $s_i \sim {\cal N}(0,\sigma_1^2 I)$ and $w_j \sim {\cal N}(0,\sigma_2^2 I)$. For the latter case, the approximate objective is given by
\begin{equation} \label{cont policy obj_case2_fin}
\tilde L_D(\phi) := \sum_{i=1}^p \sum_{j=1}^q \left[ -\ln \tilde{\pi}_b(\mu_\phi(s_i) + w_j|s_i) -\frac{Q_\theta(s_i,\mu_\phi(s_i) + w_j)}{\lambda} \right].
\end{equation}
The detailed derivation is provided in the Appendix~\ref{cont pol obj}. Furthermore, the overall procedures are summarized in~\cref{alg5} and~\cref{alg6} of the Appendix, respectively, which are presented for the online setting.

\subsection{Unified Algorithmic Variants}
The proposed framework can be instantiated in several algorithmic variants depending on the action space and the type of behavior policy, as summarized in~\cref{tab1}. All these variants share the same principle: the Q-function is trained using a safe Bellman target, and the policy is subsequently constructed from the trained Q-function while remaining consistent with the behavior policy. The differences arise only from the choice of action space and the parameterization of the behavior policy, which affect the form of the target value and the policy learning objective. Note that we employ an $\epsilon$-greedy behavior policy in discrete action spaces, which is a standard choice for exploration. This safe policy is also restricted to safe actions in our setting so that the resulting trajectories remain within the safe set $S_{\text{safe}}$.

\begin{table}[ht]
\centering
\begin{tabular}{|P{50pt}|P{70pt}|P{140pt}|}
\hline
Case & Action Space & Behavior Policy \\
\hline
Case 1 & Discrete & HC safe + $\epsilon$-greedy \\
\hline
Case 2 & Discrete & DS safe + $\epsilon$-greedy \\
\hline
Case 3 & Continuous & HC safe + mean-noise \\
\hline
Case 4 & Continuous & HC safe + distributional \\
\hline
Case 5 & Continuous & DS safe + mean-noise \\
\hline
Case 6 & Continuous & DS safe + distributional \\
\hline
\end{tabular}
\caption{Unified variants of the proposed framework}
\label{tab1}
\end{table}

\section{Experiment}
In this section, we evaluate the proposed methods by addressing the following questions:
\begin{enumerate}
    \item Can the proposed method improve performance when learning only from a behavior policy supported on the safe set?
    
    \item Does the proposed method yield less biased Q-value estimates compared to baseline algorithms?

    \item Can the proposed method learn a safer policy at comparable return levels under a behavior policy supported on the safe set?
    
\end{enumerate}
We conduct experiments on the FrozenLake and CartPole environments, both of which have well-established control strategies or known optimal behaviors. Details of the experimental settings are provided in the Appendix~\ref{exp setting}. Note that our proposed framework can be readily extended to other environments. For each result graph, performance is averaged over five different seeds; solid curves represent the mean, and shaded regions indicate the standard deviation.

\subsection{FrozenLake Gym}

\begin{figure}[ht]
    \centering
    \begin{subfigure}{0.4\linewidth}
    \includegraphics[width=\linewidth]{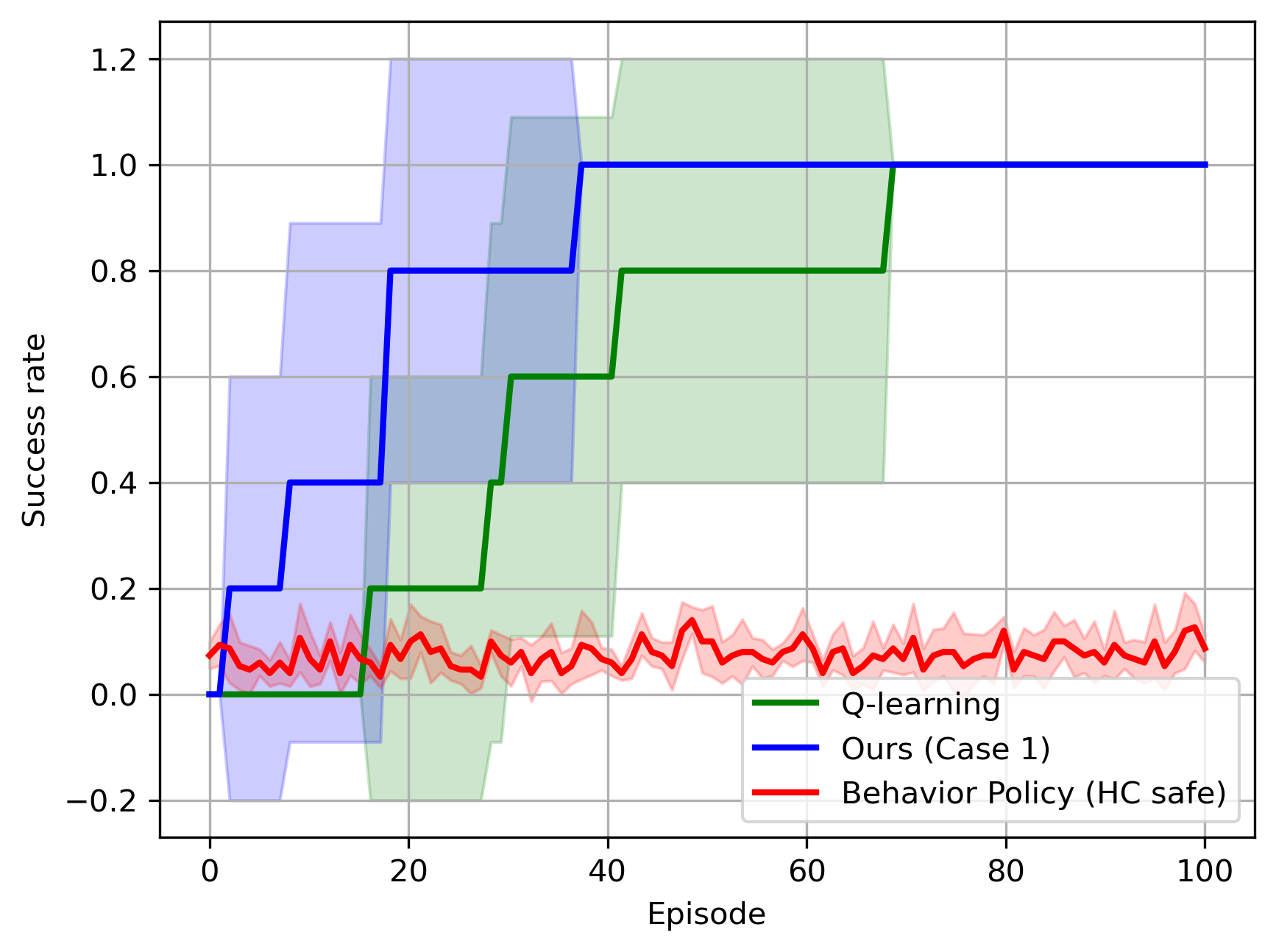}
    \caption{Case 1: Discrete, HC safe}
    \label{fig2a}
    \end{subfigure}
    \begin{subfigure}{0.4\linewidth}
    \includegraphics[width=\linewidth]{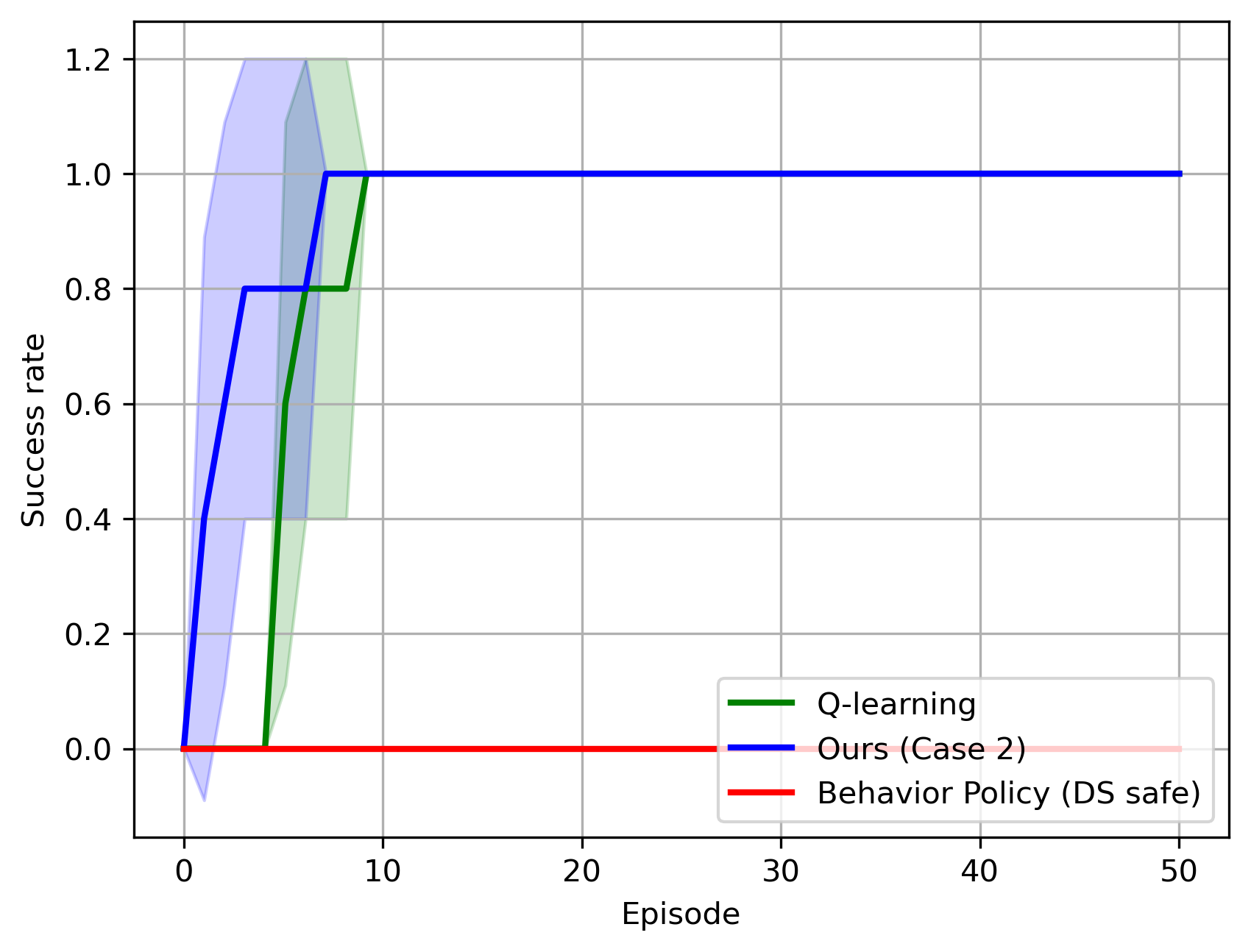}
    \caption{Case 2: Discrete, DS safe}
    \label{fig2b}
    \end{subfigure}
    \caption{Learning curve of success rate on FrozenLake-v1 comparing the proposed method with the standard Q-learning under different settings.}
\end{figure}

To answer the first question, we compare the success rate learning curves of the proposed algorithm and standard Q-learning, as shown in~\cref{fig2a,fig2b}. To train the standard Q-learning algorithm, we use the same behavior policy as in the proposed algorithm, described in the Appendix~\ref{exp setting}. Since the behavior policy is not updated during training, its performance remains suboptimal. In contrast, both methods learn policies that outperform the behavior policy. Specifically, the proposed algorithm converges faster and exhibits lower variance, indicating more stable learning. These results demonstrate that effective policy learning is achievable even when learning from a behavior policy supported on the safe set.

\begin{figure}[ht]
    \centering
    \begin{subfigure}{0.49\linewidth}
    \includegraphics[width=\linewidth]{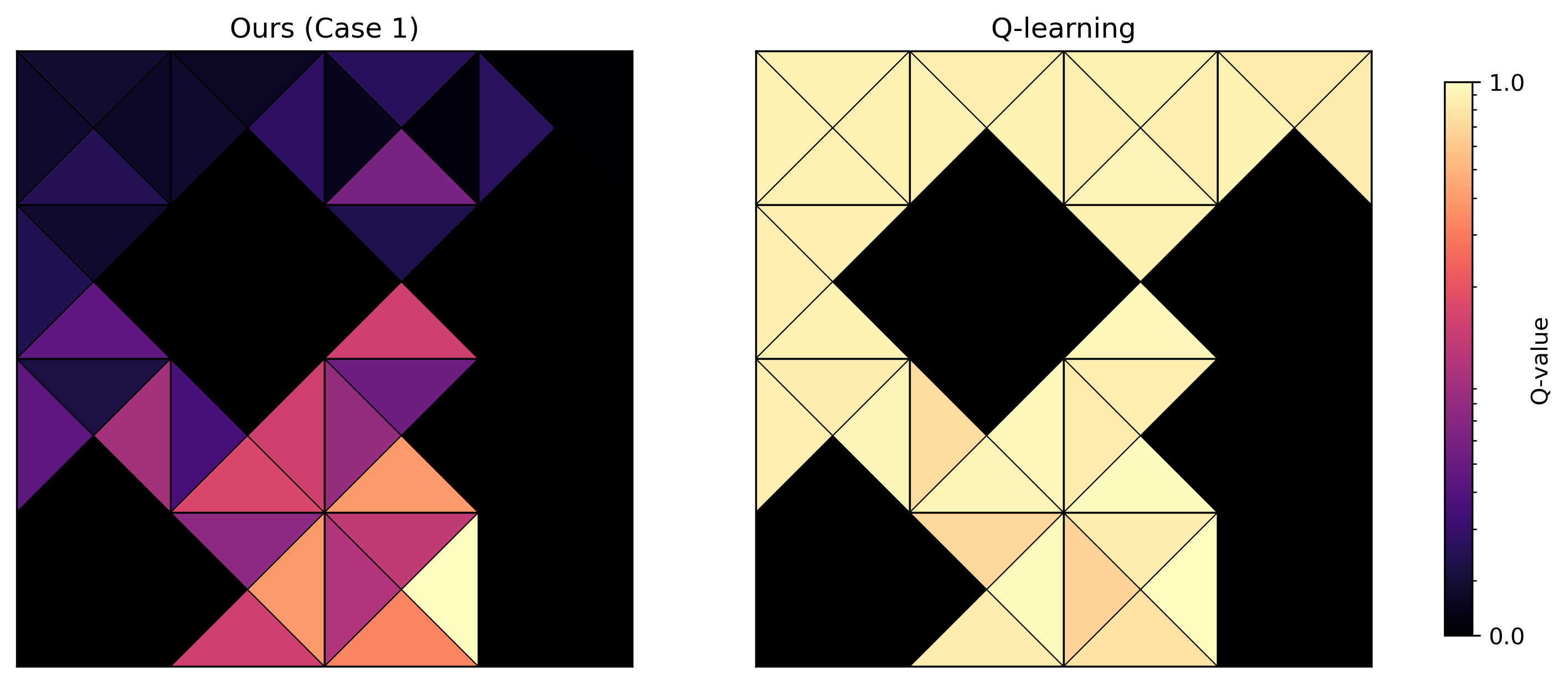}
    \caption{Comparison of trained Q-values between the proposed method (Case 1: Discrete, HC safe; left) and standard Q-learning (right)}
    \label{fig3a}
    \end{subfigure}
    \begin{subfigure}{0.49\linewidth}
    \includegraphics[width=\linewidth]{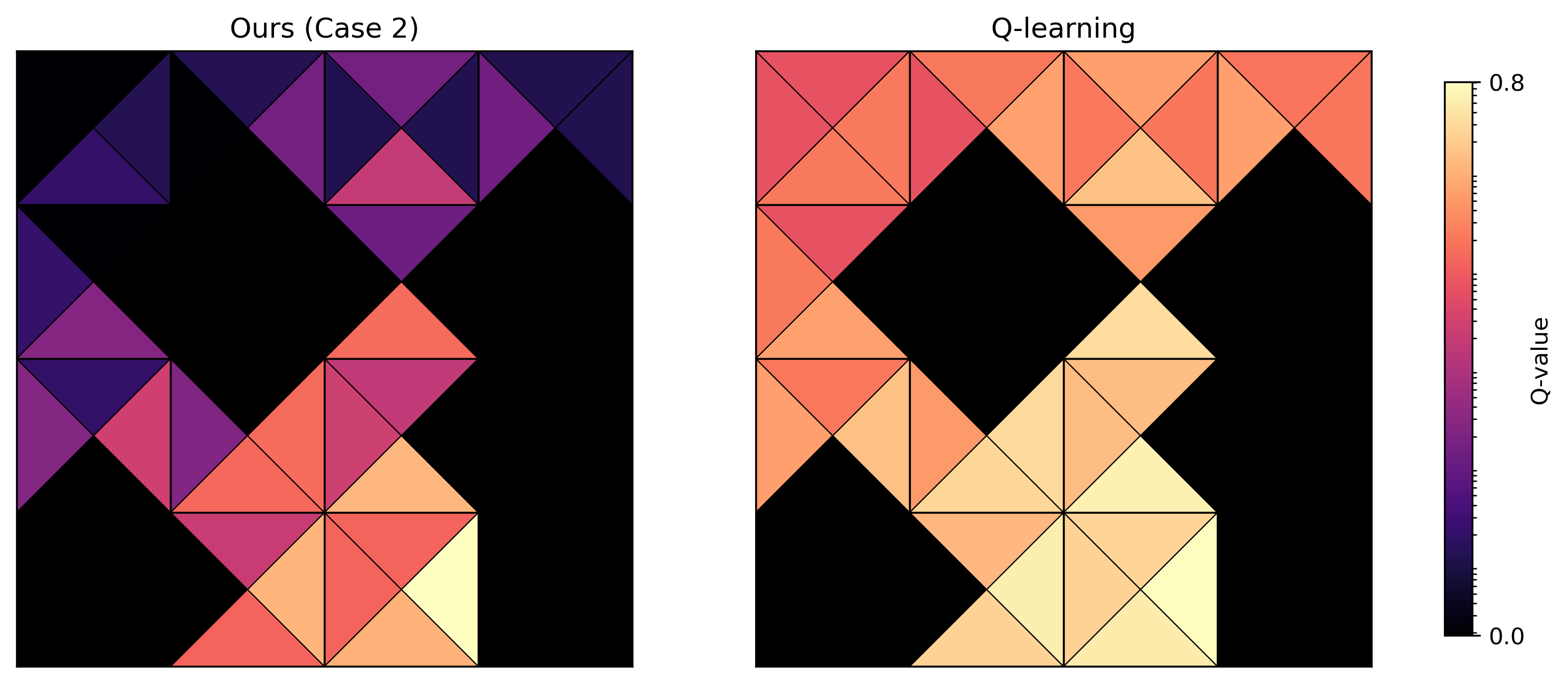}
    \caption{Comparison of trained Q-values between the proposed method (Case 2: Discrete, DS safe; left) and standard Q-learning (right)}
    \label{fig3b}
    \end{subfigure}
    \caption{Comparison of trained Q-values on FrozenLake-v1 under different settings.}
\end{figure}

To answer the second question, we compare the trained Q-values of the proposed algorithm and standard Q-learning as shown in~\cref{fig3a,fig3b}. Each figure visualizes the Q-values for the 16 states as illustrated in~\cref{fig1a} of the Appendix, with each square representing a state and each triangle corresponding to an action (up, down, left, and right). Since both methods are trained without visiting unsafe states, the Q-values for unsafe state-action pairs remain zero, as shown in black. Compared to standard Q-learning, our method learns more structured Q-values that emphasize safe trajectories and suppress values near hazardous states. In particular, Q-values of our method increase smoothly along safe paths toward the goal, while standard Q-learning exhibits relatively uniform and overestimated values near unsafe states. These results indicate that the proposed method captures safety-aware value estimates and achieves improved performance through more accurate value estimation.

\subsection{CartPole Gym}
In this environment, we consider DQN, CQL, BCQ, IQL, TD3+BC, and BEAR as baselines described in the Appendix~\ref{exp setting} for a fair comparison under our behavior policy setting. Note that TD3+BC and BEAR are only applicable to continuous action spaces.

\begin{figure}[ht]
    \centering
    \begin{subfigure}{0.8\linewidth}
    \includegraphics[width=\linewidth]{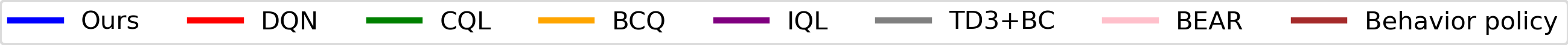}
    \end{subfigure}
    
    \begin{subfigure}{0.32\linewidth}
    \includegraphics[width=\linewidth]{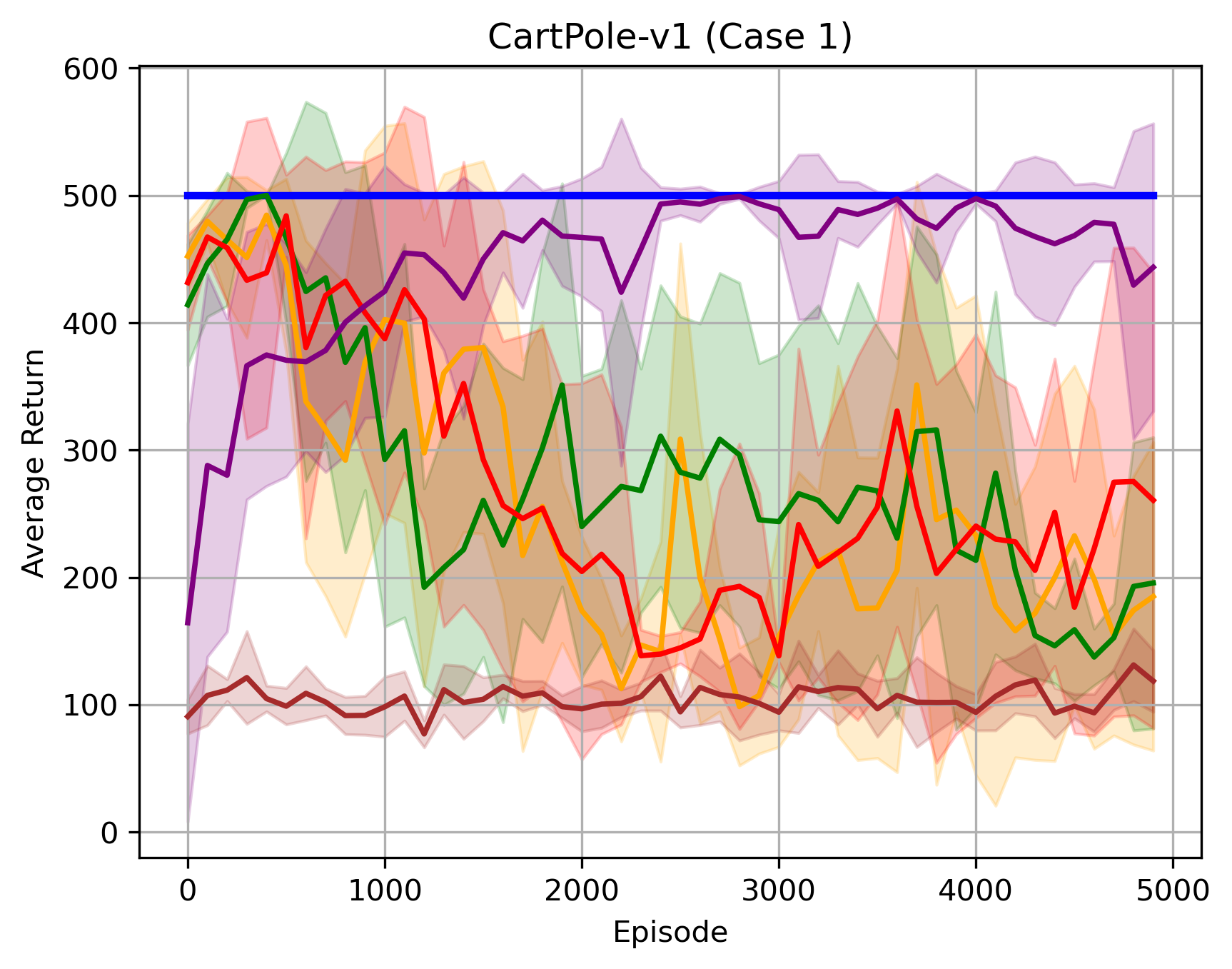}
    \caption{\centering Case 1:\\Discrete, HC safe}
    \label{fig4a}
    \end{subfigure}
    \begin{subfigure}{0.32\linewidth}
    \includegraphics[width=\linewidth]{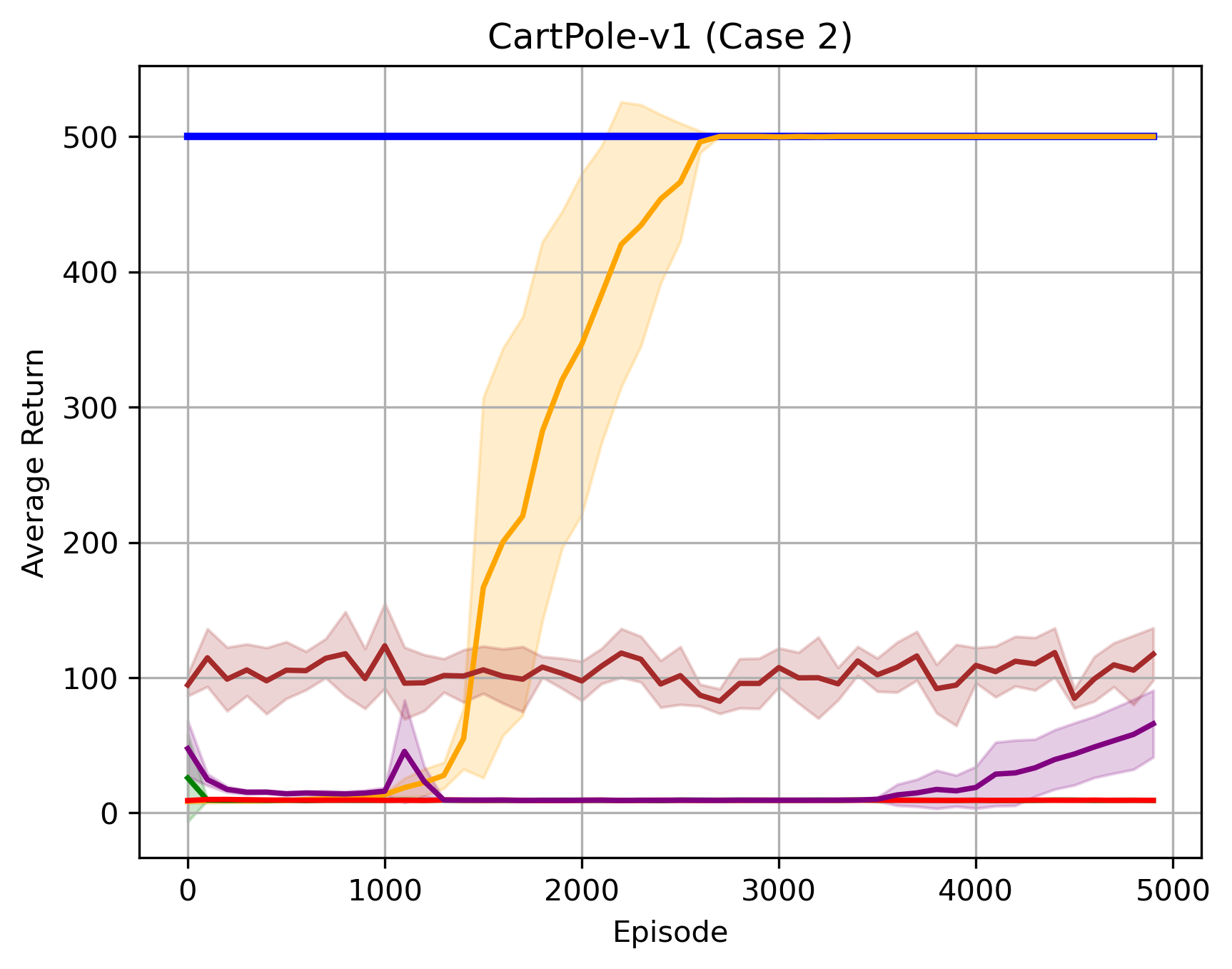}
    \caption{\centering Case 2:\\Discrete, DS safe}
    \label{fig4b}
    \end{subfigure}
    \begin{subfigure}{0.32\linewidth}
    \includegraphics[width=\linewidth]{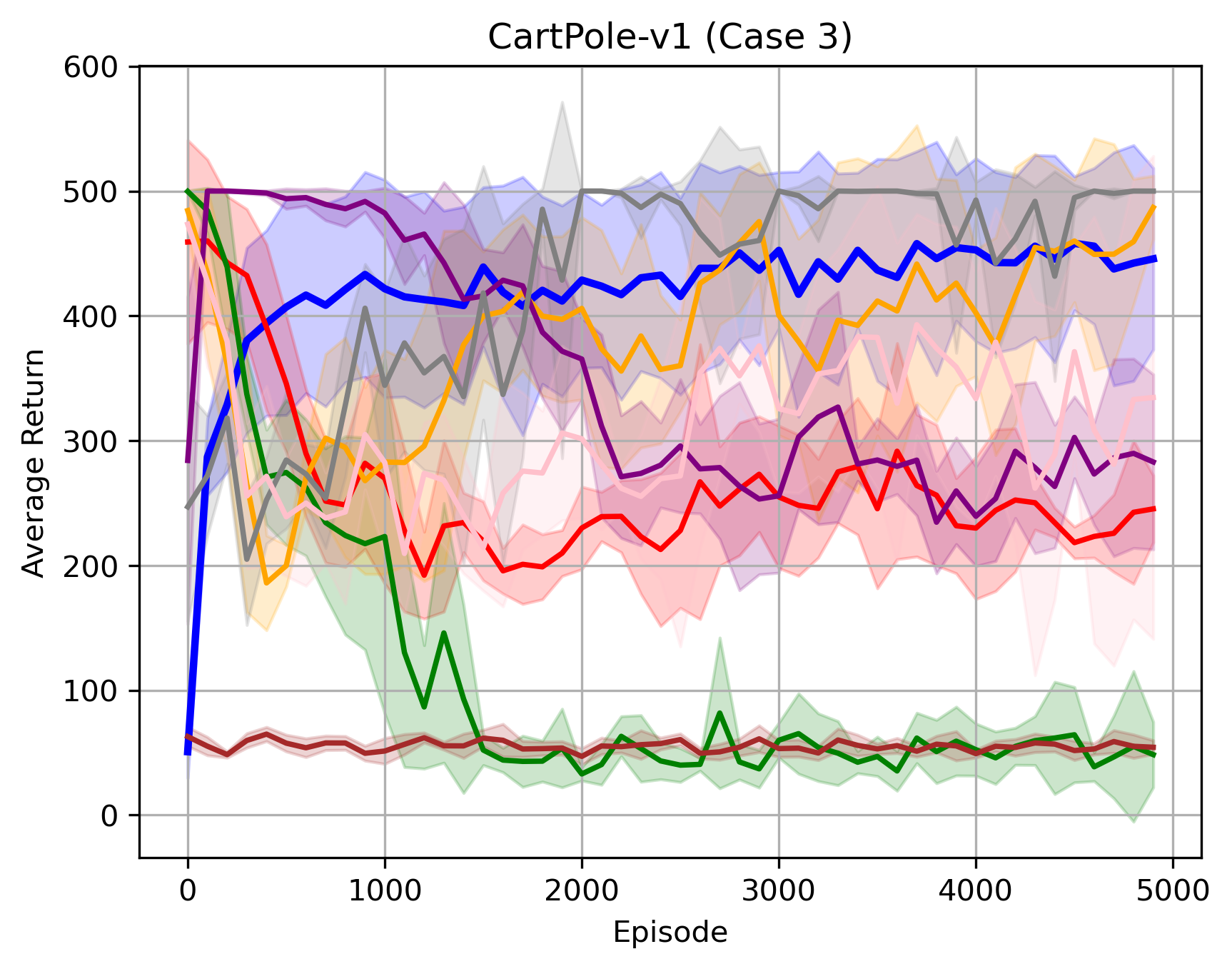}
    \caption{\centering Case 3:\\Continuous, HC safe, mean-noise}
    \label{fig4c}
    \end{subfigure}
    
    \begin{subfigure}{0.32\linewidth}
    \includegraphics[width=\linewidth]{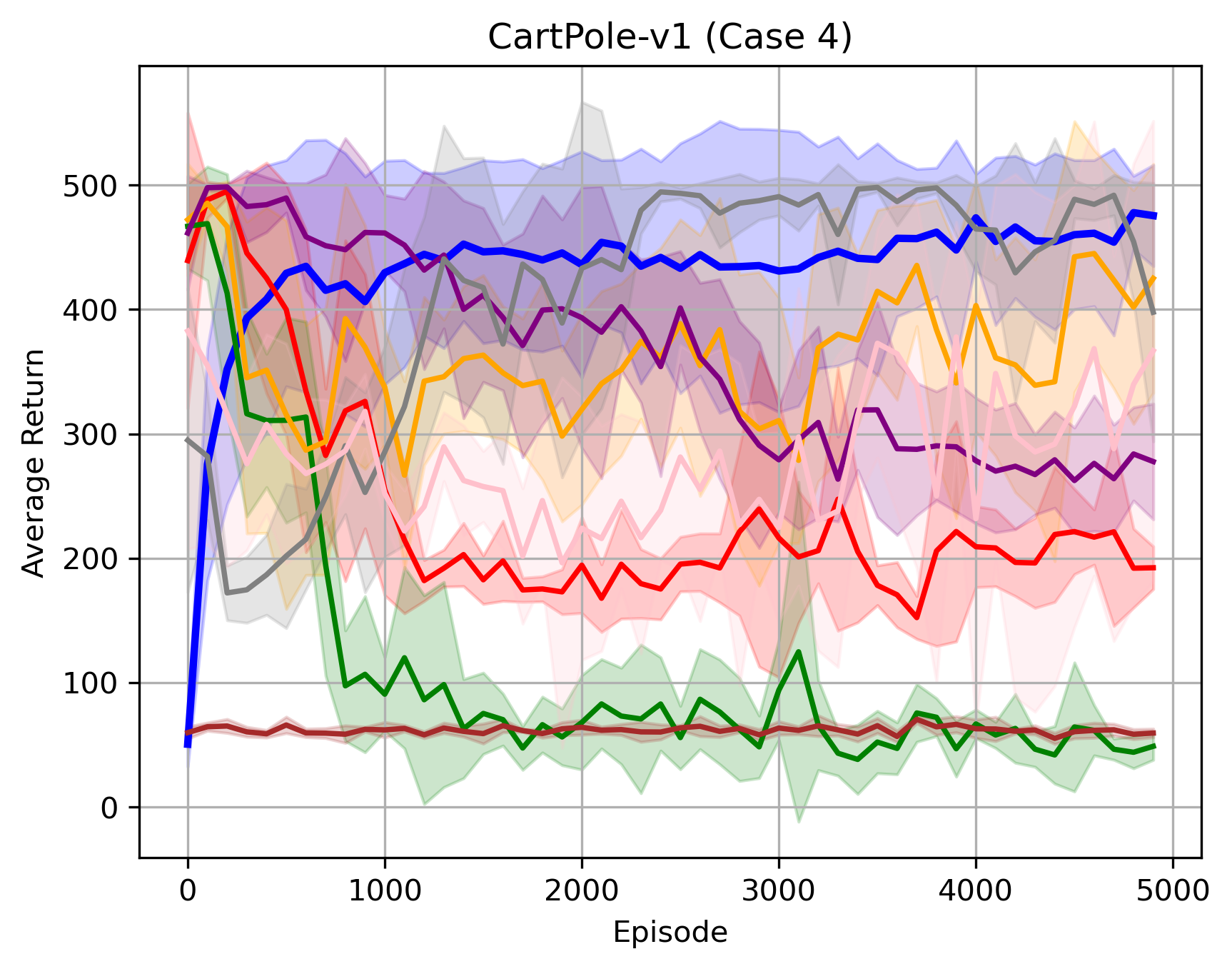}
    \caption{\centering Case 4:\\Continuous, HC safe, distributional}
    \label{fig4d}
    \end{subfigure}
    \begin{subfigure}{0.32\linewidth}
    \includegraphics[width=\linewidth]{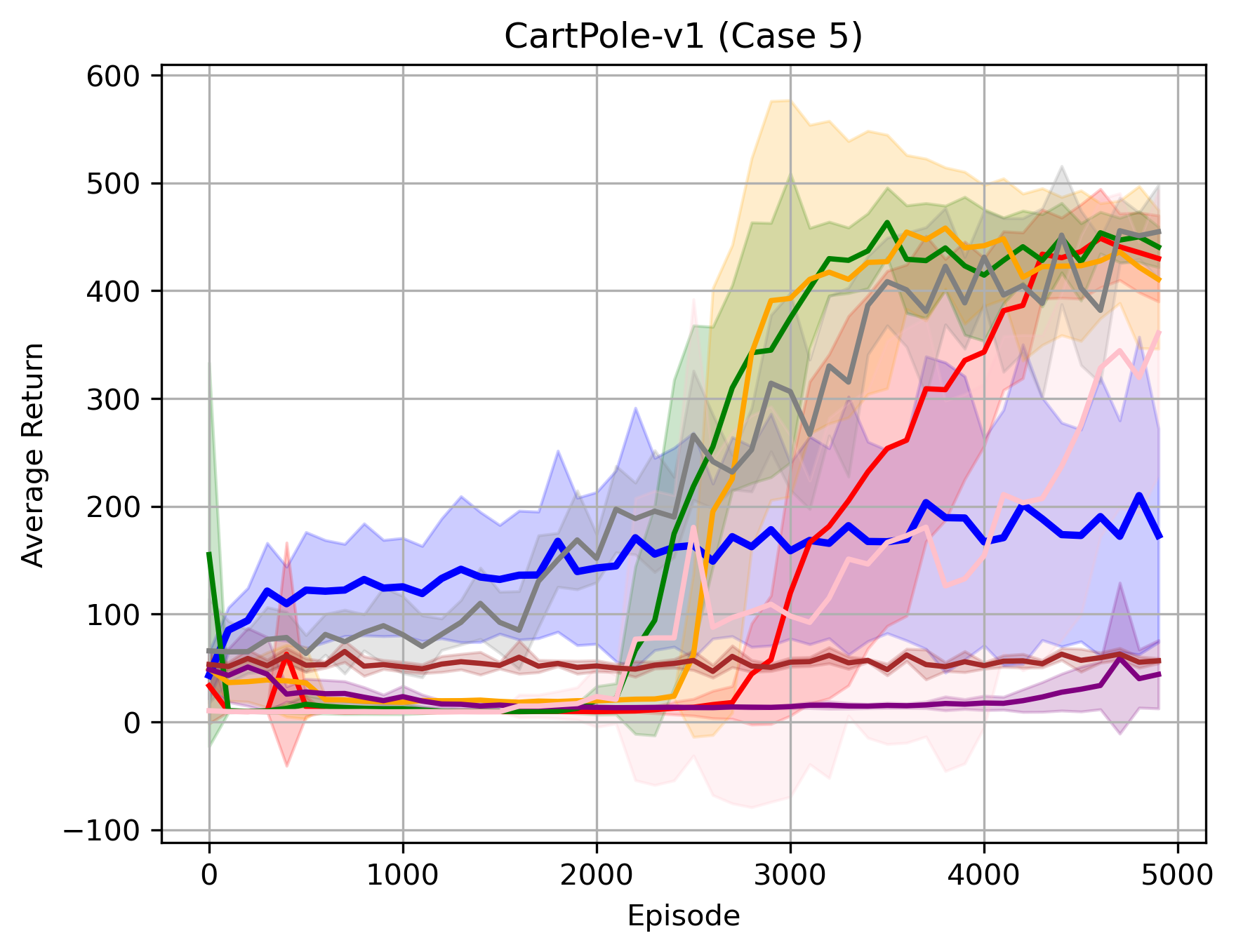}
    \caption{\centering Case 5:\\Continuous, DS safe, mean-noise}
    \label{fig4e}
    \end{subfigure}
    \begin{subfigure}{0.32\linewidth}
    \includegraphics[width=\linewidth]{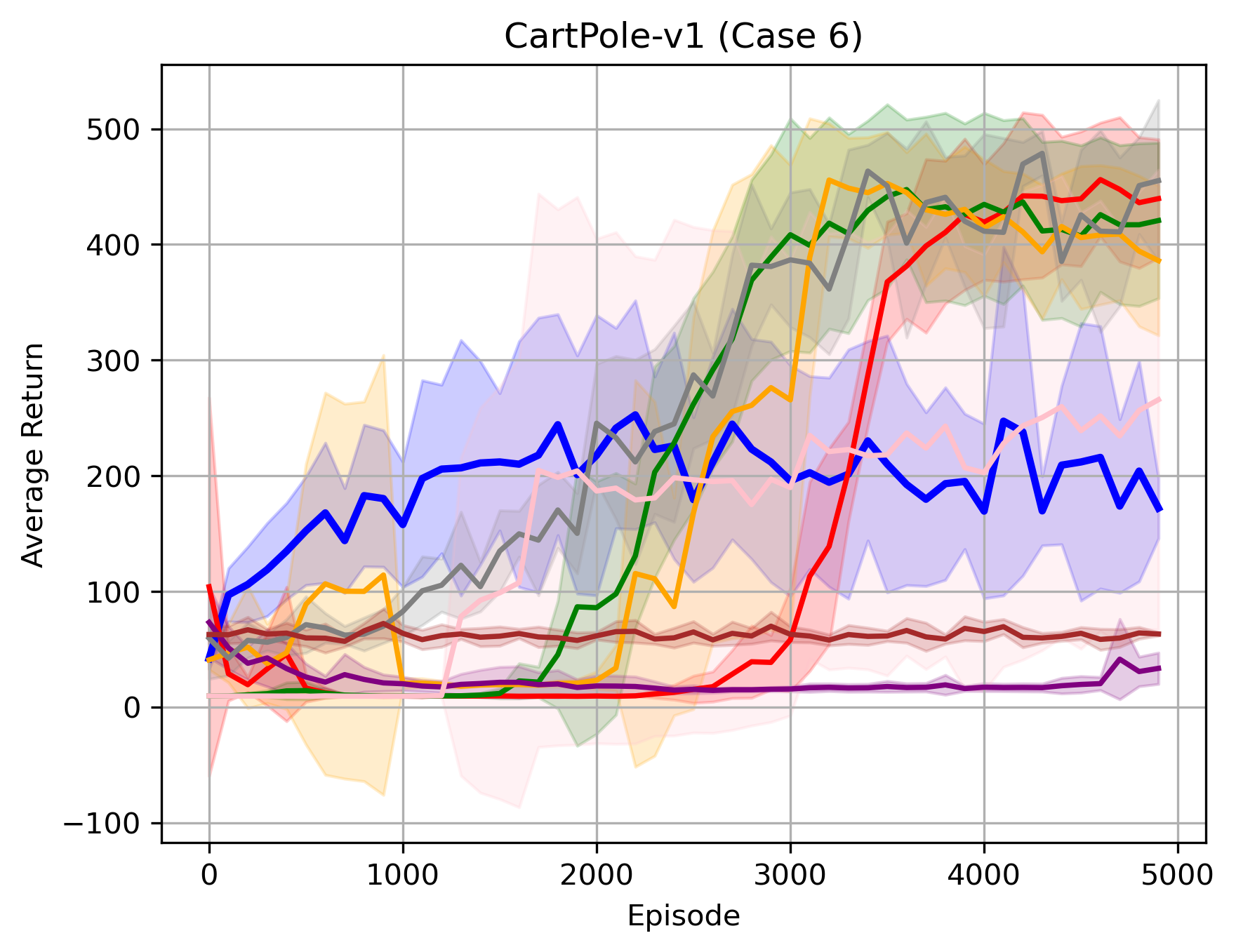}
    \caption{\centering Case 6:\\Continuous, DS safe, distributional}
    \label{fig4f}
    \end{subfigure}

    \caption{Learning curves on CartPole-v1 comparing the proposed method with baselines under different settings.}
\end{figure}

To answer the first question, we evaluate the performance of the proposed methods against baselines across different settings. To train the baselines, we use the same behavior policy as in the proposed algorithm, described in the Appendix~\ref{exp setting}. In the discrete setting illustrated in~\cref{fig4a,fig4b}, most baselines fail to learn effective policies or exhibit unstable performance. In contrast, our method consistently achieves optimal performance, showing that the optimal policy can be recovered even from suboptimal behavior policies supported on the safe set. In the continuous setting illustrated in~\cref{fig4c,fig4d}, some baselines still show instability, likely due to limited action diversity. On the other hand, the proposed method demonstrates stable learning with smoother convergence, suggesting robust policy learning under a behavior policy supported on the safe set. When the setting becomes more restricted, as illustrated in~\cref{fig4e,fig4f}, however, the proposed method shows limited performance due to the imperfect behavior policy estimation. Unlike some baselines that benefit from extrapolation, our method relies on data support, which may limit its ability to discover optimal strategies. This reflects a trade-off between safety and performance under imperfect behavior policies.

\begin{figure}[ht]
    \centering
    \begin{subfigure}{0.8\linewidth}
    \includegraphics[width=\linewidth]{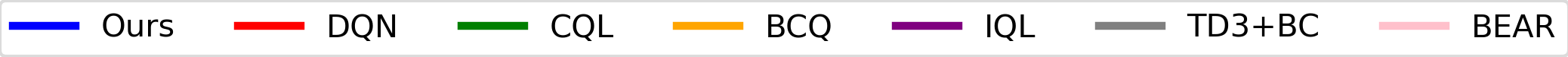}
    \end{subfigure}
    
    \begin{subfigure}{0.32\linewidth}
    \includegraphics[width=\linewidth]{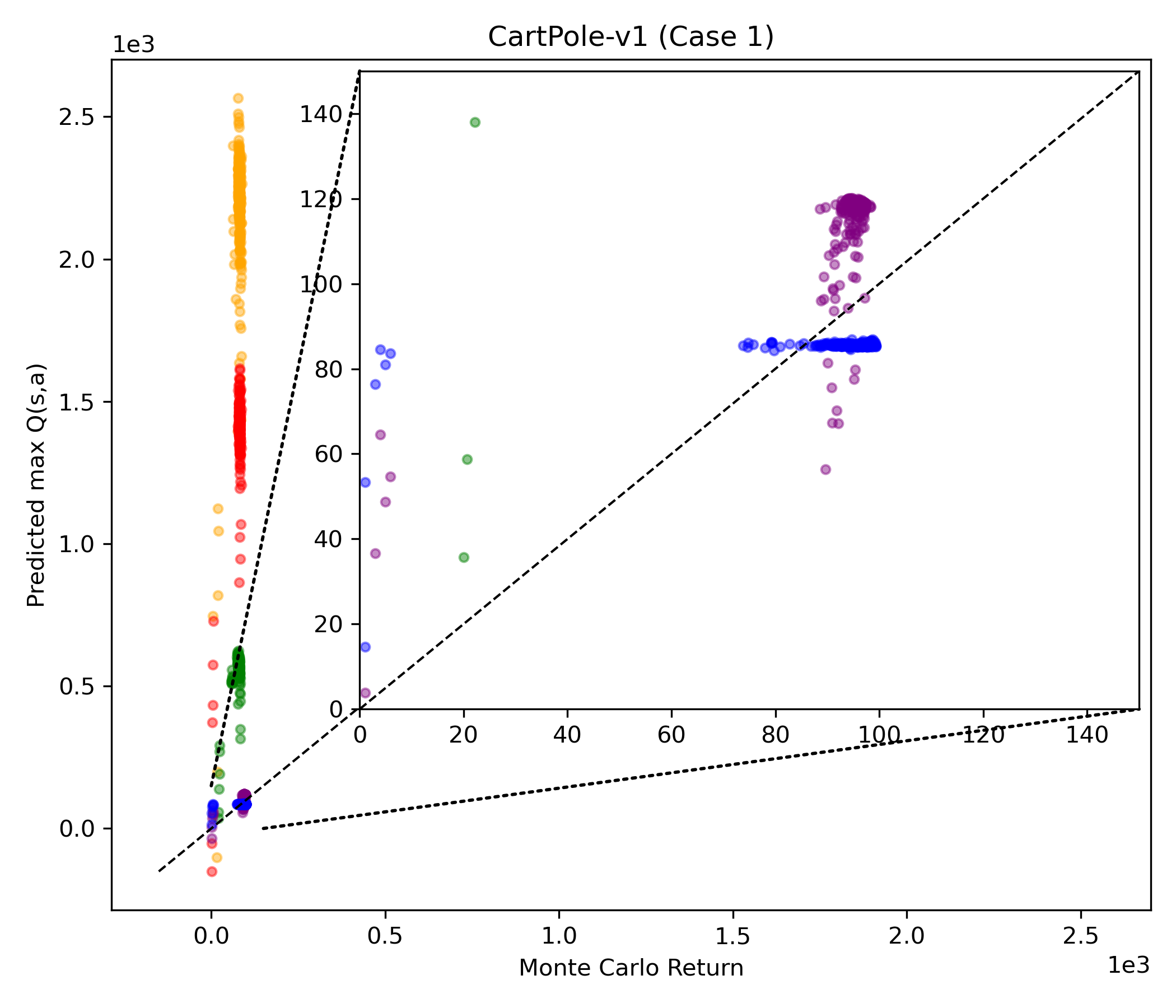}
    \caption{\centering Case 1:\\Discrete, HC safe}
    \label{fig5a}
    \end{subfigure}
    \begin{subfigure}{0.32\linewidth}
    \includegraphics[width=\linewidth]{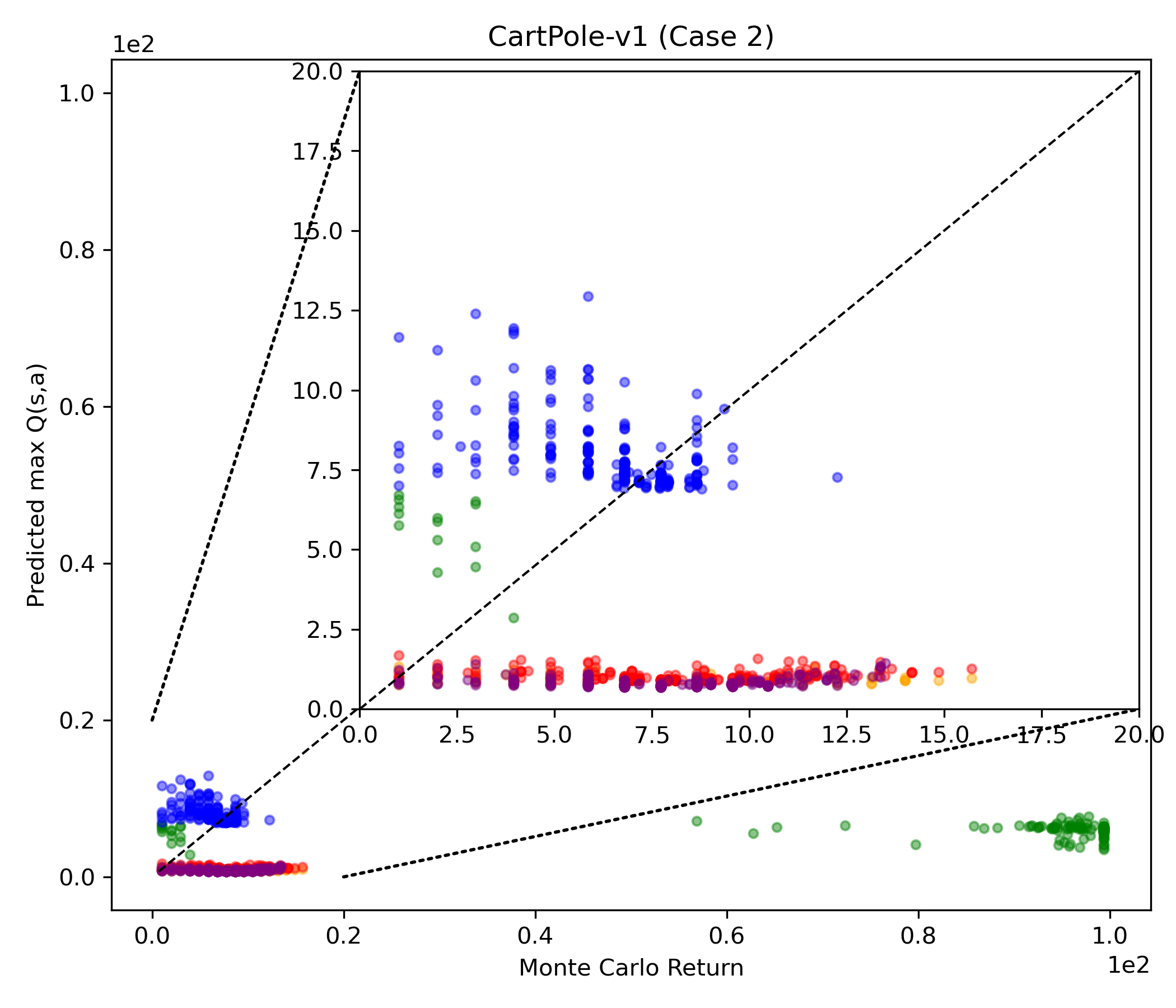}
    \caption{\centering Case 2:\\Discrete, DS safe}
    \end{subfigure}
    \begin{subfigure}{0.32\linewidth}
    \includegraphics[width=\linewidth]{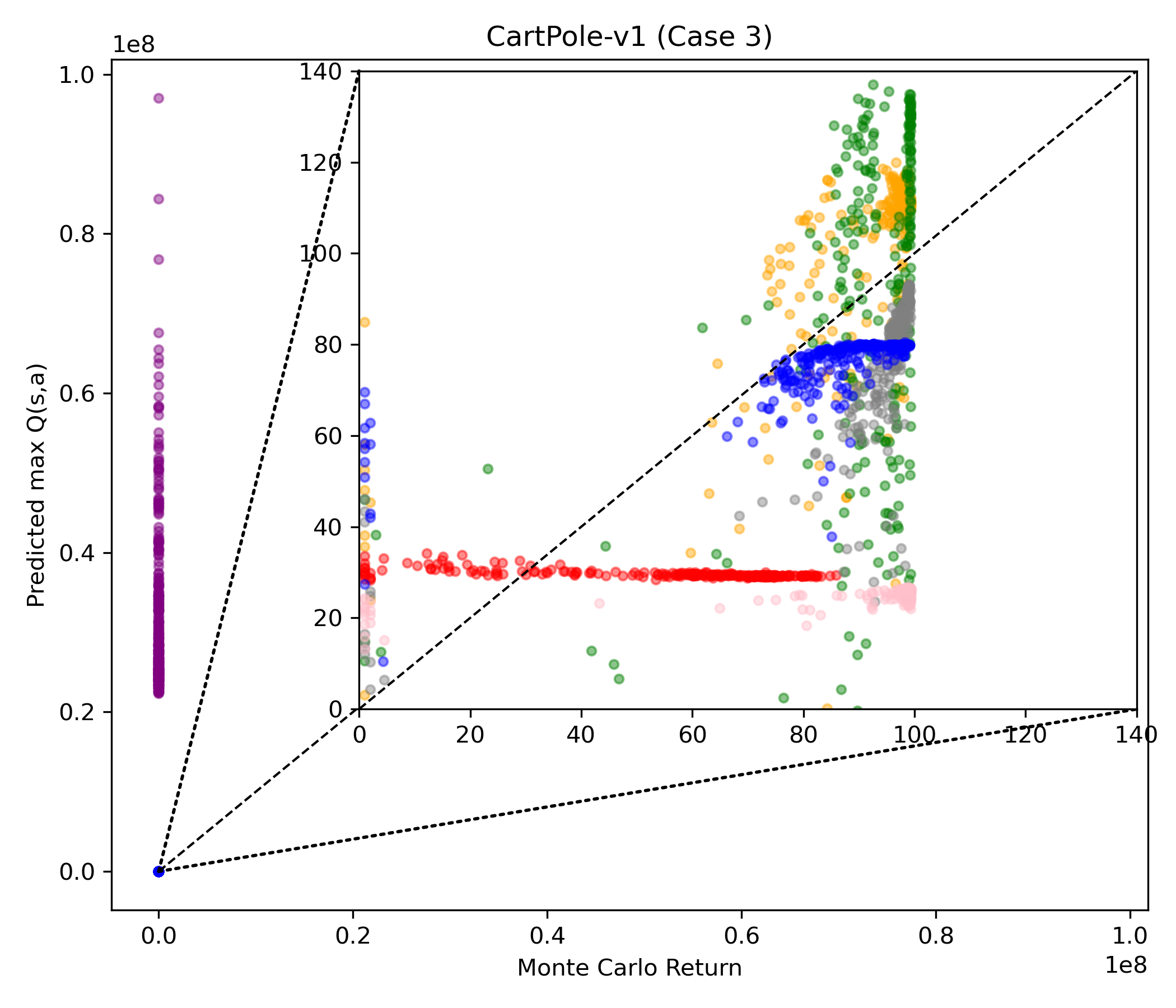}
    \caption{\centering Case 3:\\Continuous, HC safe, mean-noise}
    \label{fig5c}
    \end{subfigure}
    
    \begin{subfigure}{0.32\linewidth}
    \includegraphics[width=\linewidth]{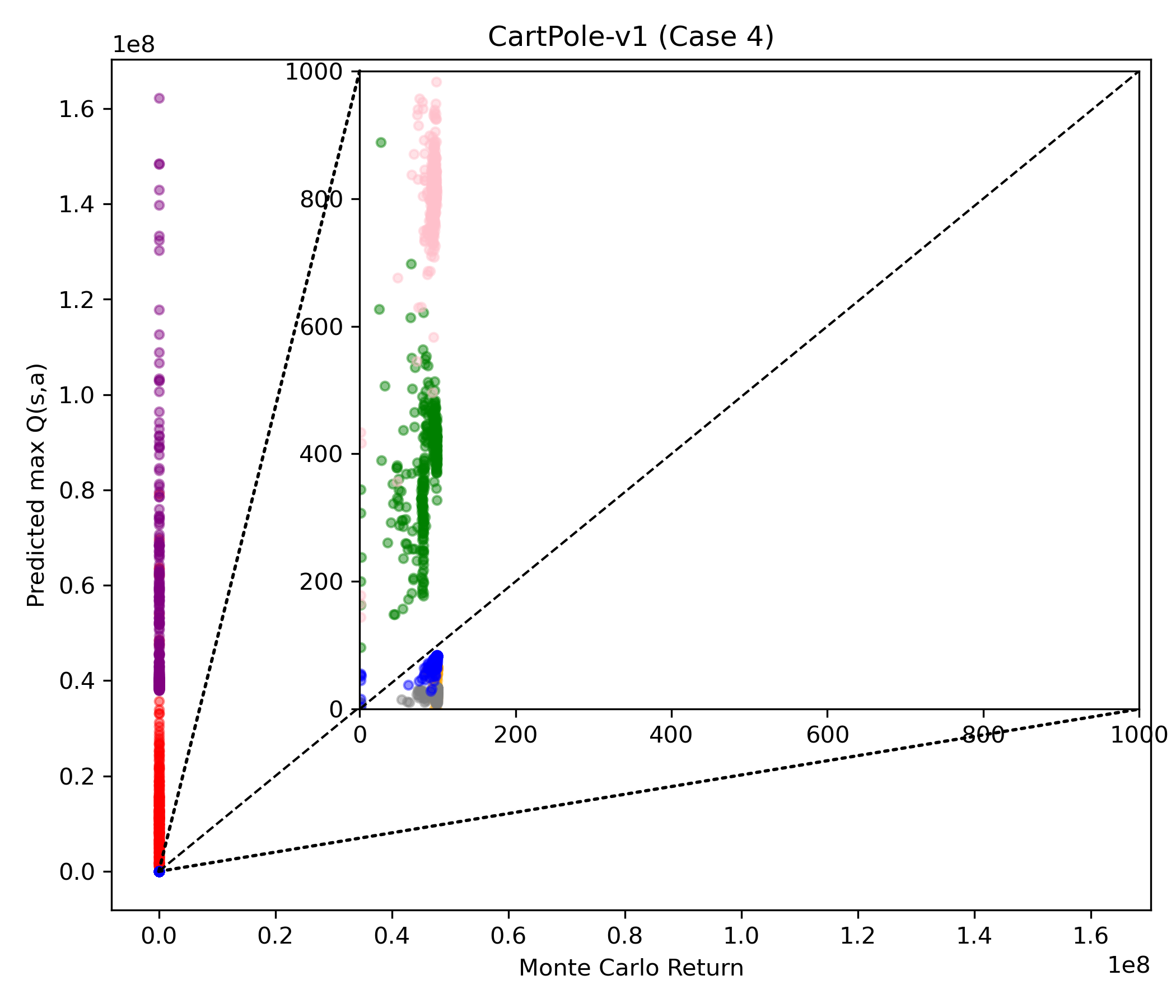}
    \caption{\centering Case 4:\\Continuous, HC safe, distributional}
    \label{fig5d}
    \end{subfigure}
    \begin{subfigure}{0.32\linewidth}
    \includegraphics[width=\linewidth]{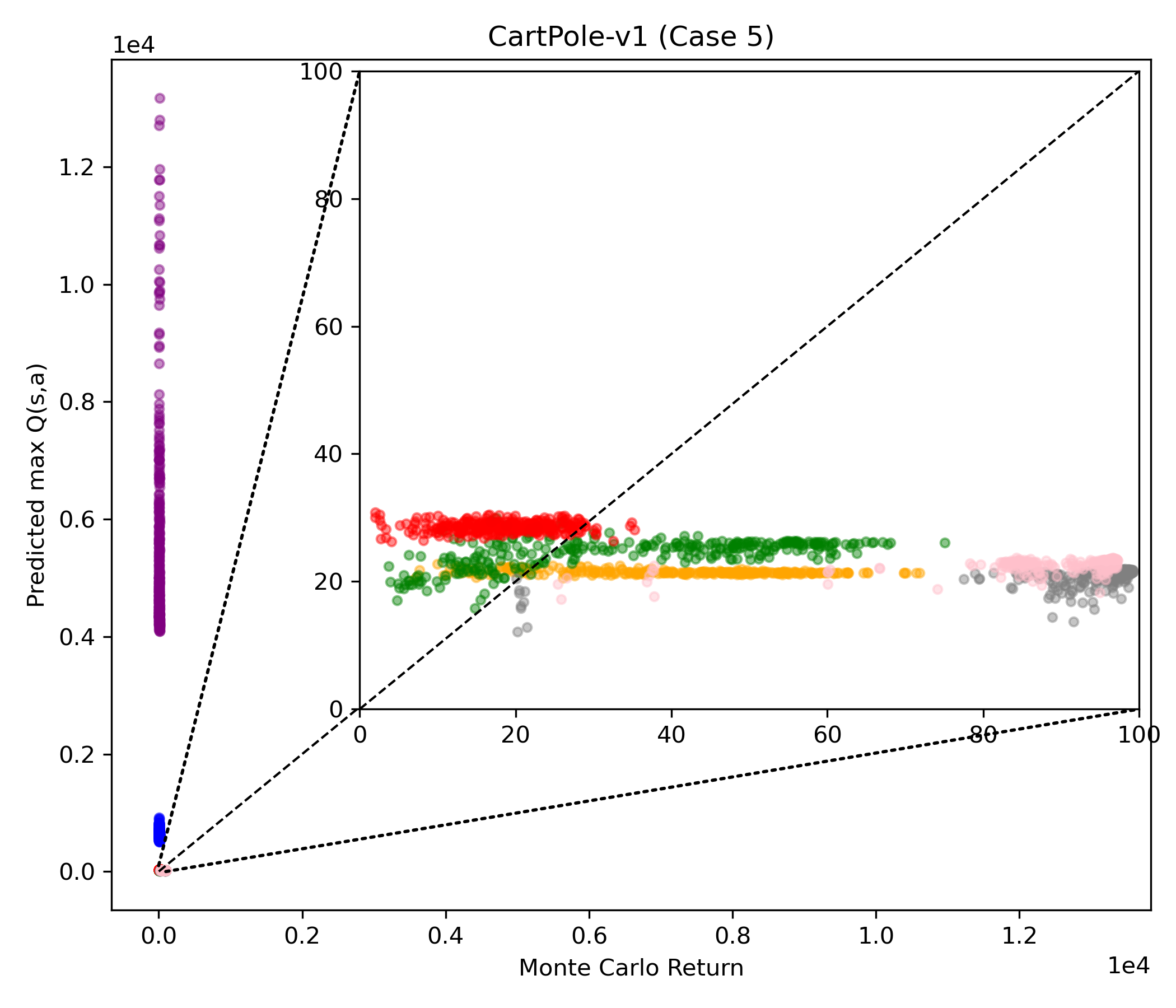}
    \caption{\centering Case 5:\\Continuous, DS safe, mean-noise}
    \label{fig5e}
    \end{subfigure}
    \begin{subfigure}{0.32\linewidth}
    \includegraphics[width=\linewidth]{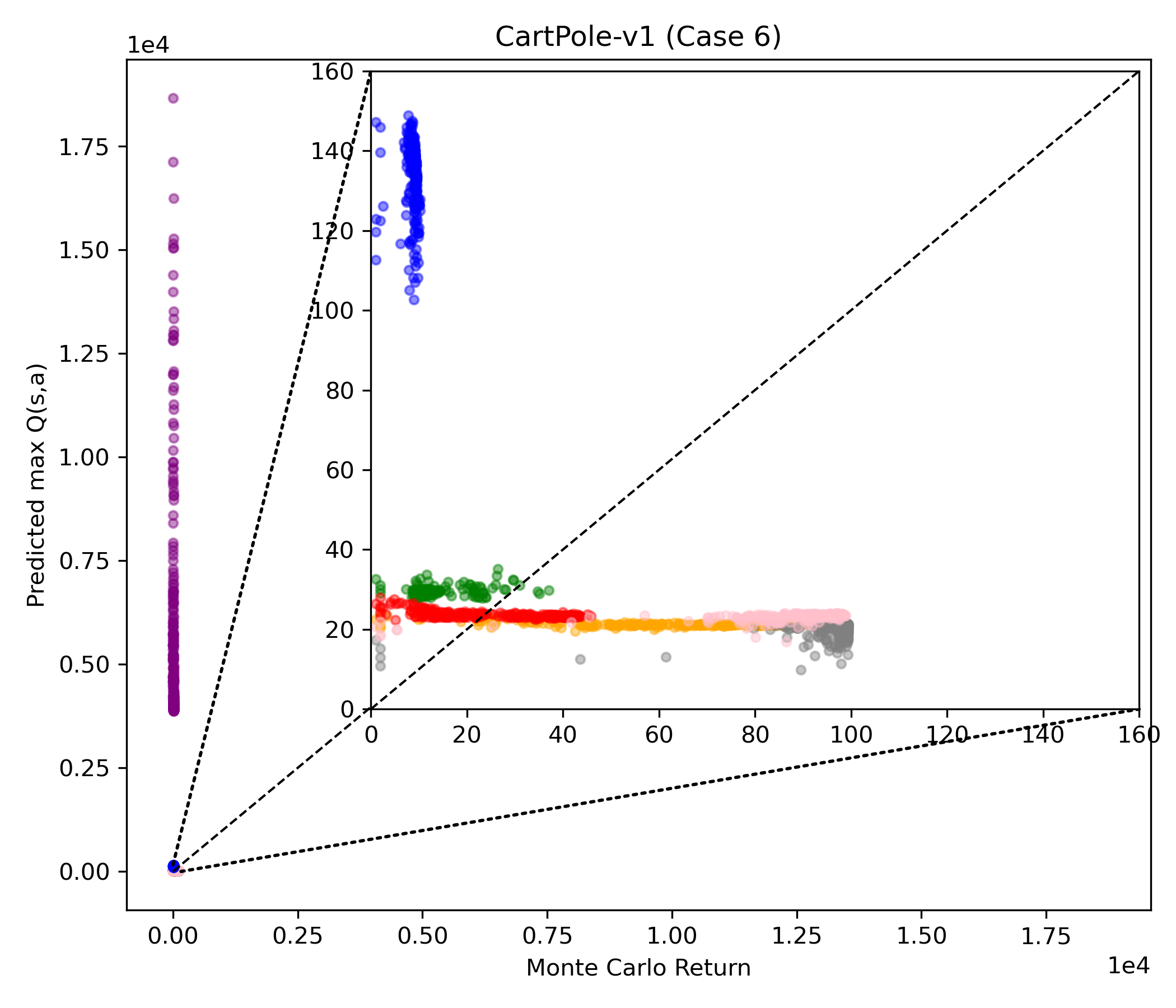}
    \caption{\centering Case 6:\\Continuous, DS safe, distributional}
    \label{fig5f}
    \end{subfigure}

    \caption{Q-value estimation bias on CartPole-v1 comparing the proposed method with baselines under different settings.}
    \label{fig5}
\end{figure}

For the second question, we assess estimation bias by comparing $\max_a Q(s,a)$ with the Monte Carlo return obtained by rolling out the corresponding greedy policy. As shown in~\cref{fig5}, well-calibrated Q-values should lie along the diagonal (dotted line), with points above and below the line indicating overestimation and underestimation, respectively. In~\cref{fig5a} to~\cref{fig5d}, the proposed method produces Q-values closely aligned with the Monte Carlo returns, which indicates well-calibrated value estimates. In contrast, baseline methods exhibit various issues, including value collapse (nearly constant Q-values), overly conservative estimation (below the diagonal), or divergence (above the diagonal). These results suggest that inaccurate Q-value estimation can lead to either policy collapse or overly conservative behavior. In particular, some baselines display widely scattered points as shown in~\cref{fig5c}. This reflects high variance and inconsistent Q-value estimation. Even when similar returns are achieved (e.g., BCQ and TD3+BC in~\cref{fig5c,fig5d}), their Q-values are more dispersed and deviate further from the diagonal. This suggests that their value functions are less stable and inaccurately calibrated, whereas the proposed method maintains low-variance and more reliable value estimation. However, in~\cref{fig5e,fig5f}, the proposed method no longer maintains well-calibrated Q-values. This is likely due to imperfect behavior policy estimation.

\begin{figure}[ht]
    \centering
    \begin{subfigure}{0.8\linewidth}
    \includegraphics[width=\linewidth]{figure/experiment_cartpole/legend_overestimation.png}
    \end{subfigure}
    
    \begin{subfigure}{0.32\linewidth}
    \includegraphics[width=\linewidth]{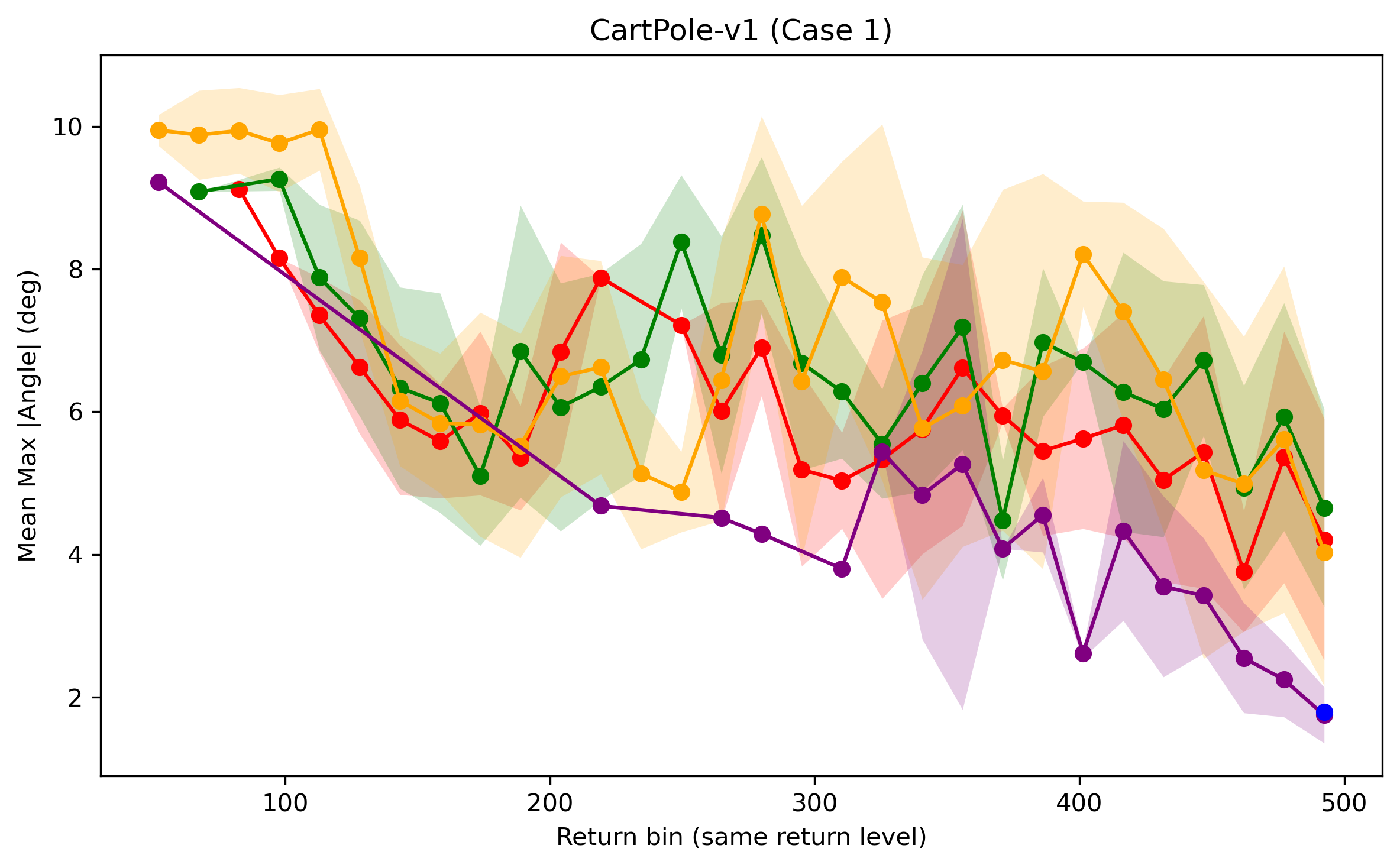}
    \caption{\centering Case 1:\\Discrete, HC safe}
    \end{subfigure}
    \begin{subfigure}{0.32\linewidth}
    \includegraphics[width=\linewidth]{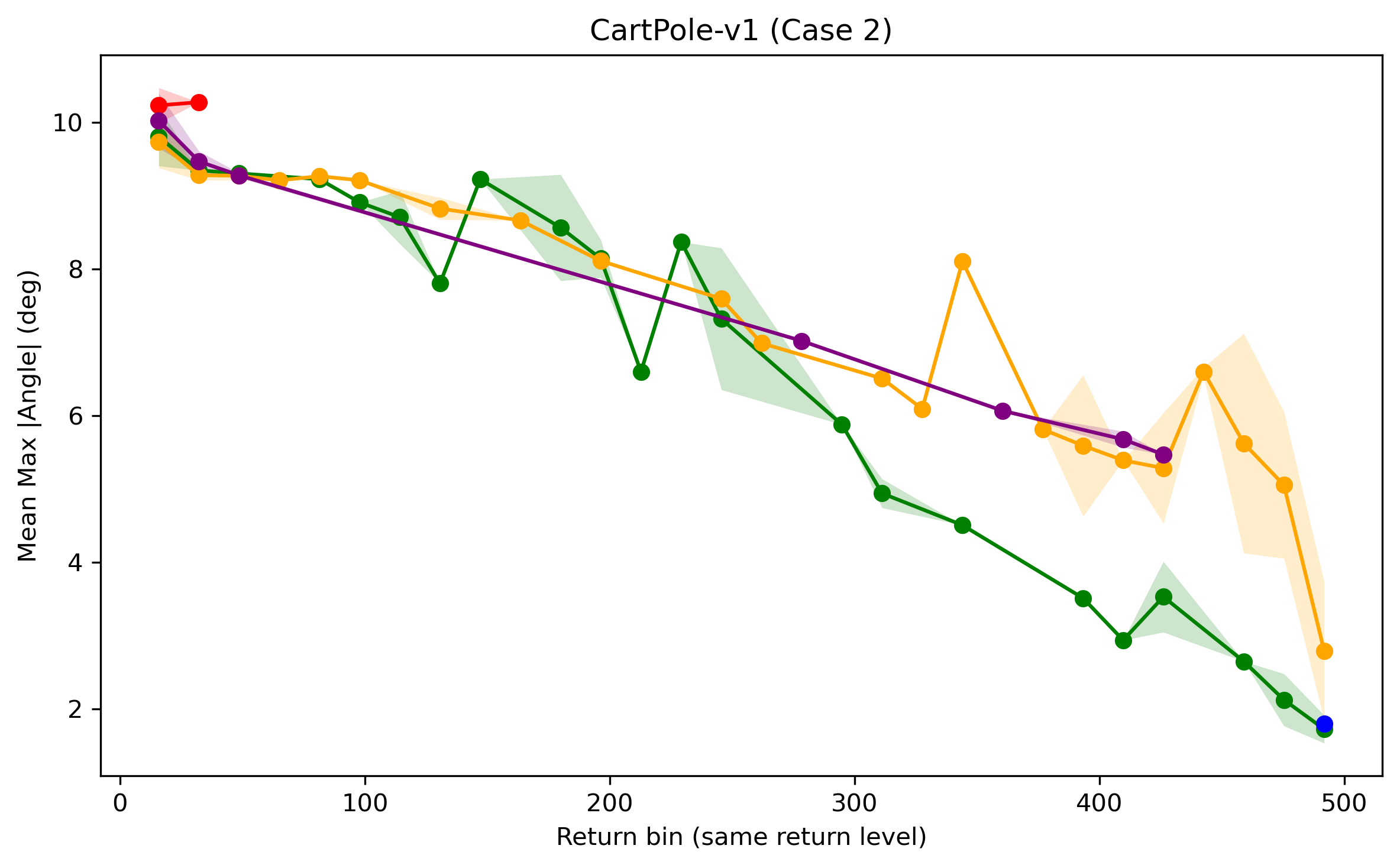}
    \caption{\centering Case 2:\\Discrete, DS safe}
    \label{fig6b}
    \end{subfigure}
    \begin{subfigure}{0.32\linewidth}
    \includegraphics[width=\linewidth]{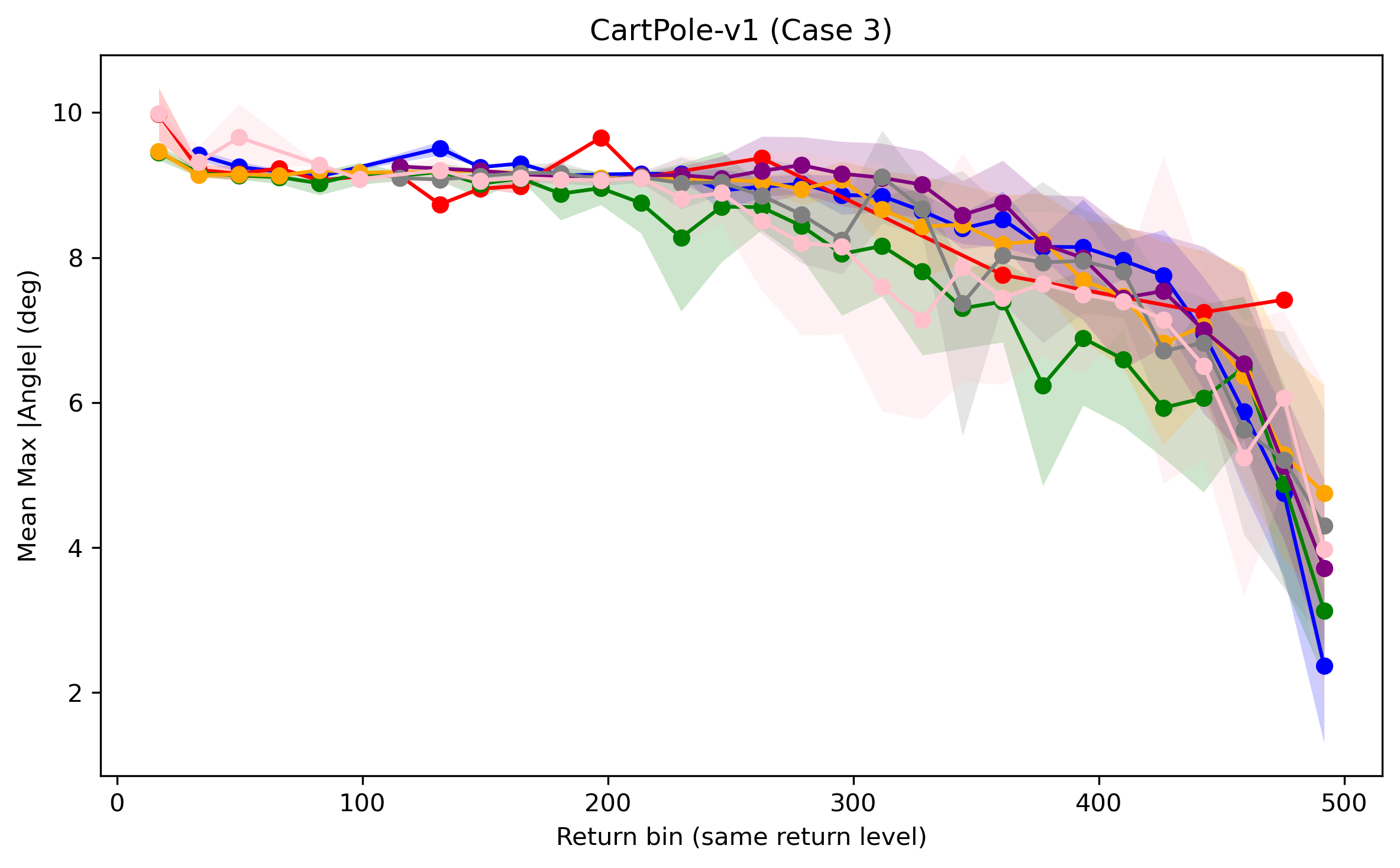}
    \caption{\centering Case 3:\\Continuous, HC safe, mean-noise}
    \label{fig6c}
    \end{subfigure}
    
    \begin{subfigure}{0.32\linewidth}
    \includegraphics[width=\linewidth]{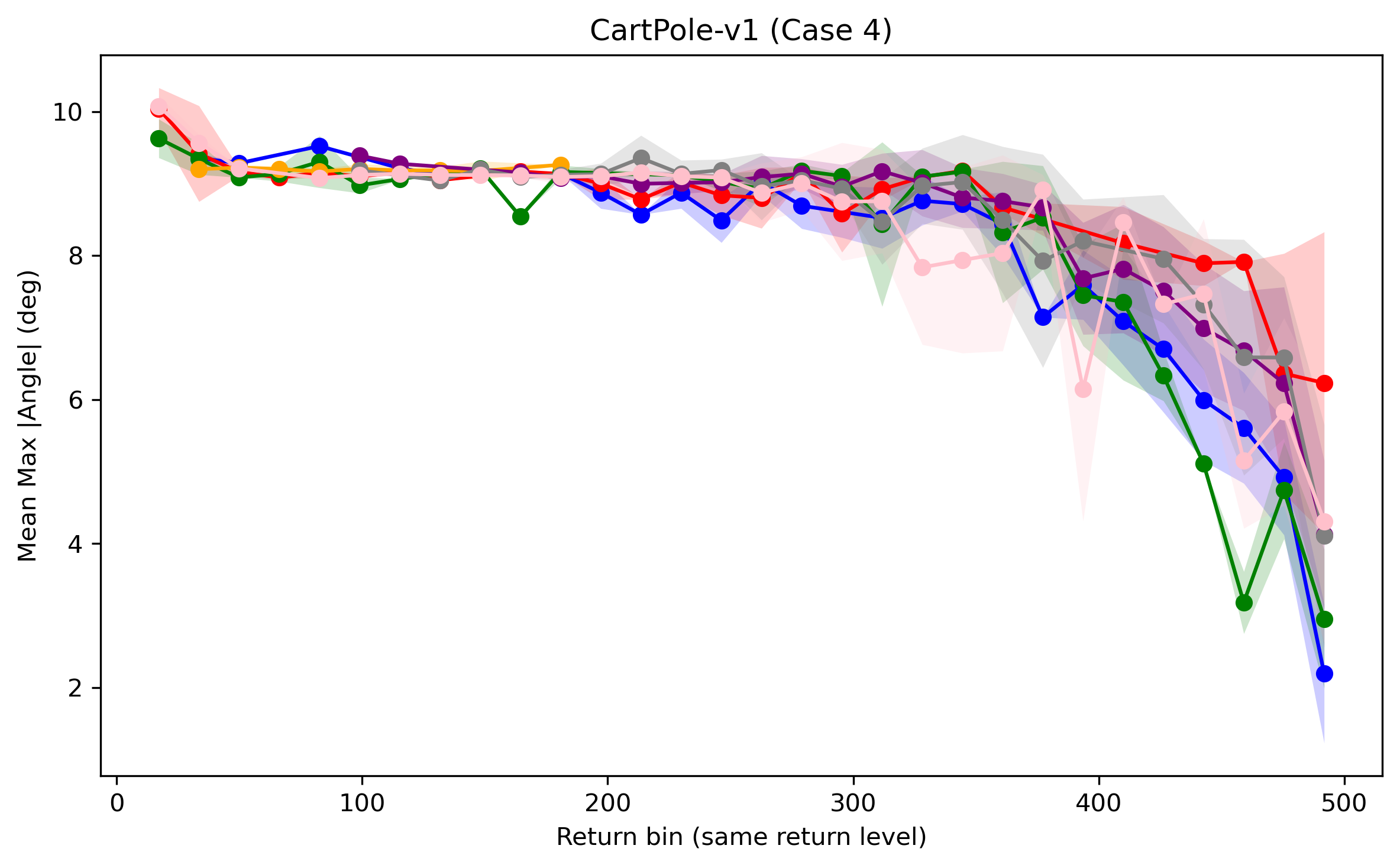}
    \caption{\centering Case 4:\\Continuous, HC safe, distributional}
    \label{fig6d}
    \end{subfigure}
    \begin{subfigure}{0.32\linewidth}
    \includegraphics[width=\linewidth]{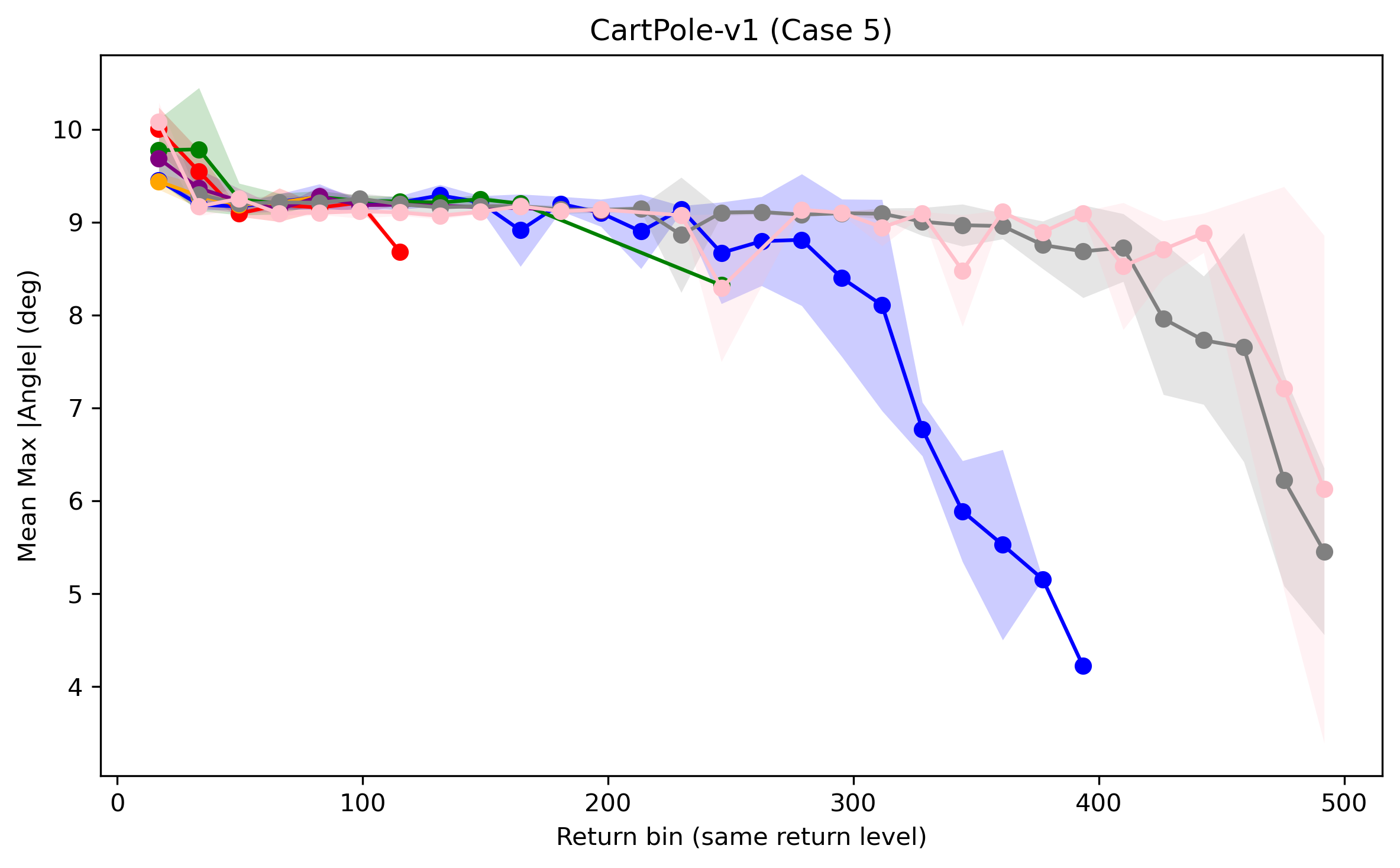}
    \caption{\centering Case 5:\\Continuous, DS safe, mean-noise}
    \label{fig6e}
    \end{subfigure}
    \begin{subfigure}{0.32\linewidth}
    \includegraphics[width=\linewidth]{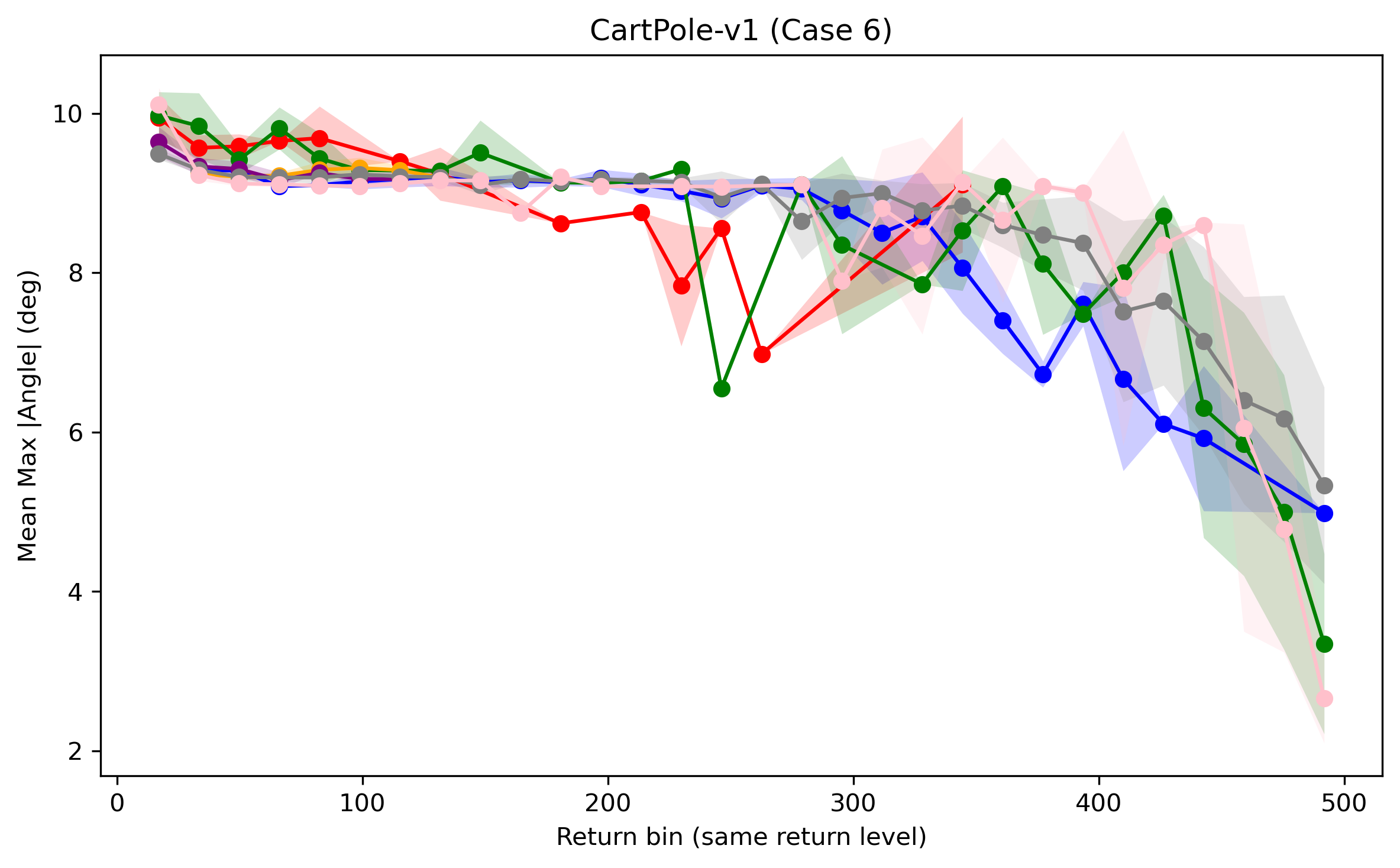}
    \caption{\centering Case 6:\\Continuous, DS safe, distributional}
    \label{fig6f}
    \end{subfigure}

    \caption{Same-return mean maximum pole angle curve on CartPole-v1 comparing the proposed method with baselines under different settings.}
    \label{fig6}
\end{figure}

To answer the last question, we group results into return bins and compare the mean maximum pole angle across models within each bin. Specifically, we aggregate returns from all models that were collected during training at every 100 episodes, partition them into bins defined between the maximum and minimum returns, and compute the average maximum pole angle within each bin. Bins without data indicate that no checkpoints achieve returns in that range. In~\cref{fig6}, the proposed method achieves lower maximum pole angles than baselines across most return ranges, which indicates safer behavior. While CQL attains low angles in~\cref{fig6b,fig6c,fig6d}, its overall performance is poor in~\cref{fig4b,fig4c,fig4d}. In contrast, compared to methods with similar returns (e.g., BCQ and TD3+BC in~\cref{fig4c,fig4d}), our method maintains comparable or lower angles and notably achieves the lowest angle at return 500 across all settings. Although all methods show decreasing angles at higher returns, ours produces safer trajectories at equivalent performance levels. In~\cref{fig6e,fig6f}, the proposed method appears safe but is overly conservative, leading to low performance in~\cref{fig4e,fig4f}.

\section{Conclusion}
In this work, we presented a safe-support Q-learning framework that enforces a strict safety requirement by restricting training to trajectories that remain within the safe set. By leveraging a stochastic behavior policy supported on the safe set, the proposed method enables exploration within a safe region without relying on unsafe interactions. We introduced a KL-regularized Bellman target to align the Q-function with the safe behavior distribution, promoting both safety and performance improvement. Theoretical analysis showed that the proposed Bellman operator admits a unique fixed point and that the induced policy corresponds to a KL-regularized optimal control solution. For continuous action spaces, we further proposed a practical policy extraction method based on parametric approximation. Empirical results demonstrated that the proposed method achieves stable learning, well-calibrated value estimates, and consistently safer behavior compared to baselines. In particular, it reduces risk and variance while maintaining competitive performance across a wide range of settings. Despite these advantages, the proposed method relies on a predefined behavior policy, which may limit its applicability in more complex environments and introduce a trade-off between safety and performance when the behavior policy is imperfect. Future work will focus on learning such behavior policies and extending the framework to offline RL scenarios with OOD challenges, as well as to more complex and high-dimensional environments.


\bibliographystyle{acm}
\bibliography{reference}

\section{Appendix}
\subsection{Proof of~\cref{prop1}} \label{prop1_proof}
To derive our result, we first introduce the following definition and lemma.

\begin{definition}[Conjugate function~\cite{boyd2004convex}] \label{def1}
For a given function $f:{\mathbb R}^n \to {\mathbb R}$, its conjugate, $f^*:{\mathbb R}^n \to {\mathbb R}$, is defined as
\[ f^*(y): = \sup_{x \in {\rm dom}(f)}({y^T}x - f(x)). \]
\end{definition}

\begin{lemma} \label{lem1}
Let us consider the function $f$ defined as
\begin{align*}
f(x) = \lambda D_{\rm KL}(x \| q) = \lambda \sum\limits_{i = 1}^n {{x_i}} \ln \left( {\frac{{{x_i}}}{{{q_i}}}} \right),
\end{align*}
where $\lambda$ is a regularization parameter and $x = (x_1,\ldots,x_n)$ and $q = (q_1,\ldots,q_n)$ are probability distributions over $\{1,2,\ldots,n\}$. Moreover, we assume that $q_i>0$ for all $i \in \{1,2,\ldots,n\}$. Then, its conjugate $f^*$ is given as
\[ f^*(y) = \max_{x \in {\rm dom}(f)} \left( {x^T}y - \lambda D_{\rm KL}(x \| q) \right) = \lambda \ln \left( \sum_{i= 1}^n q_i \exp \left( \frac{y_i}{\lambda} \right) \right), \]
where ${\rm dom}(f) = \Delta_n$.
\end{lemma}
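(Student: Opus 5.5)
The plan is to compute the conjugate directly by solving the concave maximization $\max_{x \in \Delta_n} \{x^T y - \lambda D_{\rm KL}(x\|q)\}$. I would proceed in three steps: guess the maximizer, show it attains the claimed value, and then certify optimality. For certification I would rather use an exact algebraic identity than KKT conditions, since the identity also handles the boundary of the simplex.

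\textbf{Candidate maximizer.} Define the Gibbs-type distribution
\[ x^\star_i := \frac{q_i \exp(y_i/\lambda)}{Z}, \qquad Z := \sum_{j=1}^n q_j \exp(y_j/\lambda). \]
Since $q_i>0$, we have $Z>0$ and $x^\star \in \Delta_n$ with all entries strictly positive.

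\textbf{Key identity.} For an arbitrary $x \in \Delta_n$ (using the convention $0\ln 0 = 0$), take logarithms to get $\ln(x^\star_i/q_i) = y_i/\lambda - \ln Z$. Adding and subtracting this inside the logarithm then gives
\[ x^T y - \lambda D_{\rm KL}(x\|q) = x^T y - \lambda \sum_i x_i \ln\frac{x_i}{x^\star_i} - \lambda \sum_i x_i\left(\frac{y_i}{\lambda} - \ln Z\right) = \lambda \ln Z - \lambda D_{\rm KL}(x\|x^\star). \]
The last equality uses $\sum_i x_i = 1$.

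\textbf{Conclusion.} By Gibbs' inequality, $D_{\rm KL}(x\|x^\star)\ge 0$, with equality if and only if $x = x^\star$. Hence the supremum over $\Delta_n$ equals $\lambda \ln Z$, it is attained uniquely at $x^\star$ (so the $\sup$ is a $\max$), and this is exactly the claimed formula for $f^*(y)$.

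\textbf{Main obstacle.} The only delicate point is the boundary of the simplex, where some $x_i = 0$. There the entries with $x_i = 0$ contribute zero to every sum, so the identity still holds term by term. Gibbs' inequality remains valid because $x^\star_i>0$ for all $i$, which follows from the assumption $q_i>0$. With this, the argument needs no Lagrange multipliers and no separate check that the boundary cannot host a better point.
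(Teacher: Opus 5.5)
Your proof is correct, but it takes a different route from the paper. The paper builds the Lagrangian with multipliers $h_0,h_1,\ldots,h_n$ and reads off $x_i = q_i\exp(y_i/\lambda)$ times a positive factor from stationarity. It then concludes $x_i>0$, so $h_i=0$ by complementary slackness, fixes $h_0$ by normalization, and substitutes the resulting Gibbs distribution back into $x^Ty-\lambda D_{\rm KL}(x\|q)$ to simplify to $\lambda\ln Z$. The advantage of that route is that it \emph{derives} the maximizer rather than guessing it. As written, however, it uses the KKT conditions as if they were necessary when it deduces $x_i>0$. Stationarity is not available at boundary points of the simplex, because $x_i\ln x_i$ is not differentiable at $x_i=0$. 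The argument becomes rigorous only once one adds that the objective is concave, so the interior KKT point is a global maximizer, and the paper never invokes this. Your decomposition $x^Ty-\lambda D_{\rm KL}(x\|q)=\lambda\ln Z-\lambda D_{\rm KL}(x\|x^\star)$ avoids all of this. It holds on all of $\Delta_n$, including the boundary, and gives the value without the substitution algebra. It also shows that the maximizer is unique, which is what \cref{prop2} implicitly needs when it identifies $\pi_\lambda^*$ as the argmax. One assumption you should state explicitly is $\lambda>0$: turning $D_{\rm KL}(x\|x^\star)\ge 0$ into the upper bound $\lambda\ln Z$ uses the sign of $\lambda$. The paper relies on this implicitly as well.
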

\begin{proof}
Utilizing~\cref{def1} and ${\rm dom}(f) = \Delta_n$, we obtain
\[ f^*(y) = \max_{x \in \Delta_n} \left( \sum_{i= 1}^n x_i y_i + \lambda \sum_{i= 1}^n x_i \ln(q_i) - \lambda \sum_{i= 1}^n x_i \ln(x_i) \right), \]
where $x \in \Delta_n$ satisfies $\sum_{i=1}^n x_i = 1$ and $x_i \ge 0$ for all $i \in \{1,2,\ldots,n\}$.

To solve the constrained optimization problem, we introduce the Lagrangian function
\[ L(x,h_0,h_1,\ldots,h_n) := \sum_{i= 1}^n x_i y_i + \lambda \sum_{i= 1}^n x_i \ln q_i - \lambda \sum_{i= 1}^n x_i \ln x_i + h_0 \left( 1-\sum_{i= 1}^n x_i \right) + \sum_{i = 1}^n (-x_i)h_i, \]
where $h_0,h_1,\ldots,h_n \in {\mathbb R}$ are Lagrangian multipliers associated with the equality and inequality constraints, respectively. The corresponding Karush–Kuhn–Tucker (KKT) conditions~\cite{boyd2004convex} are given by
\begin{itemize}
    \item Stationarity:
    \begin{equation} \label{appendix:KKT-eq1}
        \frac{\partial L}{\partial x_i} = y_i + \lambda \ln q_i - \lambda (1 + \ln x_i) - h_0 - h_i = 0, \quad \forall i.
    \end{equation}

    \item Primal feasibility:
    \begin{equation} \label{appendix:primal-eq1}
        \frac{\partial L}{\partial h_0} = 1 - \sum_{i=1}^n x_i = 0,
    \end{equation}
    \[ \frac{\partial L}{\partial h_i} = -x_i \le 0, \quad \forall i. \]

    \item Dual feasibility:
    \[ h_i \ge 0, \quad \forall i. \]

    \item Complementary slackness:
    \[ \sum_{i= 1}^n x_i h_i = 0.\]
\end{itemize}
From the stationarity condition~\eqref{appendix:KKT-eq1}, we obtain
\begin{equation} \label{appendix:KKT-eq2}
    x_i = \exp \left( \frac{y_i}{\lambda} + \ln q_i - \frac{h_0 + h_i}{\lambda} - 1 \right) = \exp \left( \frac{y_i}{\lambda} \right) q_i \exp \left( -\frac{h_0 + h_i}{\lambda} - 1 \right).
\end{equation}
Since the exponential function is strictly positive, it follows that $x_i > 0$ for all $i$, and hence $h_i=0$ for $i=\{1,2,\ldots,n\}$ by complementary slackness. Moreover, from the first primal feasibility condition~\eqref{appendix:primal-eq1}, we have 
\[ \sum_{i=1}^n x_i = \sum_{i=1}^n \exp \left( \frac{y_i}{\lambda} \right) q_i \exp \left( -\frac{h_0}{\lambda} - 1 \right) = 1, \]
which implies
\[ \exp \left( -\frac{h_0}{\lambda} - 1 \right) = \frac{1}{\sum_{i=1}^n \exp \left( \frac{y_i}{\lambda} \right) q_i}. \]
Then, substituting this into~\eqref{appendix:KKT-eq2} yields
\begin{equation} \label{appendix:KKT-eq3}
    x_i = \frac{\exp \left( \frac{y_i}{\lambda} \right) q_i}{\sum_{i=1}^n \exp \left( \frac{y_i}{\lambda} \right) q_i}.
\end{equation}
By defining
\[ g(x,y) := {x^T}y - \lambda D_{\rm KL}(x \| q) = {x^T}y - \lambda \sum_{i=1}^n x_i \ln \left( \frac{x_i}{q_i} \right), \]
and using~\eqref{appendix:KKT-eq3}, one gets
\begin{align*}
    g(x,y) 
    =& \frac{\sum_{j=1}^n y_j \exp \left( \frac{y_j}{\lambda} \right) q_j}{\sum_{i=1}^n \exp \left( \frac{y_i}{\lambda} \right) q_i} - \lambda \frac{\sum_{j=1}^n \exp \left( \frac{y_j}{\lambda} \right) q_j \ln \left( \frac{\exp \left( \frac{y_j}{\lambda} \right) q_j}{\sum_{l=1}^n \exp \left( \frac{y_l}{\lambda} \right) q_l} \cdot \frac{1}{q_i} \right)}{\sum_{i=1}^n \exp \left( \frac{y_i}{\lambda} \right) q_i} \\
    =& \frac{\sum_{j=1}^n y_j \exp \left( \frac{y_j}{\lambda} \right) q_j}{\sum_{i=1}^n \exp \left( \frac{y_i}{\lambda} \right) q_i} - \lambda \frac{\sum_{j=1}^n \exp \left( \frac{y_j}{\lambda} \right) q_j \cdot \frac{y_j}{\lambda}}{\sum_{i=1}^n \exp \left( \frac{y_i}{\lambda} \right) q_i} +
    \lambda \frac{\sum_{j=1}^n \exp \left( \frac{y_j}{\lambda} \right) q_j \ln \left( \sum_{l=1}^n \exp \left( \frac{y_l}{\lambda} \right) q_l \right)}{\sum_{i=1}^n \exp \left( \frac{y_i}{\lambda} \right) q_i} \\
    =& \lambda \ln \left( \sum_{i=1}^n \exp \left( \frac{y_i}{\lambda} \right) q_i \right),
\end{align*}
which completes the proof.
\end{proof}

We now prove~\cref{prop1}. 

\vspace{0.3cm}
\textbf{\cref{prop1} Restated. } The target value in~\eqref{eqn:idea_KL} can be equivalently expressed as
\[ y_k^{\rm safe} = r_{k+1} + {\bf 1}(s_{k+1})\gamma \lambda \ln \left( \sum_{a \in {\cal A}} \tilde{\pi}_b(a|s_{k+1}) \exp\left( \frac{Q(s_{k+1},a)}{\lambda} \right) \right). \]
\begin{proof}
Using the definition of the KL divergence, the target value in~\eqref{eqn:idea_KL} can be rewritten as
\[ y_k^{\rm safe} = r_{k+1} + {\bf 1}(s_{k+1})\gamma \max _{\pi \in \Delta_{|{\cal A}|}} \left\{ \sum_{a \in {\cal A}} \pi(a | s_{k+1})Q(s_{k+1},a) - \lambda \sum_{a \in {\cal A}} \pi(a | s_{k+1}) \ln\left( \frac{\pi(a | s_{k+1})}{\tilde{\pi}_b(a | s_{k+1})}\right) \right). \]
Let
\begin{align*}
    y 
    =& \left[ {\begin{array}{*{20}{c}}
        y_1 \\
        y_2 \\
         \vdots \\
        y_{|{\cal A}|}
        \end{array}} \right] 
    = \left[ {\begin{array}{*{20}{c}}
        \pi(1|s_{k+1}) \\
        \pi(2|s_{k+1}) \\
         \vdots \\
        \pi(|{\cal A}||s_{k+1})
        \end{array}} \right],
    x = \left[ {\begin{array}{*{20}{c}}
        x_1 \\
        x_2 \\
         \vdots \\
        x_{|{\cal A}|}
        \end{array}} \right] 
    = \left[ {\begin{array}{*{20}{c}}
        Q(s_{k+1},1) \\
        Q(s_{k+1},2) \\
         \vdots \\
        Q(s_{k+1},|{\cal A}|)
        \end{array}} \right], 
    q = \left[ {\begin{array}{*{20}{c}}
        q_1 \\
        q_2 \\
         \vdots \\
        q_{|{\cal A}|}
        \end{array}} \right] 
    = \left[ {\begin{array}{*{20}{c}}
        \tilde{\pi}_b(1|s_{k+1})\\
        \tilde{\pi}_b(2|s_{k+1})\\
         \vdots \\
        \tilde{\pi}_b(|{\cal A}||s_{k+1})
        \end{array}} \right].
\end{align*}
Then, the target value can be expressed as
\begin{align*}
    y_k^{\rm safe}
    =& r_{k+1} + {\bf 1}(s_{k+1})\gamma \max_{y \in \Delta_{|{\cal A}|}} \left( x^T y - \lambda \sum_{a \in {\cal A}} y_a \ln \left( \frac{y_a}{q_a} \right) \right) \\
    =& r_{k+1} + {\bf 1}(s_{k+1})\gamma \lambda \ln \left( \sum_{i=1}^n q_a \exp \left( \frac{x_a}{\lambda} \right) \right) \\
    =& r_{k+1} + {\bf 1}(s_{k+1}) \gamma \lambda \ln \left( \sum_{a \in {\cal A}} \tilde{\pi}_b(a | s_{k+1}) \exp \left( \frac{Q(s_{k+1},a)}{\lambda} \right) \right),
\end{align*}
where the second equality follows from~\cref{lem1}. This completes the proof.
\end{proof}

\subsection{Proof of~\cref{prop2}} \label{prop2_proof}
\textbf{\cref{prop2} Restated. }
Consider the Bellman equation $Q = T_\lambda Q$, where the operator $T_\lambda: {\mathbb R}^{|{\cal S}||{\cal A}|} \to {\mathbb R}^{|{\cal S}||{\cal A}|}$ is defined as
\[ (T_\lambda Q)(s,a) := {\mathbb E} \left[ \left. r_{k+1} + \gamma \lambda \ln \left( \sum_{a' \in {\cal A}} {\tilde{\pi}_b(a' | s_{k+1}) \exp \left( \frac{Q(s_{k+1},a')}{\lambda} \right)} \right) \right| s_k=s,a_k=a \right]. \]
Then, $T_\lambda$ admits a unique solution $Q_{\lambda}^*$. Moreover, the corresponding optimal policy is given by 
\[ \pi_\lambda^*(a | s) = \frac{\tilde{\pi}_b(a | s) \exp \left( Q_\lambda^*(s,a) / \lambda \right)}{\sum_{a' \in {\cal A}} \tilde{\pi}_b(a' | s) \exp \left(Q_\lambda^*(s,a') / \lambda \right)}. \]
\begin{proof}
To establish the existence and uniqueness of the solution to the Bellman equation, it suffices to show that $T_\lambda$ is a contraction mapping. The proof follows standard arguments in~\cite{dai2018sbeed,fox2016taming}. In particular, for any $Q_1, Q_2 \in {\mathbb R}^{|{\cal S} \times {\cal A}|}$, we have 
\begin{align*}
    & \left\| T_\lambda Q_1 - T_\lambda Q_2 \right\|_\infty \\
    =& \left\| \gamma \max_{\pi \in \Delta_{|{\cal A}|}} {\mathbb E} \left[ \sum_{a \in {\cal A}} \pi(a | s') Q_1(s',a) - \lambda D_{\rm KL} \left( \pi(\cdot | s') \| \tilde{\pi}_b(\cdot | s') \right) \right] \right. \\
    &- \left. \gamma \max_{\pi \in \Delta_{|{\cal A}|}} {\mathbb E} \left[ \sum_{a \in {\cal A}} \pi(a | s') Q_2(s',a) - \lambda D_{\rm KL} \left( \pi(\cdot | s') \| \tilde{\pi}_b(\cdot | s') \right) \right] \right\|_\infty \\
    \le& \gamma \left\| \max_{\pi \in \Delta_{|{\cal A}|}} \left\{ {\mathbb E} \left[ \sum_{a \in {\cal A}} \pi(a | s') Q_1(s',a) - \lambda D_{\rm KL} \left( \pi(\cdot | s') \| \tilde{\pi}_b(\cdot | s') \right) \right] \right. \right. \\
    &- \left. \left. {\mathbb E} \left[ \sum_{a \in {\cal A}} \pi(a | s') Q_2(s',a) - \lambda D_{\rm KL} \left( \pi(\cdot | s') \| \tilde{\pi}_b(\cdot | s') \right) \right] \right\} \right\|_\infty \\
    =& \gamma \left\| \max_{\pi \in \Delta_{|{\cal A}|}} {\mathbb E} \left[ \sum_{a \in {\cal A}} \pi(a | s') \left\{ Q_1(s',a) - Q_2(s',a)\right\} \right] \right\|_\infty \\
    =& \gamma \left\| \max_{a \in {\cal A}} {\mathbb E} \left[ Q_1(s',a) - Q_2(s',a) \right] \right\|_\infty \\
    \le& \gamma \left\| Q_1 - Q_2 \right\|_\infty ,
\end{align*}
where the first equality follows from~\eqref{eqn:idea_KL}, and the last inequality follows from the non-expansiveness of the max operator. Therefore, the corresponding Bellman equation admits a unique solution.

For the second statement, let $Q_{\rm safe}^*$ denote the unique solution to the Bellman equation, i.e., $Q_{\rm safe}^*(s,a) = (T Q_{\rm safe}^*)(s,a)$, which implies 
\begin{align*}
    & Q_{\rm safe}^*(s,a) \\
    =& {\mathbb E} \left[ r_{k+1} + \gamma \max_{\pi \in \Delta_{|{\cal A}|}} \left\{ \left. \sum_{a \in {\cal A}} \pi(a | s_{k+1}) Q_{\rm safe}^*(s_{k+1},a) - \lambda D_{\rm KL} \left( \pi(\cdot | s_{k+1}) \| \tilde{\pi}_b(\cdot | s_{k+1}) \right) \right\} \right| s_0=s, a_0=a \right].
\end{align*}
Then, the corresponding policy is given by
\[ \pi_{\rm safe}^*(\cdot | s) = \arg\max_{\pi \in \Delta_{|{\cal A}|}} \left\{ \sum_{a \in {\cal A}} \pi(a | s) Q_{\rm safe}^*(s,a) - \lambda D_{\rm KL} \left( \pi(\cdot | s) \| \tilde{\pi}_b(\cdot | s) \right) \right\}. \]
The $\arg\max$ can be derived following the same steps as in the proof of~\cref{prop1}. This completes the proof.
\end{proof}

\subsection{Derivation of the Continuous Policy Objective} \label{cont pol obj}
To simplify the objective~\eqref{cont policy obj1}, we expand the KL divergence as
\begin{equation} \label{cont policy obj}
\begin{aligned}
    L(\phi) =& {\mathbb E}_{s \sim {\cal N}(0,\sigma_1^2 I)} \left[ \int_{a \in {\cal A}} \pi_\phi(a | s) \left\{ \ln \pi_\phi(a | s) + \ln \left( \int_{u \in {\cal A}} \tilde{\pi}_b(u | s) \exp \left( \frac{Q_\theta(s,u)}{\lambda} \right) du \right) \right. \right. \\
    &- \left.\left. \ln \left( \tilde{\pi}_b(a | s) \exp \left( \frac{Q_\theta(s,a)}{\lambda} \right) \right) \right\} da \right],
\end{aligned}
\end{equation}
where the constant term does not depend on $\phi$ and can therefore be ignored during optimization.

Then, we parameterize $\pi_\phi$ as a Gaussian policy:
\[ \pi_\phi(a | s) = {\cal N}(a;\mu_\phi(s),\sigma_2^2 I), \]
where $\mu_\phi(s)$ and $\sigma_2 > 0$ denote the mean and standard deviation, respectively. Using the reparameterization trick,
\[ a = \mu_\phi(s) + w, \quad w \sim {\cal N}(0,\sigma_2^2 I), \]
the objective in~\eqref{cont policy obj} can be rewritten as
\begin{equation} \label{cont policy obj2}
    L(\phi) = {\mathbb E}_{s \sim {\cal N}(0,\sigma_1^2I), w \sim {\cal N}(0,\sigma_2^2 I)} \left[ \ln \pi_\phi \left(\mu_\phi(s) + w | s \right) - \ln \tilde{\pi}_b \left( \mu_\phi(s) + w | s \right) - \frac{Q_\theta(s,\mu_\phi(s) + w)}{\lambda} \right].
\end{equation}
Here, the first term in~\eqref{cont policy obj2} can be written as
\begin{align*}
    \ln \pi_\phi \left( \mu_\phi(s) + w | s \right)
    =& \ln {\cal N} \left( \mu_\phi(s) + w;\, \mu_\phi(s), \sigma_2^2 I \right) \\
    =& \ln \left\{ \frac{1}{(2\pi)^{d/2} \sigma_2^d} \exp \left( -\frac{1}{2\sigma_2^2} \left\| \mu_\phi(s) + w - \mu_\phi(s) \right\|_2^2 \right) \right\} \\
    =& -\frac{d}{2}\ln(2\pi) - d \ln \sigma_2 - \frac{1}{2\sigma_2^2} \|w\|_2^2,
\end{align*}
where $d$ denotes the action dimension. This term is constant with respect to $\phi$ and can therefore be omitted during optimization.

To simplify the objective~\eqref{cont policy obj2}, we consider two cases depending on the form of the behavior policy.
\paragraph{Case 1: mean-noise behavior policy}
Suppose that the behavior policy is given by~\eqref{mean-noise behavior policy}. Then, the second term in~\eqref{cont policy obj2} can be written as
\begin{align*}
    \ln \tilde{\pi}_b(\mu_\phi(s) + w | s)
    =& \ln {\cal N} \left(\mu_\phi(s) + w;\, \mu_b(s), \sigma_M^2 I \right) \\
    =& \ln \left\{ \frac{1}{(2\pi)^{d/2} \sigma_M^d} \exp \left( -\frac{1}{2\sigma_M^2} \left\| \mu_\phi(s) + w - \mu_b(s) \right\|_2^2 \right) \right\} \\
    =& -\frac{d}{2}\ln(2\pi) - d \ln \sigma_M - \frac{1}{2\sigma_M^2} \left\| \mu_\phi(s) + w - \mu_b(s) \right\|_2^2.
\end{align*}
By removing terms that do not depend on $\phi$, the loss function~\eqref{cont policy obj2} reduces to
\begin{equation} \label{cont policy obj_case1}
    L(\phi) = {\mathbb E}_{s \sim {\cal N}(0,\sigma_1^2I), w \sim {\cal N}(0,\sigma_2^2 I)} \left[ \frac{1}{2\sigma_M^2} \left\| \mu_\phi(s) + w - \mu_b(s) \right\|_2^2 - \frac{Q_\theta(s,\mu_\phi(s)+w)}{\lambda} \right].
\end{equation}
This objective encourages the policy to maximize the Q-value while remaining close to the behavior policy $\mu_b(s)$, where the parameter $\lambda$ controls the trade-off between these two objectives.

In practice, the expectation in~\eqref{cont policy obj_case1} is approximated using Monte Carlo sampling:
\begin{equation}
    \tilde L(\phi) := \sum_{i=1}^p \sum_{j=1}^q \left[ \frac{1}{2\sigma_M^2} \left\| \mu_\phi(s_i) + w_j - \mu_b(s_i) \right\|_2^2 - \frac{Q_\theta(s_i,\mu_\phi(s_i)+w_j)}{\lambda} \right],
\end{equation}
where $s_i \sim {\cal N}(0,\sigma_1^2 I)$ and $w_j \sim {\cal N}(0,\sigma_2^2 I)$. The parameters $\phi$ are updated via gradient descent. The overall procedure is summarized in~\cref{alg5} of the Appendix, which is described for the online setting. For the offline setting, the replay buffer is replaced by a safe offline dataset, from which mini-batches are used to train both the Q-function and the policy.

\paragraph{Case 2: distributional behavior policy}
Suppose that the behavior policy is given by~\eqref{distributional behavior policy}. Then, the loss function~\eqref{cont policy obj2} becomes
\[ L(\phi) = {\mathbb E}_{s \sim {\cal N}(0,\sigma_1^2 I), w \sim {\cal N}(0,\sigma_2^2 I)} \left[ -\ln \tilde{\pi}_b(\mu_\phi(s) + w | s) - \frac{Q_\theta(s,\mu_\phi(s) + w)}{\lambda} \right]. \]
This objective encourages the policy to assign high probability under the behavior policy while maximizing the Q-value.

The expectation is approximated using Monte Carlo sampling as
\begin{equation}
\tilde L(\phi) := \sum_{i=1}^p \sum_{j=1}^q \left[ -\ln \tilde{\pi}_b(\mu_\phi(s_i) + w_j | s_i) -\frac{Q_\theta(s_i,\mu_\phi(s_i) + w_j)}{\lambda} \right],
\end{equation}
where $s_i \sim {\cal N}(0,\sigma_1^2 I)$ and $w_j \sim {\cal N}(0,\sigma_2^2 I)$. The parameters $\phi$ are updated via gradient descent. The overall procedure is summarized in~\cref{alg6} of the Appendix, which is described for the online setting. For the offline setting, the replay buffer is replaced by a safe offline dataset, from which mini-batches are used to train both the Q-function and the policy.

\subsection{Experimental Details} \label{exp setting}
\begin{figure}[ht]
    \centering
    \begin{subfigure}{0.25\linewidth}
    \includegraphics[width=\linewidth]{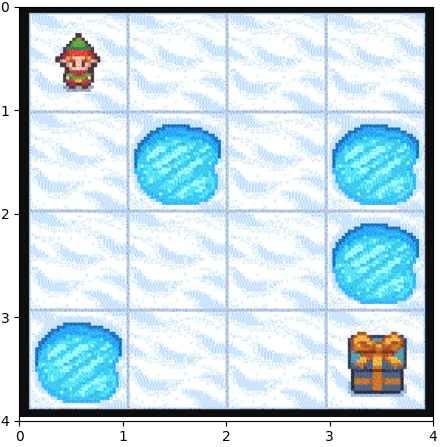}
    \caption{FrozenLake environment}
    \label{fig1a}
    \end{subfigure}
     \begin{subfigure}{0.5\linewidth}
    \includegraphics[width=\linewidth]{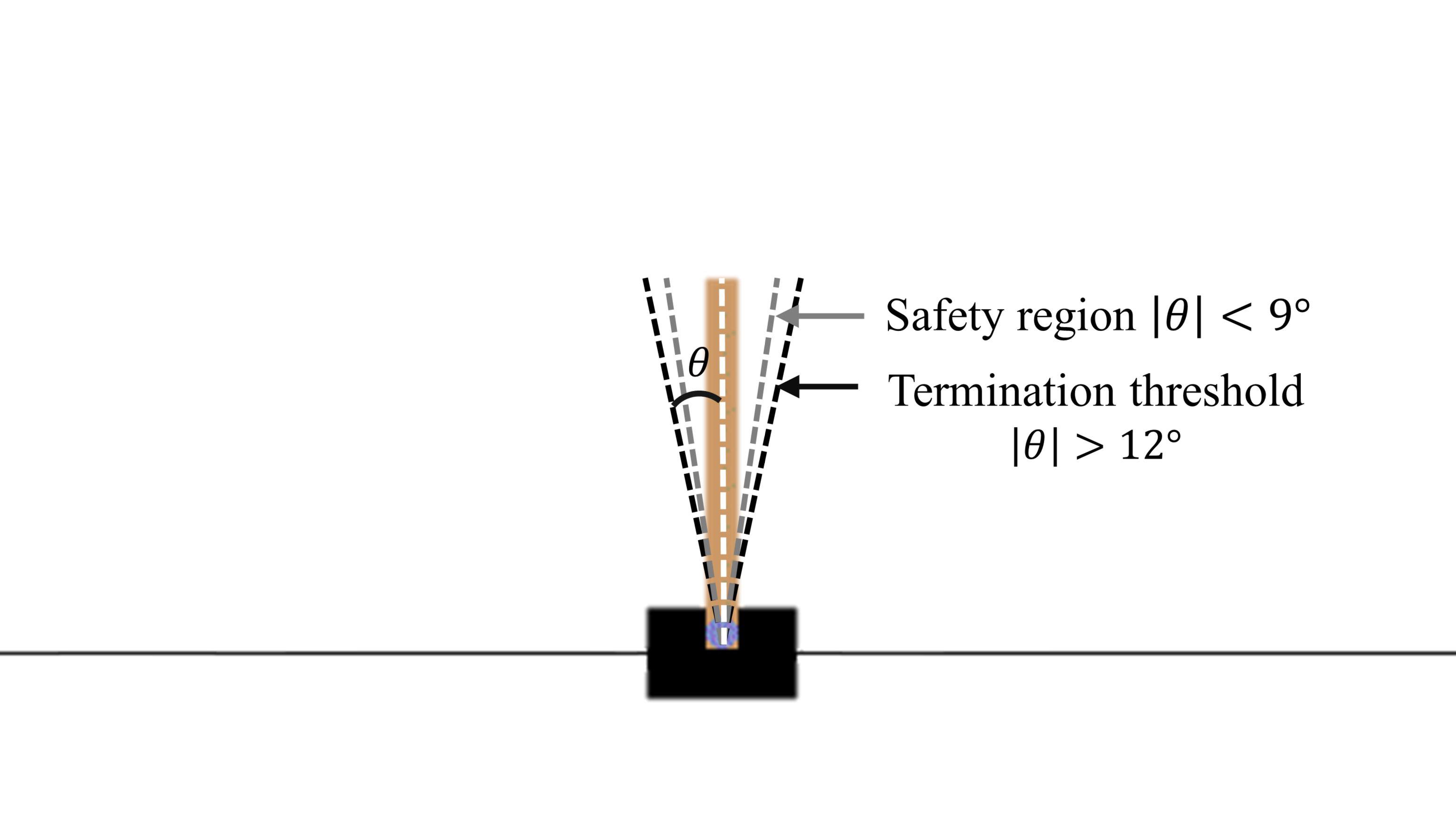}
    \caption{CartPole environment}
    \label{fig1b}
    \end{subfigure}
    
    \caption{Overview of the environments used in our experiments.}
\end{figure}

FrozenLake is a simple grid-world environment with discrete states and actions, where the agent navigates to reach a goal while avoiding unsafe hole states~\cite{brockman2016openai}. The environment has a fixed $4\times 4$ layout as shown in~\cref{fig1a}, and the agent receives a reward of $+1$ upon reaching the goal, with no penalty for falling into a hole. In this environment, the HC safe is handcrafted by assigning zero probability to actions leading to holes and a higher probability to optimal paths. For DS safe, the behavior policy is learned from the safe dataset $D_{\text{safe}}$, which contains only safe trajectories.

In the CartPole environment, the agent needs to balance a pole on a moving cart~\cite{brockman2016openai}. We evaluate our method in both discrete and continuous action settings: the agent either moves left or right, or applies a continuous force in the range $(-1, 1)$. The agent receives a reward $+1$ at each time step while the pole remains balanced, and the episode terminates when the pole angle exceeds $\pm 12^\circ$, the cart position exceeds the allowed bounds, or after $500$ steps. As the environment does not explicitly define unsafe states, we consider states with pole angles exceeding $\pm 9^\circ$ as unsafe, as illustrated in~\cref{fig1b}. For HC safe, we design a behavior policy based on a PID controller and train a neural network to imitate it, introducing stochasticity for balanced exploration. For DS safe, the behavior policy is trained from the safe dataset, which contains only safe trajectories collected by the PID controller.

In the CartPole environment, we consider DQN, CQL, BCQ, IQL, TD3+BC, and BEAR as baselines for a fair comparison under our behavior policy setting. Note that TD3+BC and BEAR are only applicable to continuous action spaces.
\begin{itemize}
    \item DQN: standard deep Q-learning without offline-specific regularization; serves as a basic off-policy baseline and does not address distributional shift.

    \item CQL: learns conservative Q-values by penalizing unseen actions, reducing overestimation in offline settings.

    \item BCQ: restricts action selection to the support of the dataset, preventing extrapolation error.

    \item IQL: avoids out-of-distribution actions via value regression and advantage-weighted behavior cloning.

    \item TD3+BC: combines policy gradient updates with behavior cloning to stabilize offline learning.

    \item BEAR: constrains the policy to remain close to the behavior policy via divergence constraint, reducing distributional shift.
\end{itemize}

\subsection{Safety Evaluation across Return Levels}
\begin{figure}[ht]
    \centering
    \begin{subfigure}{0.8\linewidth}
    \includegraphics[width=\linewidth]{figure/experiment_cartpole/legend_overestimation.png}
    \end{subfigure}
    
    \begin{subfigure}{0.32\linewidth}
    \includegraphics[width=\linewidth]{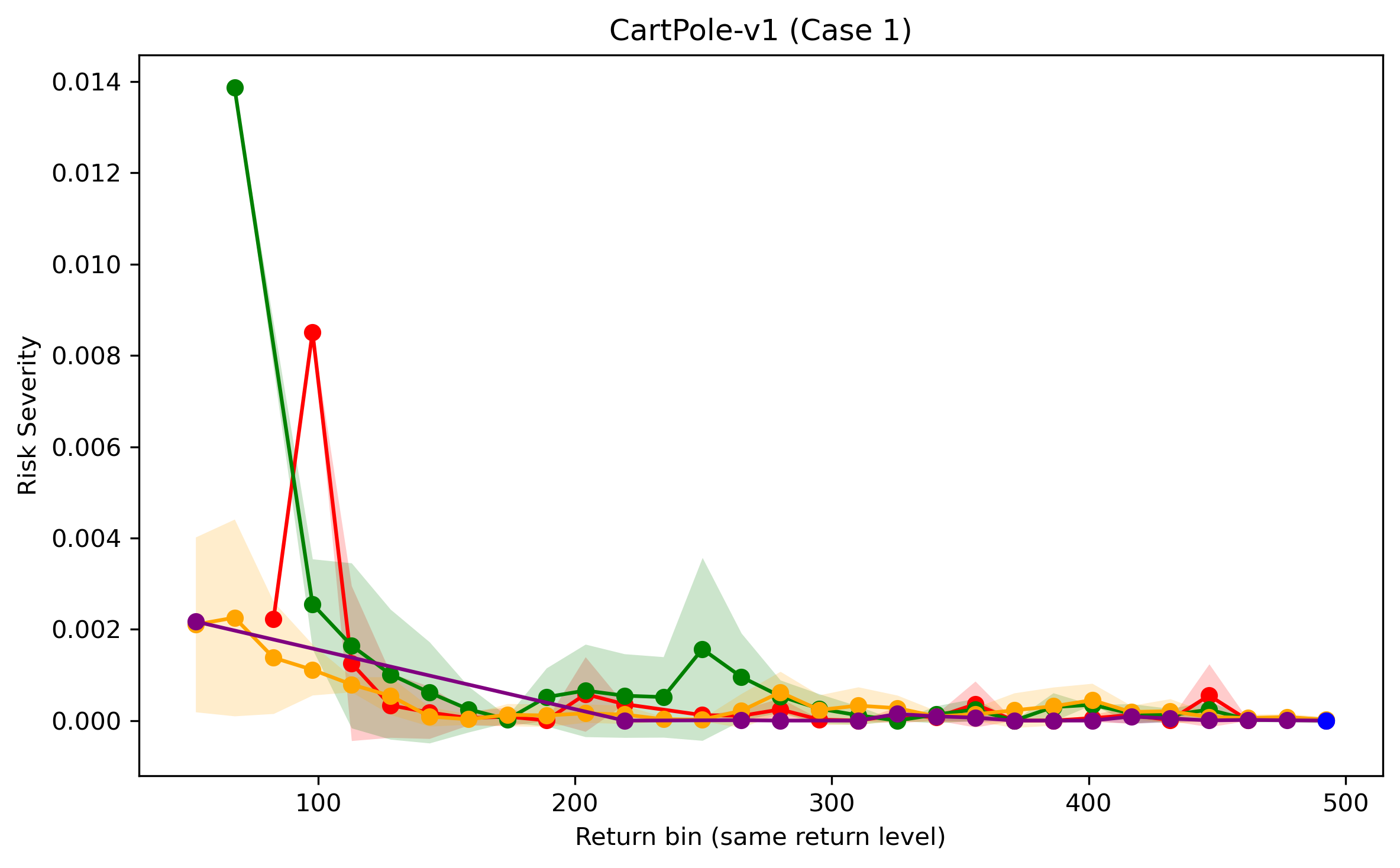}
    \caption{\centering Case 1:\\Discrete, HC safe}
    \end{subfigure}
    \begin{subfigure}{0.32\linewidth}
    \includegraphics[width=\linewidth]{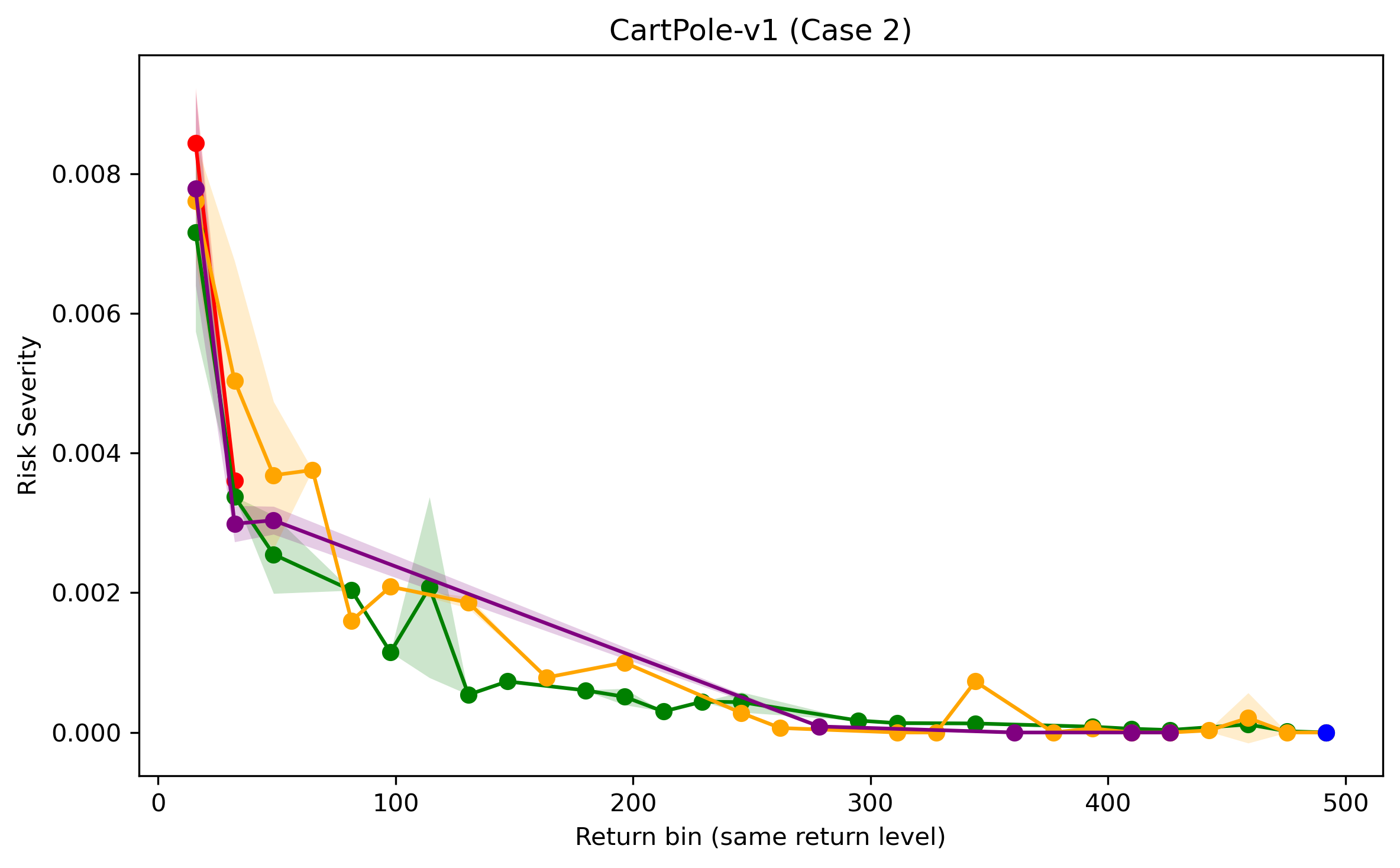}
    \caption{\centering Case 2:\\Discrete, DS safe}
    \end{subfigure}
    \begin{subfigure}{0.32\linewidth}
    \includegraphics[width=\linewidth]{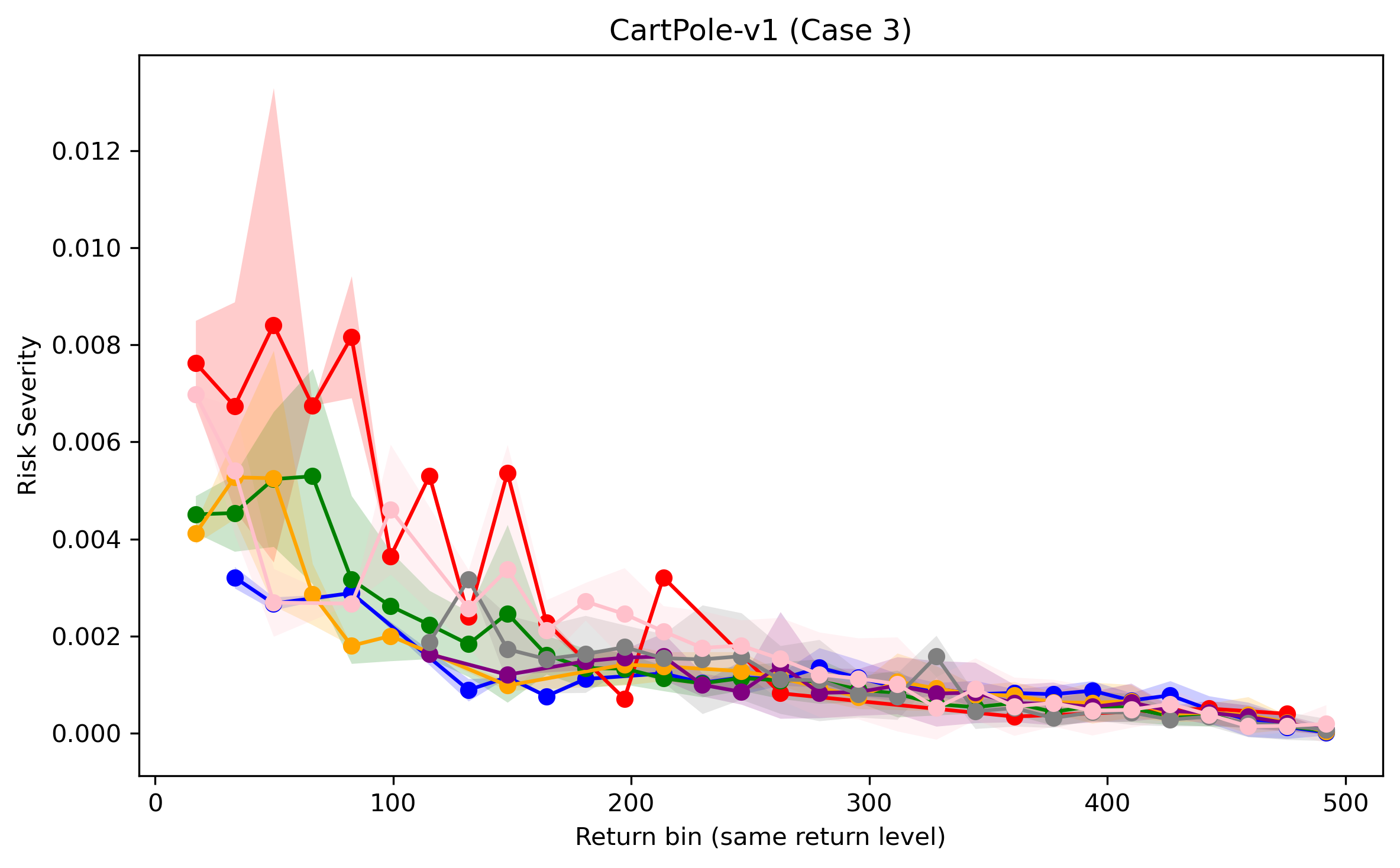}
    \caption{\centering Case 3:\\Continuous, HC safe, mean-noise}
    \end{subfigure}
    
    \begin{subfigure}{0.32\linewidth}
    \includegraphics[width=\linewidth]{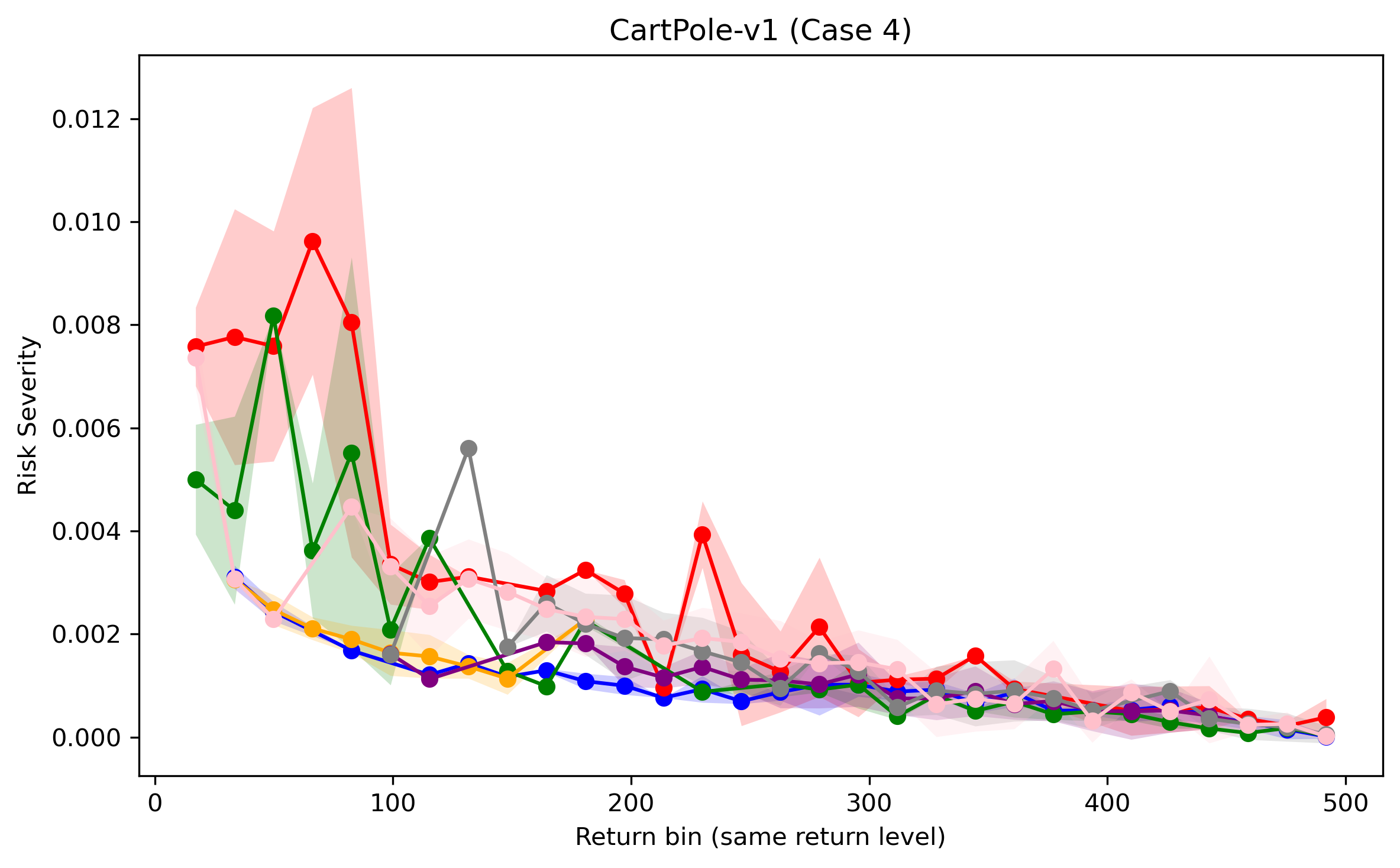}
    \caption{\centering Case 4:\\Continuous, HC safe, distributional}
    \end{subfigure}
    \begin{subfigure}{0.32\linewidth}
    \includegraphics[width=\linewidth]{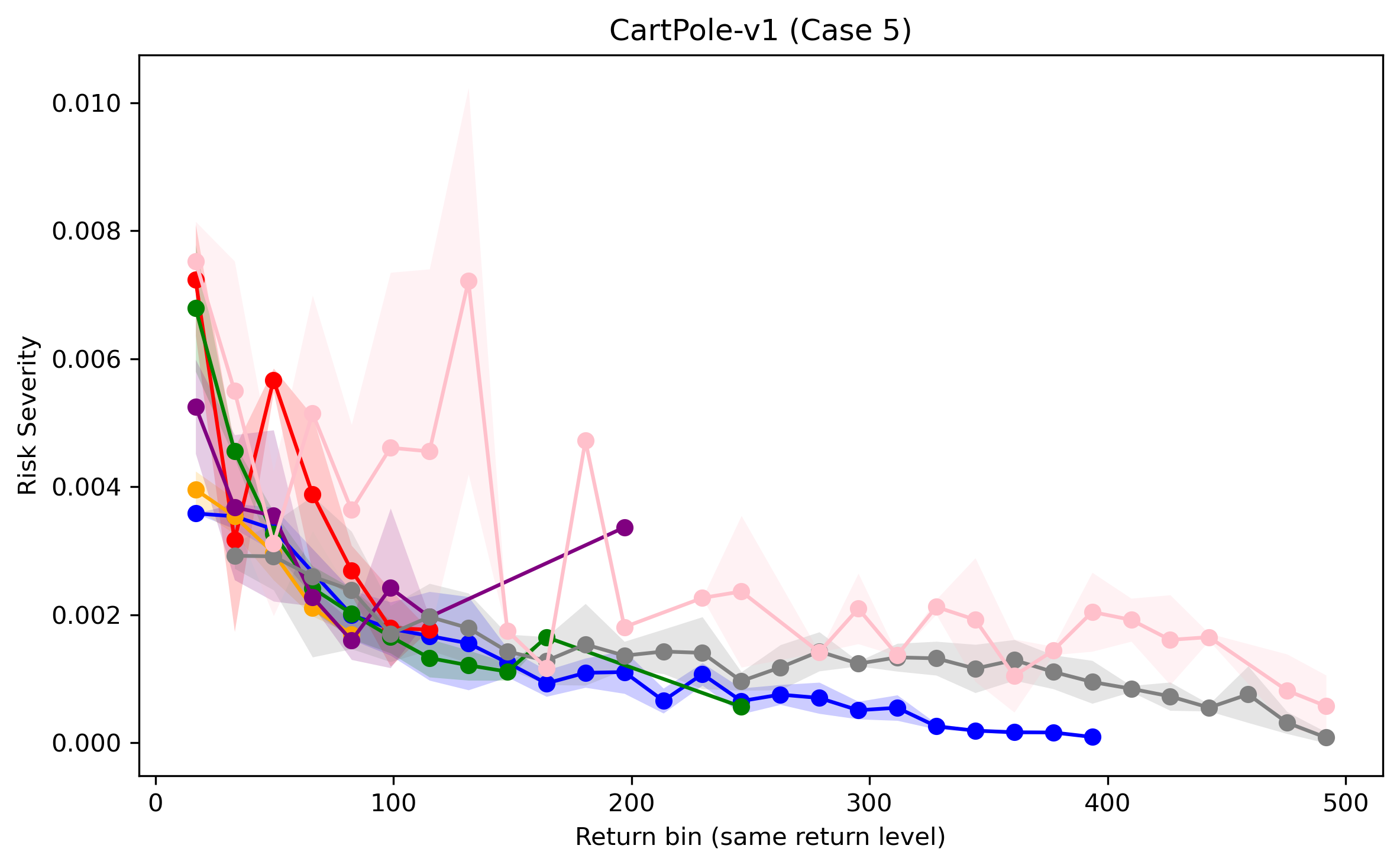}
    \caption{\centering Case 5:\\Continuous, DS safe, mean-noise}
    \end{subfigure}
    \begin{subfigure}{0.32\linewidth}
    \includegraphics[width=\linewidth]{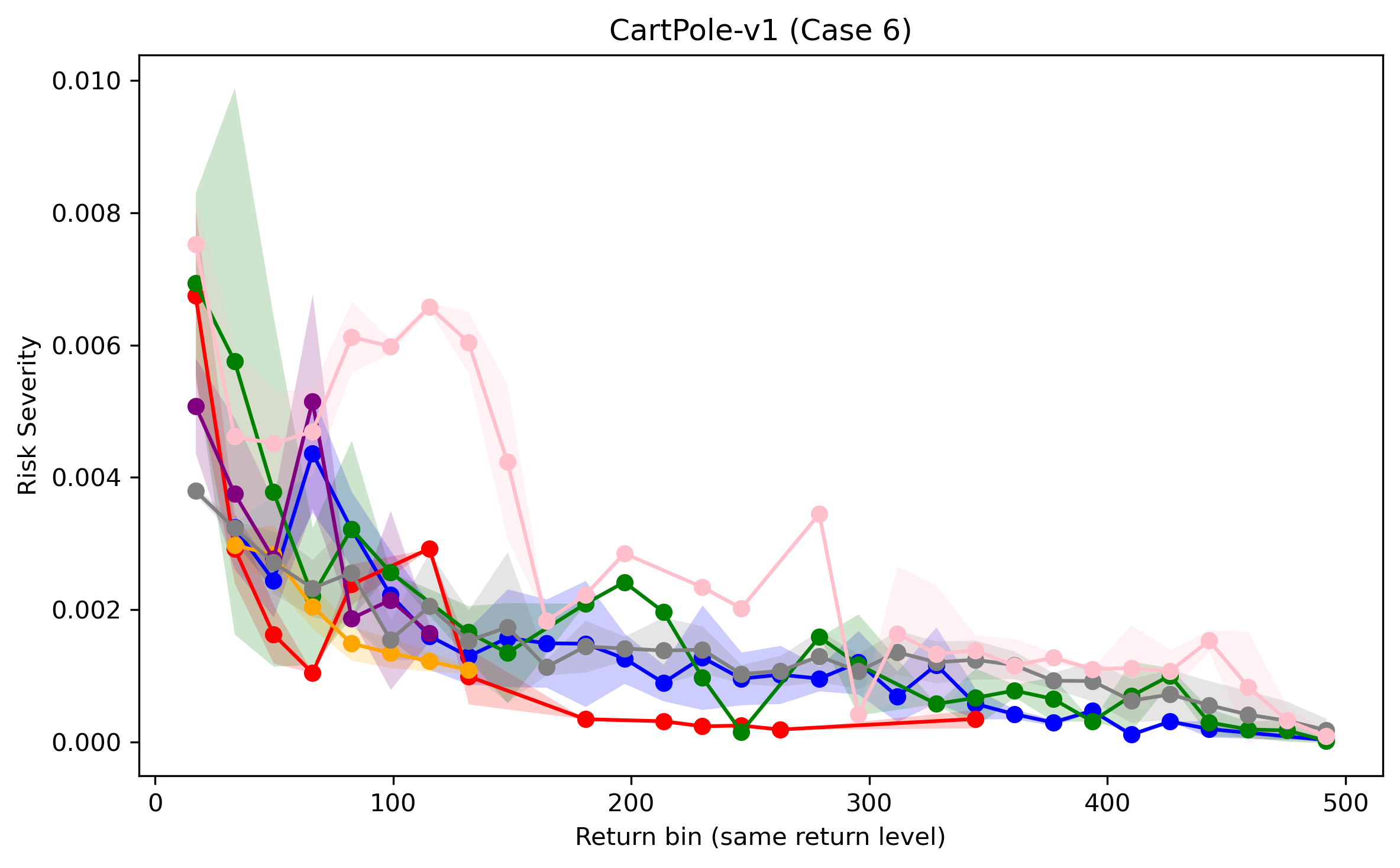}
    \caption{\centering Case 6:\\Continuous, DS safe, distributional}
    \end{subfigure}

    \caption{Same-return risk severity curve on CartPole-v1 comparing the proposed method with baselines under different settings.}
    \label{fig7}
\end{figure}

\begin{figure}[ht]
    \centering
    \begin{subfigure}{0.8\linewidth}
    \includegraphics[width=\linewidth]{figure/experiment_cartpole/legend_overestimation.png}
    \end{subfigure}
    
    \begin{subfigure}{0.32\linewidth}
    \includegraphics[width=\linewidth]{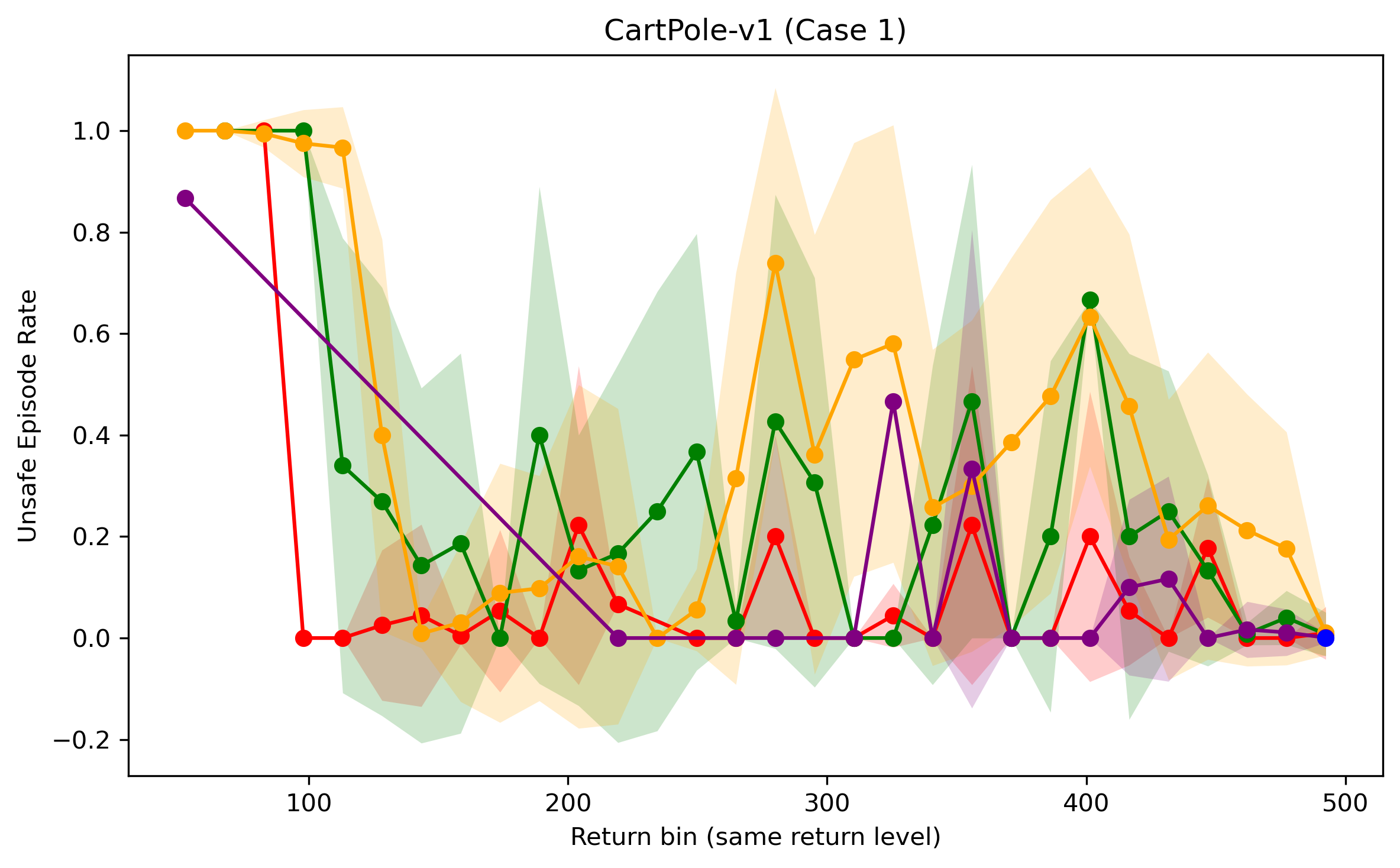}
    \caption{\centering Case 1:\\Discrete, HC safe}
    \end{subfigure}
    \begin{subfigure}{0.32\linewidth}
    \includegraphics[width=\linewidth]{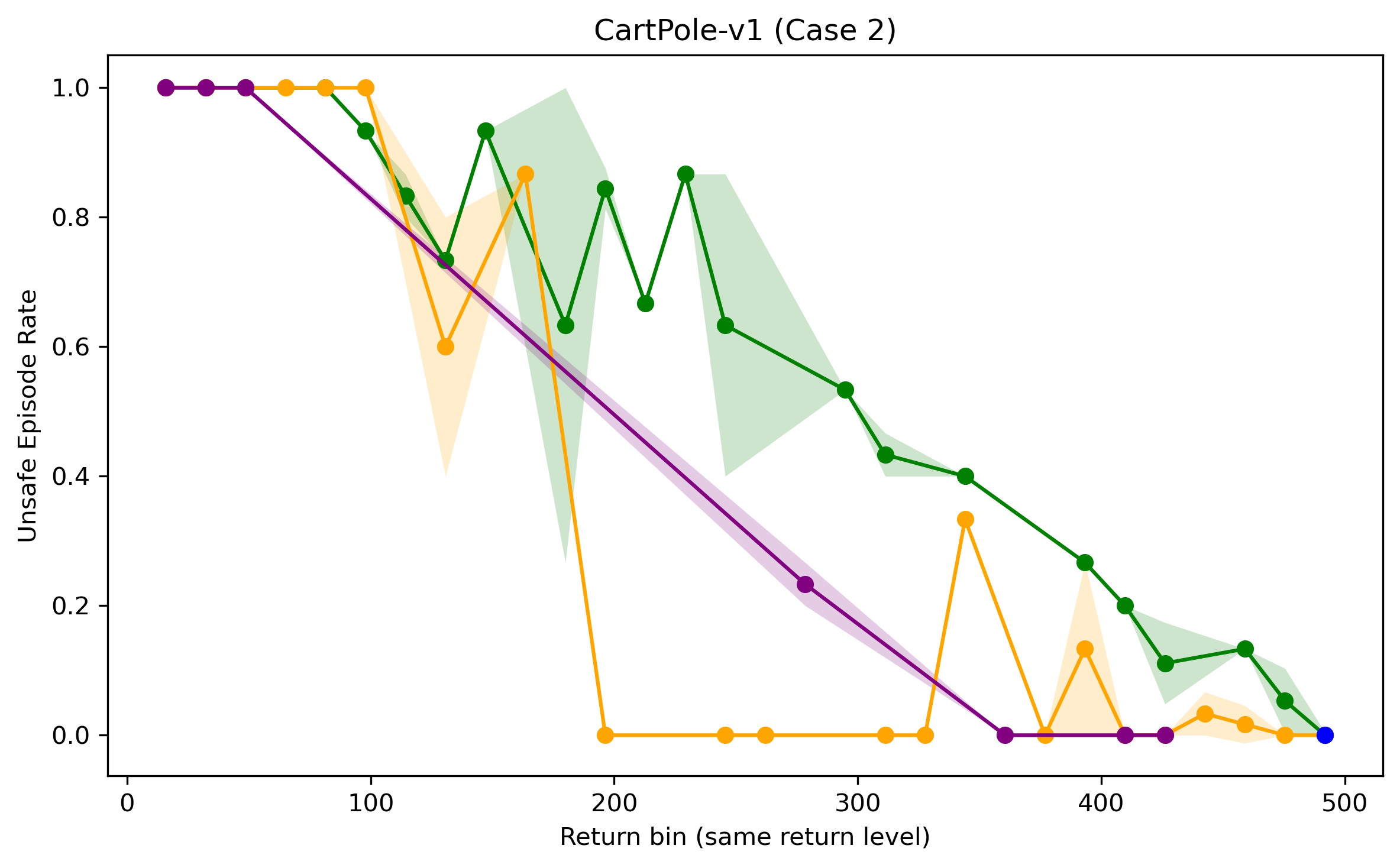}
    \caption{\centering Case 2:\\Discrete, DS safe}
    \end{subfigure}
    \begin{subfigure}{0.32\linewidth}
    \includegraphics[width=\linewidth]{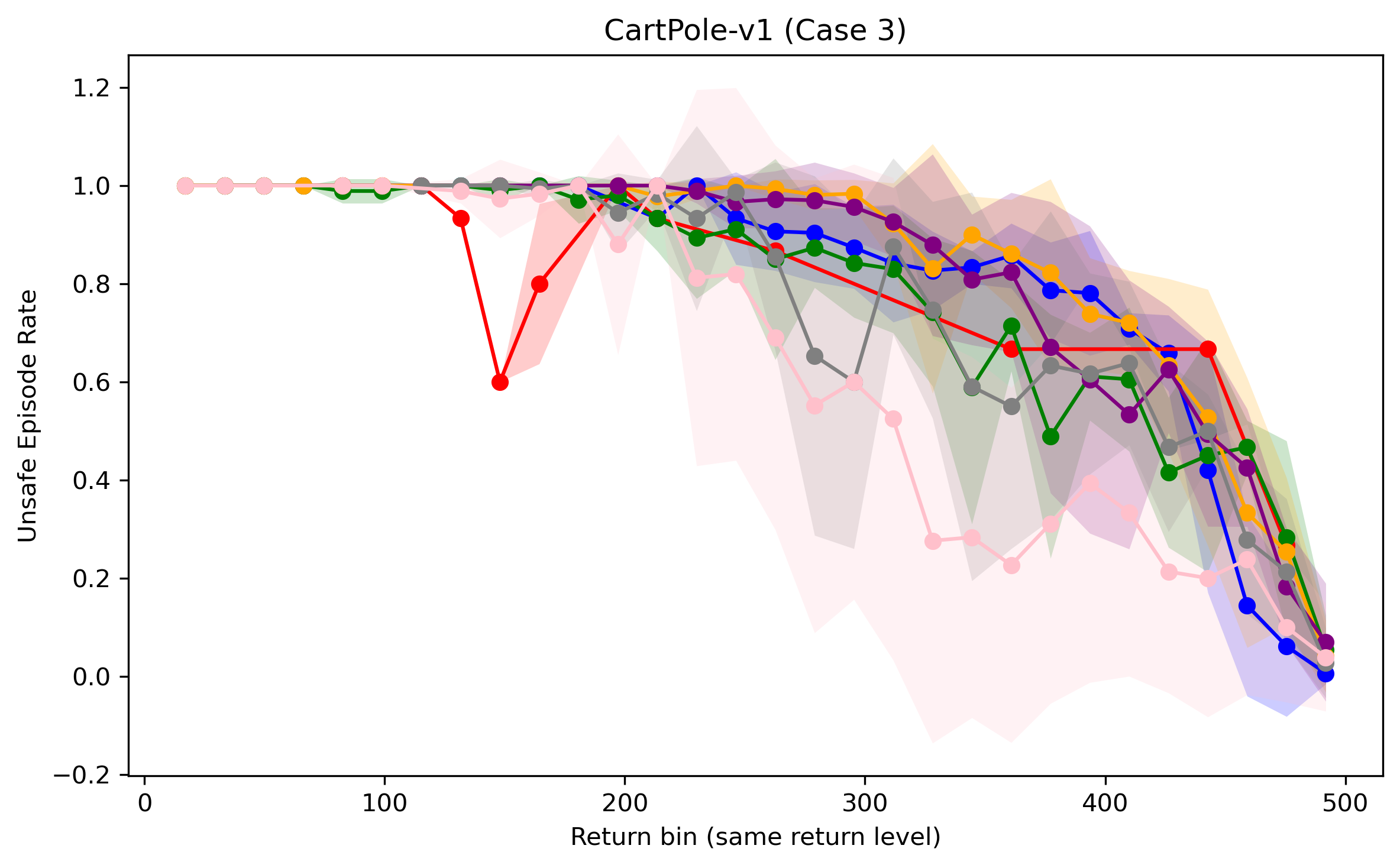}
    \caption{\centering Case 3:\\Continuous, HC safe, mean-noise}
    \end{subfigure}
    
    \begin{subfigure}{0.32\linewidth}
    \includegraphics[width=\linewidth]{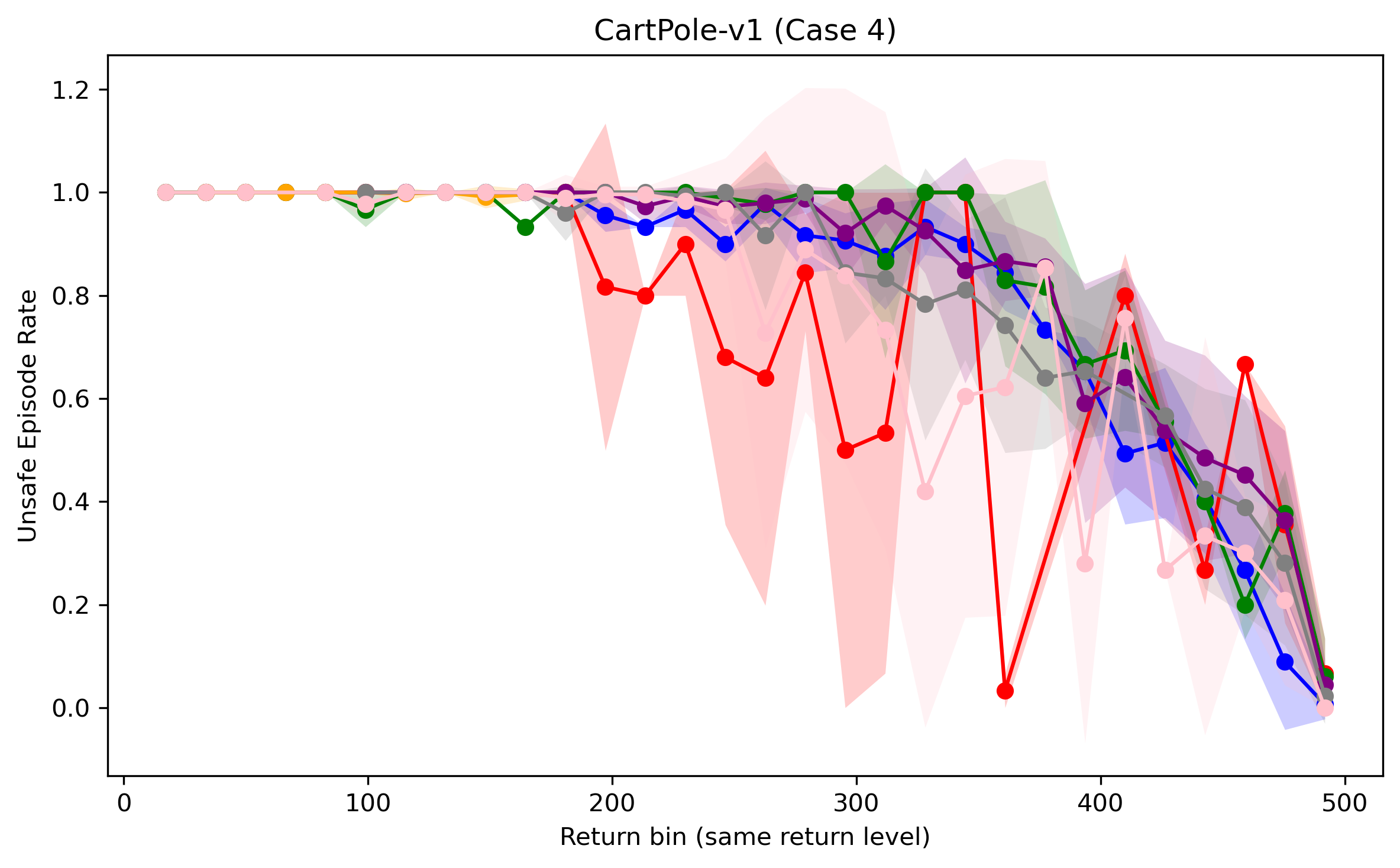}
    \caption{\centering Case 4:\\Continuous, HC safe, distributional}
    \end{subfigure}
    \begin{subfigure}{0.32\linewidth}
    \includegraphics[width=\linewidth]{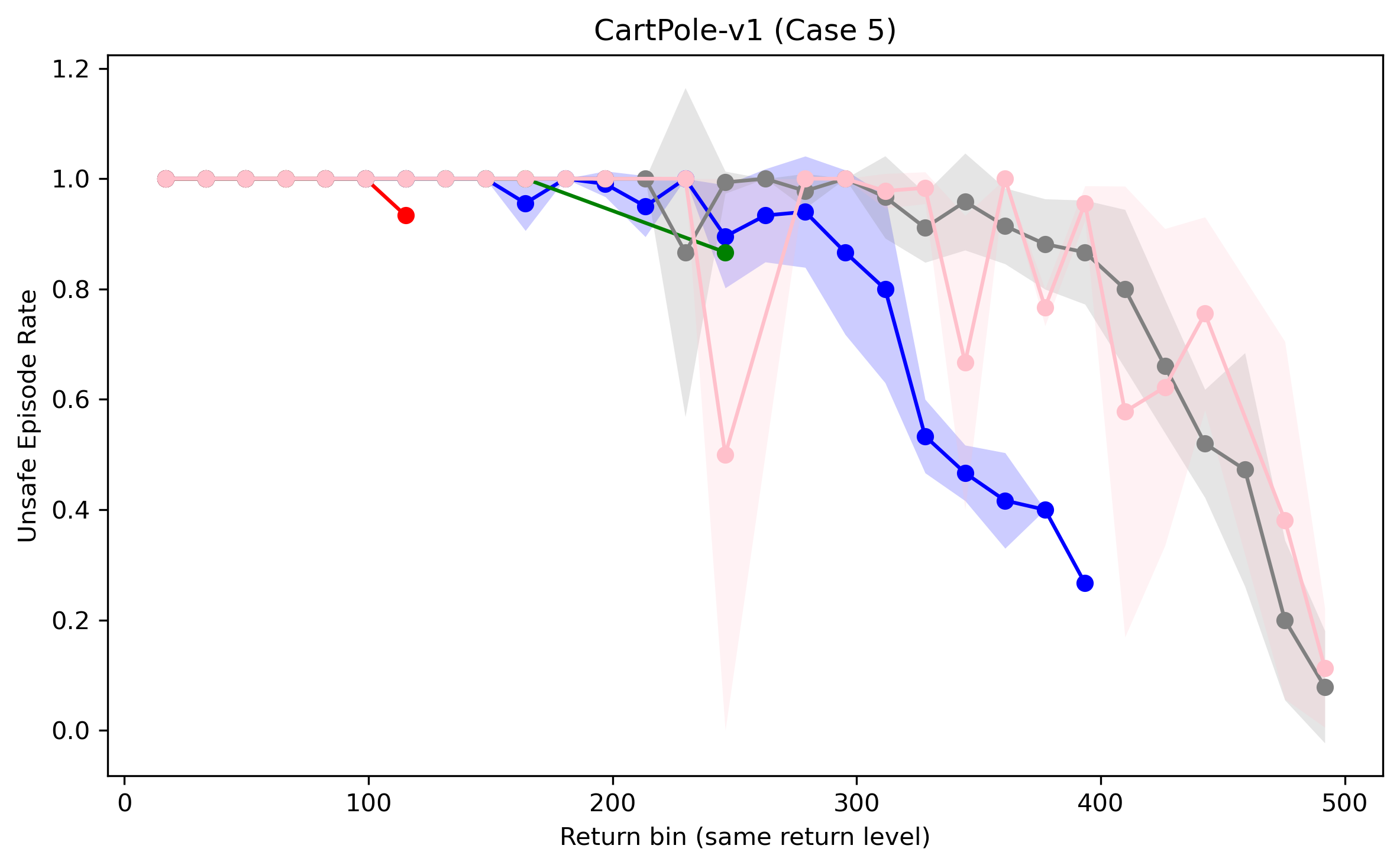}
    \caption{\centering Case 5:\\Continuous, DS safe, mean-noise}
    \end{subfigure}
    \begin{subfigure}{0.32\linewidth}
    \includegraphics[width=\linewidth]{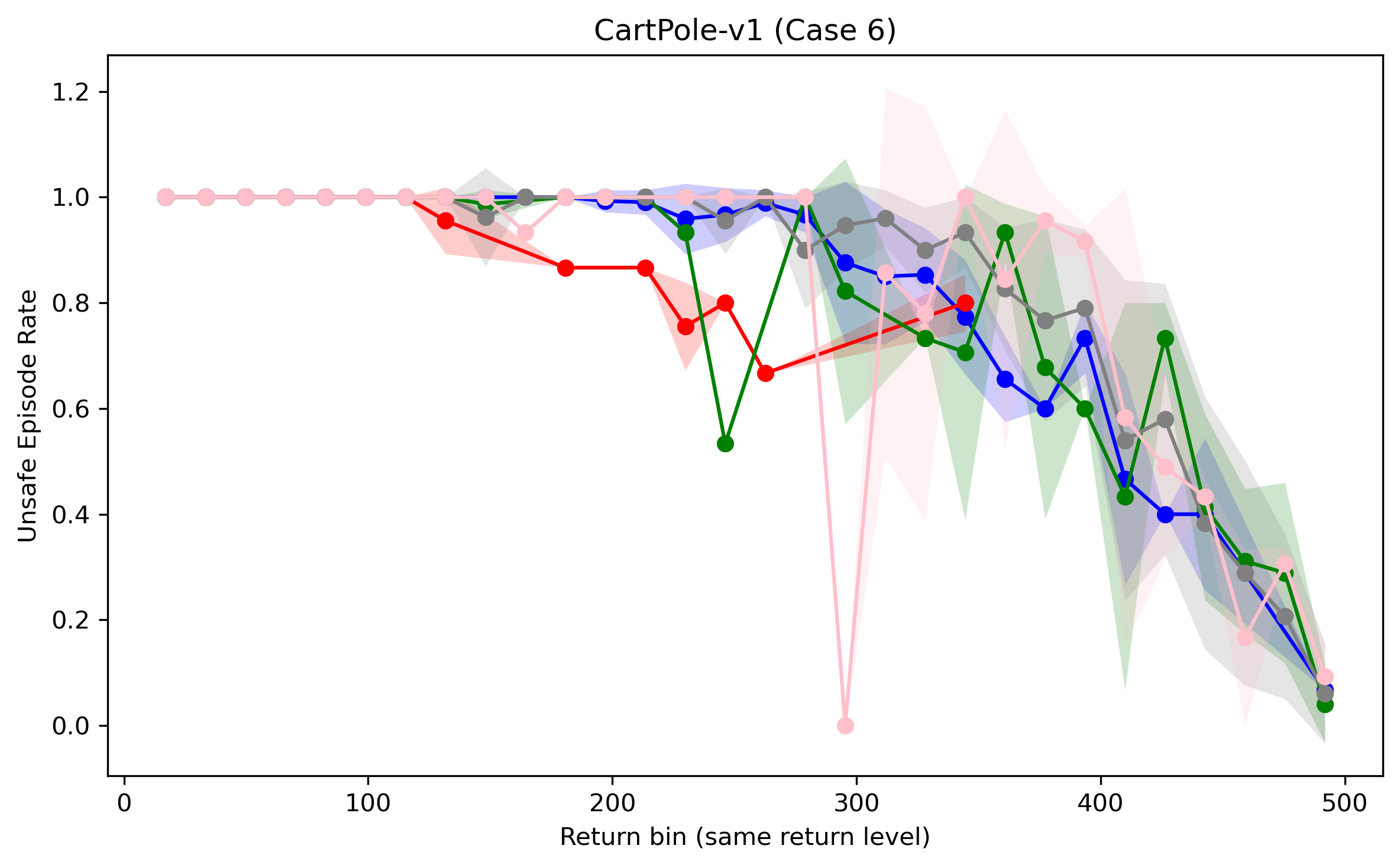}
    \caption{\centering Case 6:\\Continuous, DS safe, distributional}
    \end{subfigure}

    \caption{Same-return unsafe episode rate curve on CartPole-v1 comparing the proposed method with baselines under different settings.}
    \label{fig8}
\end{figure}

Following the same binning procedure as in~\cref{fig6}, we further evaluate additional safety metrics by grouping results into return bins and comparing them across models within each bin.

We first consider the \textit{risk severity}, defined as the average magnitude of violation beyond a safety margin. Specifically, at each time step, we measure the extent to which the pole angle exceeds a predefined margin and average it over the episode:
\[ {\rm risk severity} = \frac{1}{T} \sum_t \max(0,|\theta_t|-\theta_{\text{margin}}), \]
where $\theta_{\text{margin}} = 7^\circ$. This metric captures not only whether the safety boundary is violated, but also the severity and persistence of operation near or beyond unsafe regions. As shown in~\cref{fig7}, the proposed method consistently achieves low risk severity across all return levels with a particularly stable trend. This indicates that even when safety violations occur, their magnitude remains small. Moreover, our method effectively suppresses behavior near unsafe boundaries. This shows that the proposed method reduces pre-unsafe behavior compared to baselines, especially in low-return regimes.

We also evaluate the \textit{unsafe episode rate}, defined as the fraction of episodes in which the pole angle exceeds a predefined unsafe threshold $(\pm 9^\circ)$ at least once. This metric reflects episode-level safety violations. In~\cref{fig8}, the proposed method achieves lower unsafe rates than baselines across most return ranges. Notably, near the highest return bin (around return 500), our method consistently attains the lowest unsafe rate in almost all cases. This suggests that the trained policy is stable and effectively maintains safety without sacrificing performance.

Overall, these results demonstrate that the proposed method achieves consistently lower risk and safer behavior across all return levels, with clear advantages in preventing near-unsafe and high-variance behaviors.

\subsection{Proposed Algorithms}
\begin{algorithm}[!p]
\caption{Training DS safe}
\begin{algorithmic}[1]
\State Initialize policy parameter $\psi \in {\mathbb R}^n$
\State Initialize learning rate $\alpha>0$
\For{$k \in\{0,1,\ldots,T-1\}$}
\State Sample a mini-batch $B=\{(s,a,r,s')\}$ from $D_{\rm safe}$
\State Compute the log-likelihood objective:
\[ L(\psi;B): = \sum_{(s,a) \in B} \ln \pi_\psi(a | s) \]
\State Update parameters via gradient ascent:
\[ \psi \leftarrow \psi + \alpha \nabla_\psi L(\psi;B) \]
\EndFor
\end{algorithmic} \label{alg1}
\end{algorithm}

\begin{algorithm}[!p]
\caption{Safe-support Q-learning (Tabular, Online)}
\begin{algorithmic}[1]
\State Initialize $Q(s,a)$ for all $s \in {\cal S}$ and $a \in {\cal A}$
\State Train the behavior policy $\pi_b$
\For{episode $i \in 1,2,\ldots$}
\State Sample initial state $s_0$
\For{$k = 0,1,\ldots,\tau -1$}
\State Sample action from the behavior policy: $a_k \sim \pi_b(\cdot | s_k)$
\State Observe next state $s_{k+1} \sim P(\cdot|s_k,a_k)$ and reward $r_{k+1}=r(s_k,a_k,s_{k+1})$
\State Compute target value $y_k^{\rm safe}$ as defined in~\eqref{eqn:proposed_y}
\State Update Q-function
\[ Q(s_k,a_k) \leftarrow Q(s_k,a_k) + \alpha \left(y_k^{\rm safe} - Q(s_k,a_k)\right) \]
\EndFor
\EndFor
\end{algorithmic} \label{alg3}
\end{algorithm}

\begin{algorithm}[!p]
\caption{Safe-support Q-learning with DQN (Online)}
\begin{algorithmic}[1]
\State Initialize learning rate $\alpha>0$
\State Initialize replay memory $D$ with capacity $|D|$
\State Initialize online parameters $\theta \in {\mathbb R}^m$
\State Set target parameters $\theta' \leftarrow \theta$
\State Train HC safe $\pi_b$
\For{episode $i \in \{1,2,\ldots\}$}
\State Sample initial state $s_0$
\For{$k \in\{0,1,\ldots, \tau-1\}$}
\State Sample action from the behavior policy: $a_k \sim \pi_b(\cdot | s_k)$
\State Observe next state $s_{k+1} \sim P(\cdot | s_k,a_k)$ and reward $r_{k+1}:=r(s_k,a_k,s_{k+1})$
\State Store transition $(s_k,a_k,r_{k+1},s_{k+1})$ in $D$
\State Sample a mini-batch $B=\{(s,a,r,s')\}$ from $D$
\State Perform gradient descent
\[ \theta \leftarrow \theta - \alpha \nabla_\theta L(\theta;B), \]
\State where $L(\theta;B)$ is defined in~\eqref{eq:loss-deep-safe-Q} with the target value~\eqref{eq:target-deep-safe-Q}
\State Every $C$ steps, update target network:
\[ \theta' \leftarrow \theta \]
\EndFor
\EndFor
\end{algorithmic} \label{alg2}
\end{algorithm}

\begin{algorithm}[!p]
\caption{Safe-support Q-learning with DQN (Offline)}
\begin{algorithmic}[1]
\State Initialize learning rate $\alpha>0$
\State Initialize online parameters $\theta \in {\mathbb R}^m$
\State Set target parameters $\theta' \leftarrow \theta$
\State Train DS safe $\pi_b$ using~\cref{alg1}
\For{$k \in \{0,1,\ldots,T-1\}$}
\State Sample a mini-batch $B=\{(s,a,r,s')\}$ from $D_{\rm safe}$
\State Perform gradient descent
\[ \theta \leftarrow \theta - \alpha \nabla_\theta L(\theta;B), \]
\State where $L(\theta;B)$ is defined in~\eqref{eq:loss-deep-safe-Q} with the target value~\eqref{eq:target-deep-safe-Q}
\State Every $C$ steps, update target network:
\[ \theta' \leftarrow \theta \]
\EndFor
\end{algorithmic} \label{alg4}
\end{algorithm}

\begin{algorithm}[!p]
\caption{Safe-support Q-learning with mean-noise behavior policy (Continuous, Online)}
\begin{algorithmic}[1]
\State Initialize learning rate $\alpha_Q > 0$ and $\alpha_\pi > 0$
\State Initialize replay memory $D$ with capacity $|D|$
\State Initialize online parameters $\theta \in {\mathbb R}^m$
\State Set target parameters $\theta' \leftarrow \theta$
\State Initialize policy parameters $\phi \in \mathbb{R}^{m'}$
\State Train HC safe $\pi_b$
\State
\State \textbf{Stage 1: Training Q-function}
\For{episode $i \in \{1,2,\ldots\}$}
\State Sample initial state $s_0$
\For{$k \in\{0,1,\ldots, \tau-1\}$}
\State Sample action from the behavior policy: $a_k \sim \pi_b(\cdot | s_k)$
\State Observe next state $s_{k+1} \sim P(\cdot | s_k,a_k)$ and reward $r_{k+1}:=r(s_k,a_k,s_{k+1})$
\State Store transition $(s_k,a_k,r_{k+1},s_{k+1})$ in $D$
\State Sample a mini-batch $B=\{(s,a,r,s')\}$ from $D$
\For{each $(s,a,r,s') \in B$}
\State Sample i.i.d. actions $\{a_i'\}_{i=1}^N$ from $\pi_b(\cdot | s')$
\State Compute the corresponding target value $y^{\rm safe}$ as~\eqref{cont target}
\EndFor
\State Perform a gradient descent step on the Q-network:
\[ \theta \leftarrow \theta - \alpha_Q \nabla_\theta L(\theta;B), \]
\State where $L(\theta;B)$ is defined in~\eqref{eq:loss-deep-safe-Q} with the target value $y^{\rm safe}$
\State Every $C$ steps, update target network:
\[ \theta' \leftarrow \theta \]
\EndFor
\EndFor
\State
\State \textbf{Stage 2: Training Policy}
\State Freeze the Q-network parameters $\theta$
\For{$t = 1,2,\ldots,T$}
\State Sample states $\{s_i\}_{i=1}^p$ from $\mathcal{N}(0,\sigma_1^2 I)$
\State Sample i.i.d. noises $\{w_j\}_{j=1}^q$ from $\mathcal{N}(0,\sigma_2^2 I)$
\State Perform a gradient descent step on the policy network:
\[ \phi \leftarrow \phi - \alpha_\pi \nabla_\phi \tilde L_M(\phi), \]
\State where $\tilde L_M(\phi)$ is defined in~\eqref{cont policy obj_case1_fin}
\EndFor
\end{algorithmic} \label{alg5}
\end{algorithm}

\begin{algorithm}[!p]
\caption{Safe-support Q-learning with distributional behavior policy (Continuous, Online)}
\begin{algorithmic}[1]
\State Initialize learning rate $\alpha_Q > 0$ and $\alpha_\pi > 0$
\State Initialize replay memory $D$ with capacity $|D|$
\State Initialize online parameters $\theta \in {\mathbb R}^m$
\State Set target parameters $\theta' \leftarrow \theta$
\State Initialize policy parameters $\phi \in \mathbb{R}^{m'}$
\State Train HC safe $\pi_b$
\State
\State \textbf{Stage 1: Training Q-function}
\For{episode $i \in \{1,2,\ldots\}$}
\State Sample initial state $s_0$
\For{$k \in\{0,1,\ldots, \tau-1\}$}
\State Sample action from the behavior policy: $a_k \sim \pi_b(\cdot | s_k)$
\State Observe next state $s_{k+1} \sim P(\cdot | s_k,a_k)$ and reward $r_{k+1}:=r(s_k,a_k,s_{k+1})$
\State Store transition $(s_k,a_k,r_{k+1},s_{k+1})$ in $D$
\State Sample a mini-batch $B=\{(s,a,r,s')\}$ from $D$
\For{each $(s,a,r,s') \in B$}
\State Sample i.i.d. actions $\{a_i'\}_{i=1}^N$ from $\pi_b(\cdot | s')$
\State Compute the corresponding target value $y^{\rm safe}$ as~\eqref{cont target}
\EndFor
\State Perform a gradient descent step on the Q-network:
\[ \theta \leftarrow \theta - \alpha_Q \nabla_\theta L(\theta;B), \]
\State where $L(\theta;B)$ is defined in~\eqref{eq:loss-deep-safe-Q} with the target value $y^{\rm safe}$
\State Every $C$ steps, update target network:
\[ \theta' \leftarrow \theta \]
\EndFor
\EndFor
\State
\State \textbf{Stage 2: Training Policy}
\State Freeze the Q-network parameters $\theta$
\For{$t = 1,2,\ldots,T$}
\State Sample states $\{s_i\}_{i=1}^p$ from $\mathcal{N}(0,\sigma_1^2 I)$
\State Sample i.i.d. noises $\{w_j\}_{j=1}^q$ from $\mathcal{N}(0,\sigma_2^2 I)$
\State Perform a gradient descent step on the policy network:
\[ \phi \leftarrow \phi - \alpha_\pi \nabla_\phi \tilde L_D(\phi), \]
\State where $\tilde L_D(\phi)$ is defined in~\eqref{cont policy obj_case2_fin}
\EndFor
\end{algorithmic} \label{alg6}
\end{algorithm}

\end{document}